
\documentclass[prodmode,acmtecs]{acmsmall} 

\usepackage[ruled]{algorithm2e}

\SetAlFnt{\small}
\SetAlCapFnt{\small}
\SetAlCapNameFnt{\small}
\SetAlCapHSkip{0pt}
\IncMargin{-\parindent}

\usepackage{multirow}
\usepackage{rotating}
\usepackage{enumerate}
\usepackage{amsfonts}
\usepackage{amsmath}
\usepackage{algorithmic}
\usepackage{amssymb}
\usepackage{graphicx}
\usepackage{subfigure}

\usepackage{comment}

\usepackage{color}
\usepackage{float}
\usepackage{comment}
\usepackage{flushend}
\usepackage{hyperref}

\usepackage{xr-hyper}
\usepackage{zref-base}
\usepackage{atveryend}

\newcommand{\jn}[1]{\textcolor{red}{JN: #1}}

\usepackage{subfigure}
\usepackage{float}
\usepackage{flushend}
\usepackage{amsmath}

\usepackage{multirow}
\usepackage{rotating}
\usepackage{enumerate}
\usepackage{amsfonts}
\usepackage{amssymb}
\usepackage{graphicx}
\usepackage{subfigure}

\usepackage{subfigure}
\usepackage{caption,setspace}

\usepackage{pdfpages}

\captionsetup{font={stretch=1.0}}

\newtheorem{claim}{Claim}



\begin{document}





\author{TIMOTHY LA FOND
\affil{Purdue University}
JENNIFER NEVILLE
\affil{Purdue University}
BRIAN GALLAGHER
\affil{Lawrence Livermore National Laboratory}
}

\title{Size-Consistent Statistics for Anomaly Detection in Dynamic Networks}

\maketitle


\begin{abstract}

An important task in network analysis is the detection of anomalous events in a network time series.  These events could merely be times of interest in the network timeline or they could be examples of malicious activity or network malfunction.  Hypothesis testing using network statistics to summarize the behavior of the network provides a robust framework for the anomaly detection decision process.  Unfortunately, choosing network statistics that are dependent on confounding factors like the total number of nodes or edges can lead to incorrect conclusions (e.g., false positives and false negatives).  In this dissertation we describe the challenges that face anomaly detection in dynamic network streams regarding confounding factors.  We also provide two solutions to avoiding error due to confounding factors: the first is a randomization testing method that controls for confounding factors, and the second is a set of size-consistent network statistics which avoid confounding due to the most common factors, edge count and node count.  

\end{abstract}



{\def\arraystretch{1.5}\tabcolsep=2pt
\begin{table}
\begin{footnotesize}
\caption{{Glossary of terms}}
\label{glossary}
\begin{tabular}{| l | l |}
\hline
$G_t$ & Observed graph at time $t$ \\
$V_t$ & Set of vertices in graph $G_t$, size $N$ \\
$W_t$ & Weighted adjacency matrix of $G_t$ \\
$| W_t |$ & Total weight of $W_t$ \\
$P^*_t$ & True distribution of edge weights in the underlying model, size $|V^*| x N^*$ \\
$A^*_t$ & Adjacency matrix of $P^*_t$; i.e. $a_{ij,t} = I[ p^*_{ij,t} > 0 ]$ \\
$V^*$ & True vertex set of underlying model, $V_t \subset V^*$ \\
$P_t$ & Renormalized distribution of edge weight on vertex set $V_t$, used to sample $G_t$ \\
$A_t$ & Adjacency matrix of $P_t$; i.e. $a_{ij,t} = I[ p_{ij,t} > 0 ]$ \\
$| A_t |$ & Number of nonzero cells in adjacency matrix \\
$\widehat{P}_t$ & Approximate distribution of edge weights estimated from $G_t$: $\widehat{p}_{ij,t} = \frac{w_{ij,t}}{| W_t |_1}$\\
$\widehat{A}_t$ & Adjacency matrix of $G_t$; i.e. $a_{ij,t} = I[ w_{ij,t} > 0 ] $\\
$\overline{p^*}_t$ & Mean value of any nonzero cell in $P^*_t$ \\
$\overline{p}_t$ & Mean value of any nonzero cell in $P_t$ \\
$\overline{\widehat{p}}_t$ & Mean value of any nonzero cell in $\widehat{P}_t$ \\
$\overline{p^*}_t \bigg| V_t$ & Mean value of the $P^*_t$ cells that belong to vertex subset $V_t$ \\
$w_{row_i,2}$ & Total weight in row $i$ of $W_t$ \\
$\overline{p^*}_{row,t}$ & Expected mass in any row of $P^*_t$ \\
$\overline{p}_{row,t}$ & Expected mass in any row of $P_t$ \\
$\overline{\widehat{p}}_{row,t}$ & Expected mass in any row of $\widehat{P}_t$ \\
$\overline{p^*}_{row,t} | V_t$ & Expected mass of rows in $P^*_t$, excluding any rows or cells that do not belong to $V_t$\\
\hline 
\end{tabular}
\end{footnotesize}

\end{table}

}

\section{Introduction}



In this paper, we will focus on the task of anomaly detection in a dynamic network where the structure of the network is changing over time.  For example, each time step could represent one day's worth of activity on an e-mail network or communications of a computer network.  The goal is then to identify any time steps where the pattern of those communications seems abnormal compared to those of other time steps.  


We will be approaching this problem as a hypothesis testing task - the null hypothesis is that a time step under scrutiny represents normal behavior of the network while the alternative hypothesis is that it is anomalous.  The null distribution will be constructed from graph examples observed in the past, and the test statistics will be various network statistics.  Whenever the null hypothesis is rejected for a time step, we will flag the tested time step as an anomaly.

A typical real-world network experiences many changes in the course of its natural behavior, changes which are not examples of anomalous events.  The most common of these is variation in the volume of edges.  In the case of an e-mail network where the edges represent messages, the network could be growing in size over time or there could be random variance in the number of messages sent each day.  The statistics used to measure the network properties are usually intended to capture some other effect of the network than simply the volume of edges: for example, the common clustering coefficient is a measure of transitivity which is the propensity for triangular interactions in the network. 
However, statistics such as the clustering coefficient are \textit{Statistically Inconsistent} as the size of the network changes - more or fewer edges/nodes change the output of the statistic even when the transitivity property is constant making graph size a confounding factor.  Statistical consistency and inconsistency are described in more detail in Section 6.3.  Even on an Erd{\"o}s-R{\'e}nyi network, which does not explicitly capture transitive relationships through a network property, the clustering coefficient will be greater as the number of edges in the network increases as more triangles will be closed due to random chance.  When statistics vary with the number of edges in the network, it is not valid to compare different network time steps using those statistics unless the number of edges is constant in each time step.  The flowchart in Figure \ref{control-statistic} outlines the detection approach: unless the statistic is carefully defined to be robust to confounding factors, it is impossible to determine which factor that generated the graph is responsible for detected anomalies.

Table \ref{glossary} shows a glossary of terms that will be used throughout this Chapter.  Some, like the terms $G_t$, $V_t$, and $W_t$, are from the dynamic graph definitions used previously.  The other terms will be explained as they are used throughout the Chapter. 


Figure \ref{fig:compare} shows the effect of statistical (in)consistency.  During the experiment pairs of graphs were generated using a Chung-Lu generative model (described in Section 6.6) with a certain number of total edges.  Subfigure (a) shows the values of a \textit{Size Consistent Statistic} called \textit{Probability Mass Shift} (described in Section 6.4) calculated on pairs of graphs, while Subfigure (b) shows the same for the Netsimile statistic (described in a previous Chapter).  Each black point shows the average value of 100 generated graph pairs while the red points are the minimum and maximum of these pairs.  As the edge weight increases (x-axis) the statically consistent Mass Shift (\ref{fig:compare}a) maintains a consistent mean, whereas the statistically inconsistent Netsimile (\ref{fig:compare}b) varies wildly, even though all graphs are generated from the same underlying model (Chung-Lu \cite{chunglu} with a power law degree distribution). 

\begin{figure}[h]
\begin{center}
\subfigure[]{\includegraphics[trim=0 15 0 55, clip=true, width=.39\columnwidth,natwidth=610,natheight=642]{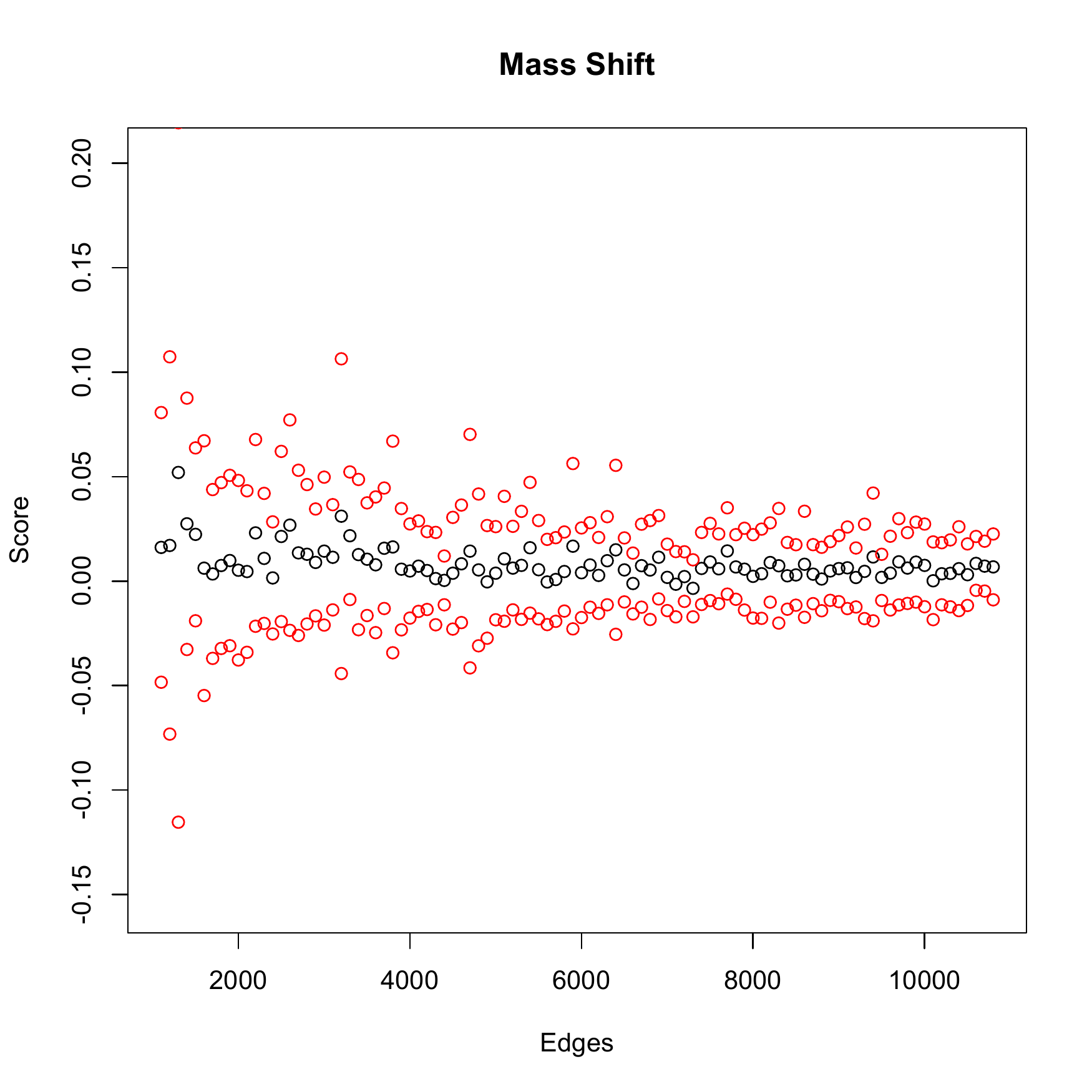}}
\subfigure[]{\includegraphics[trim=0 15 0 55, clip=true, width=.39\columnwidth,natwidth=610,natheight=642]{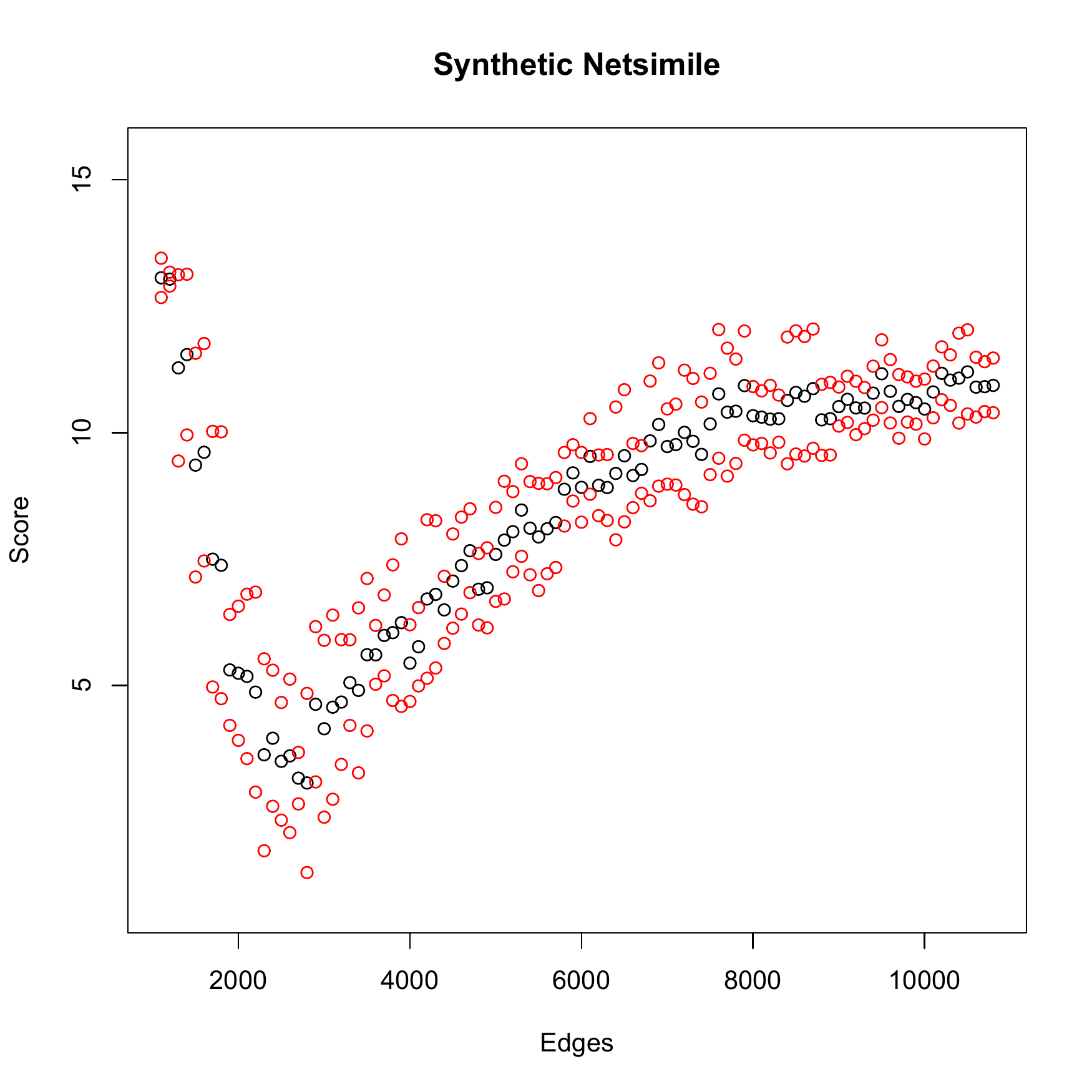}}
\end{center}

\caption{Statistic values network data generated from same model, but with increasing size.  Behavior of (a) Consistent Statistic; (b) Inconsistent Statistic. Black pts: avg of 100 trials, red pts: [min, max].}
\label{fig:compare}
\vspace{-6mm}
\end{figure}

In this work, we will analytically characterize statistics by their sensitivity to network size, and offer principled alternatives that are \textbf{ consistent estimators} of network behavior, which empirically give more accurate results when finding anomalies in dynamic networks with varying sizes.  In terms of confounding effects this approach eliminates confounding by ensuring that the test statistics used do not vary when the confounding network properties change, bringing the statistics closer to the ideal one-to-one relationship with network properties.  


\begin{figure}[h!]
\begin{centering}
\includegraphics[width=.75\columnwidth,natwidth=610,natheight=642]{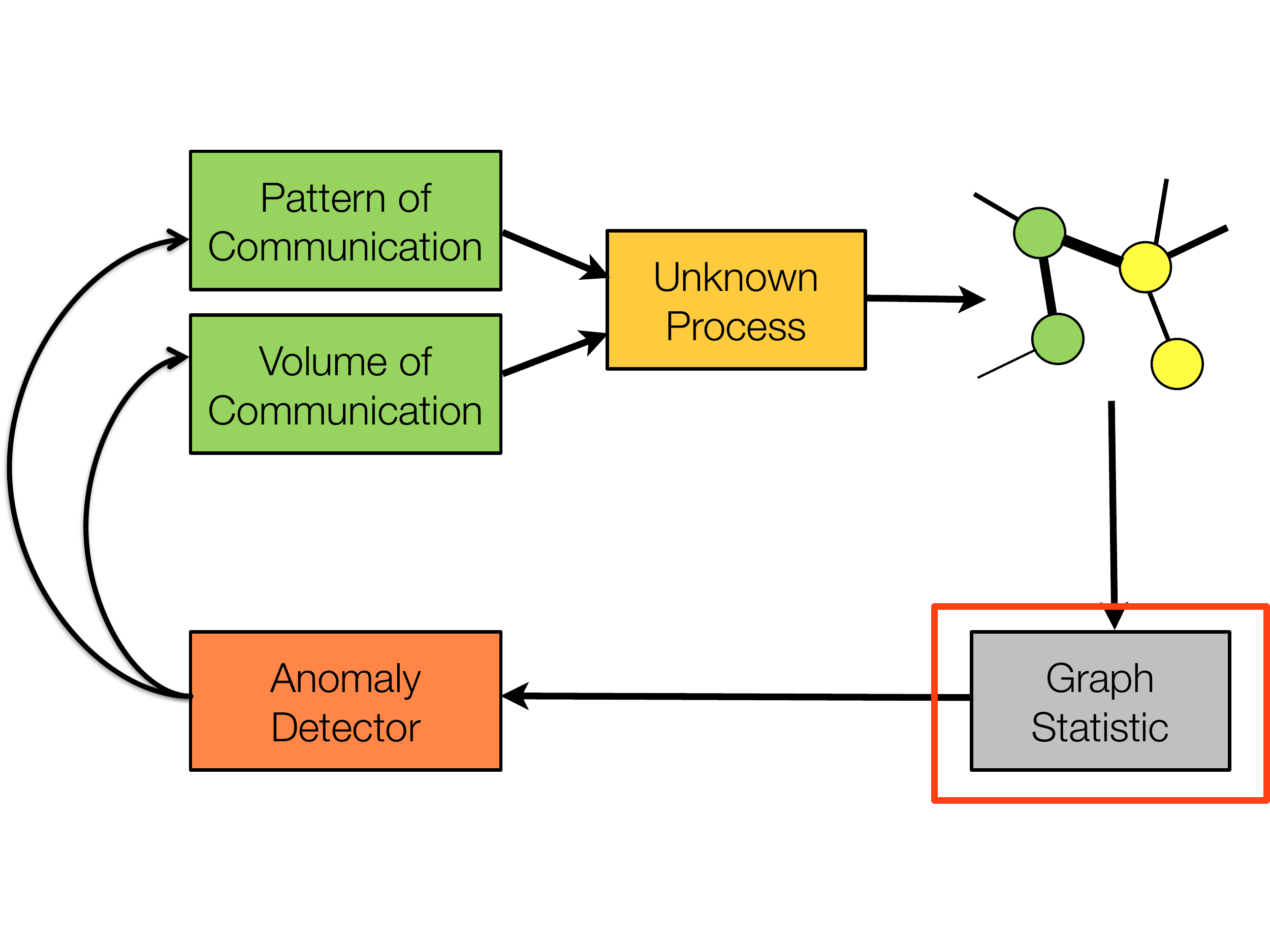}
\vspace{-5mm}
\caption{Controlling for confounding effects through careful definition of the test statistics.}
\label{control-statistic}
\end{centering}
\end{figure}




The major contributions of this work are:

\begin{itemize}
\item We define \textbf{Size Inconsistent} and \textbf{Size Consistent} properties of network statistics and show that Size Consistent statistics have fewer false positives and false negatives that Inconsistent statistics.
\item We prove that several commonly used network statistics are Size Inconsistent and fail to capture the network behavior with varying network densities.
\item We introduce provably Size Consistent statistics which measure changes to the network structure regardless of the total number of observed edges.  
\item We demonstrate that our proposed statistics converge quickly and have superior ROC performance compared to conventional statistics.
\end{itemize}

\section{Problem Definition and Data Model}




Let $G =  \{ V, W \}$ be a weighted graph that represents a network, where $V$ is the node set and $W$ is the weighted adjacency matrix representing messages or some other interaction, with $w_{ij}$  the number of messages between nodes $i$ and $j$. Let $|V|$ and $|W|$ refer to the number of nodes, and total weight of the edges, in $G$ respectively. A dynamic network is simply a set of graphs $\{ G_1, G_2, ... G_T  \}$ where each graph represents network activity within a consistent-width time step (e.g. one step per day).  



\textbf{Problem definition}: Given a stream of graph examples $\{G_{1}, G_{2}, ..., G_{{t-1}}\}$ drawn from a {\em normal} model $M^n$, and a graph $G_{t}$ drawn from an unknown model, determine if $G_{t}$ was drawn from $M^n$ or some alternative model $M^a$. \\



Given an observed graph, we wish to decide if this graph exhibits the same behavior (network properties) as past graph examples or if is likely the product of some different, anomalous properties.  We will be solving this problem with hypothesis tests utilizing network statistics as the test statistics.  If $S_k(G)$ is some network statistic designed to measure a network property $k$, then the set of statistics calculated on the normal examples $\{S_k(G_{1}), S_k(G_{2}), ..., S_k(G_{{t-1})}\}$ forms the empirical null distribution, and the value $S_k(G_{{t}})$ is the test point.

For this work we will use a two-tailed test with $p$-values of $\alpha = 0.05$.  Anomalous test cases where the null hypothesis is rejected correspond to {\em true positives}; normal cases where the null hypothesis is rejected correspond to {\em false positives}.  Likewise anomalous cases where the null is not rejected correspond to {\em false negatives} and normal cases where the null is not rejected correspond to {\em true negatives}. 

The anomaly detection procedure is summarized in Figure \ref{fig:anomalydetectiontask} from model down to null distribution and test point.



If all the graph examples have the same number of edges and nodes then graph size cannot be a confounding factor regardless of the choice of test statistic - those properties are naturally controlled in the data.  However, if $M^n$ and $M^a$ produce graphs with a variable number of edges and nodes then any test statistic needs to be robust to changes in the graph size.  Ideally, if $G_x, G_y \sim M$ but $| V_x | \neq | V_y |, | W_x | \neq | W_y |$ we would still want $S_k(G_x) \approx S_k(G_y)$ to be true. 

To accommodate the observations of graphs of varying size, let us assume the models that generated the observed graphs are hidden but take the form of a multinomial sampling procedure.  Let $P^*$ be a $| V^* | \times | V^* |$ matrix where the rows and columns represent a node set $V^*$ and the sum of all cells is equal to 1.  Here $V^*$ represents a large set of {\em possible} nodes, i.e., larger than the set we may see in any one graph $G$.  The entry $p^*_{ij,t}$ specifies the probability that a randomly sampled message at time $t$ is between $i$ and $j$.  Let $| V |$ and $| W |$ be drawn from distributions $M_V$ and $M_W$.  

\vspace{10mm}

\begin{figure}[h!]
\begin{centering}
\includegraphics[width=.75\columnwidth,natwidth=610,natheight=642]{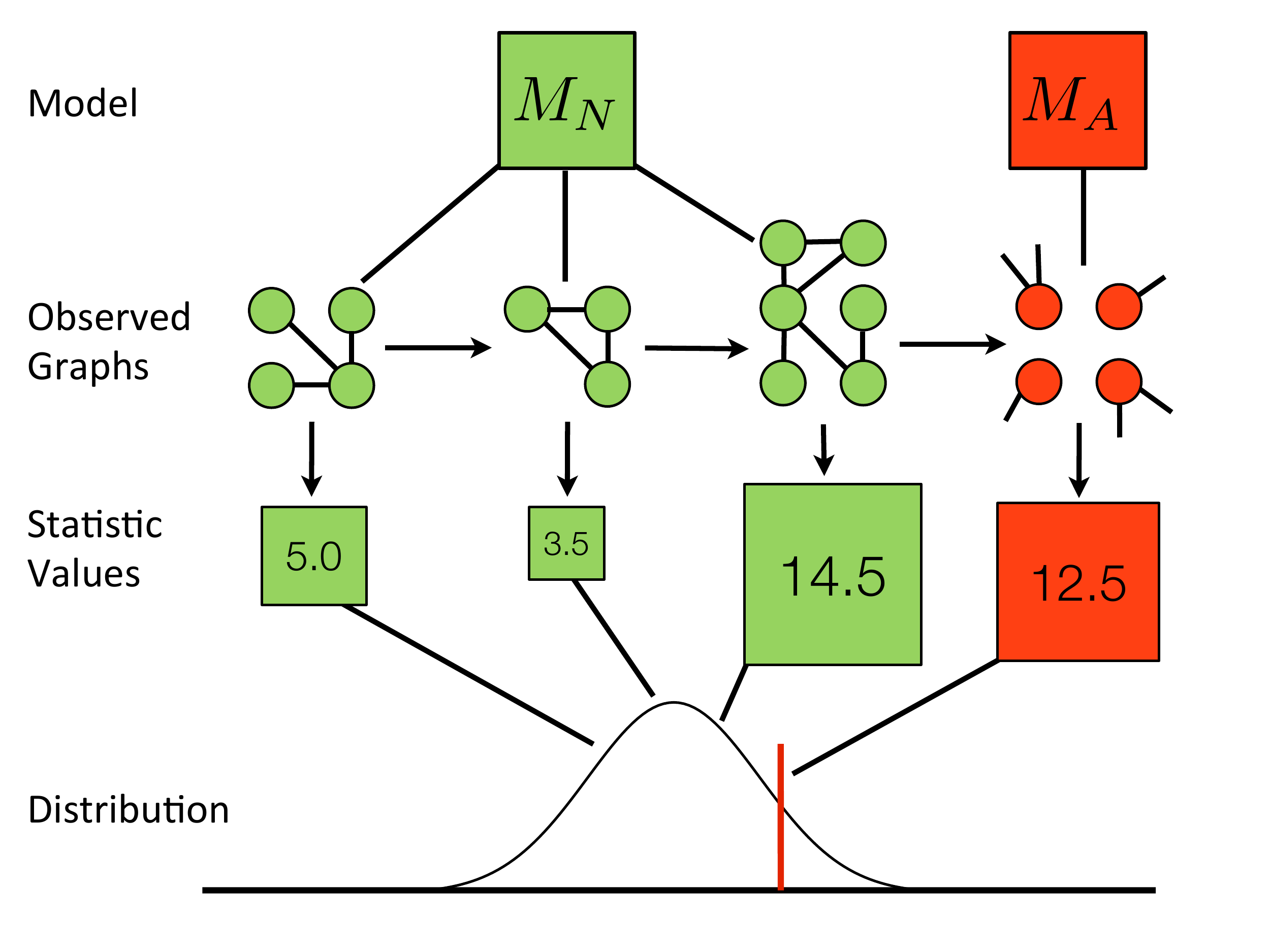}
\vspace{-1mm}
\caption{Dynamic network anomaly detection task.  Given past instances of graphs created by the typical model of behavior, identify any new graph instance created by an alternative anomalous model.}
\label{fig:anomalydetectiontask}
\end{centering}
\end{figure}

\vspace{2mm}

\clearpage

The full generative process for all graphs is then: 

\begin{itemize}
\item  Draw $|V| \sim M_V$. Select $V$ from $V^*$ uniformly at random. 
\item  Construct $P$ by selecting the rows/columns from $P^*$ that correspond to $V$ and normalize the probabilities to sum to 1 (i.e., $p_{ij} = \frac{1}{Z} p^*_{ij}$, where $Z = \sum_{ij \in V} p^*_{ij}$). 
\item Draw $| W | \sim M_W$. Sample $| W |$ messages using probabilities $P$. 
\item  Construct the graph $G=(V,W)$ from the sampled messages.
\end{itemize}

\noindent $G$ is the output of a multinomial sampling procedure on $P$, with each independent message sample increasing the weight of one cell in $W$.  $P$ itself is a set of probabilities obtained by sampling $V$ from $V^*$.  This graph generation process is summarized in figure \ref{sampling-complete}.

\begin{figure}[]
\begin{centering}
\includegraphics[width=.75\columnwidth,natwidth=610,natheight=642]{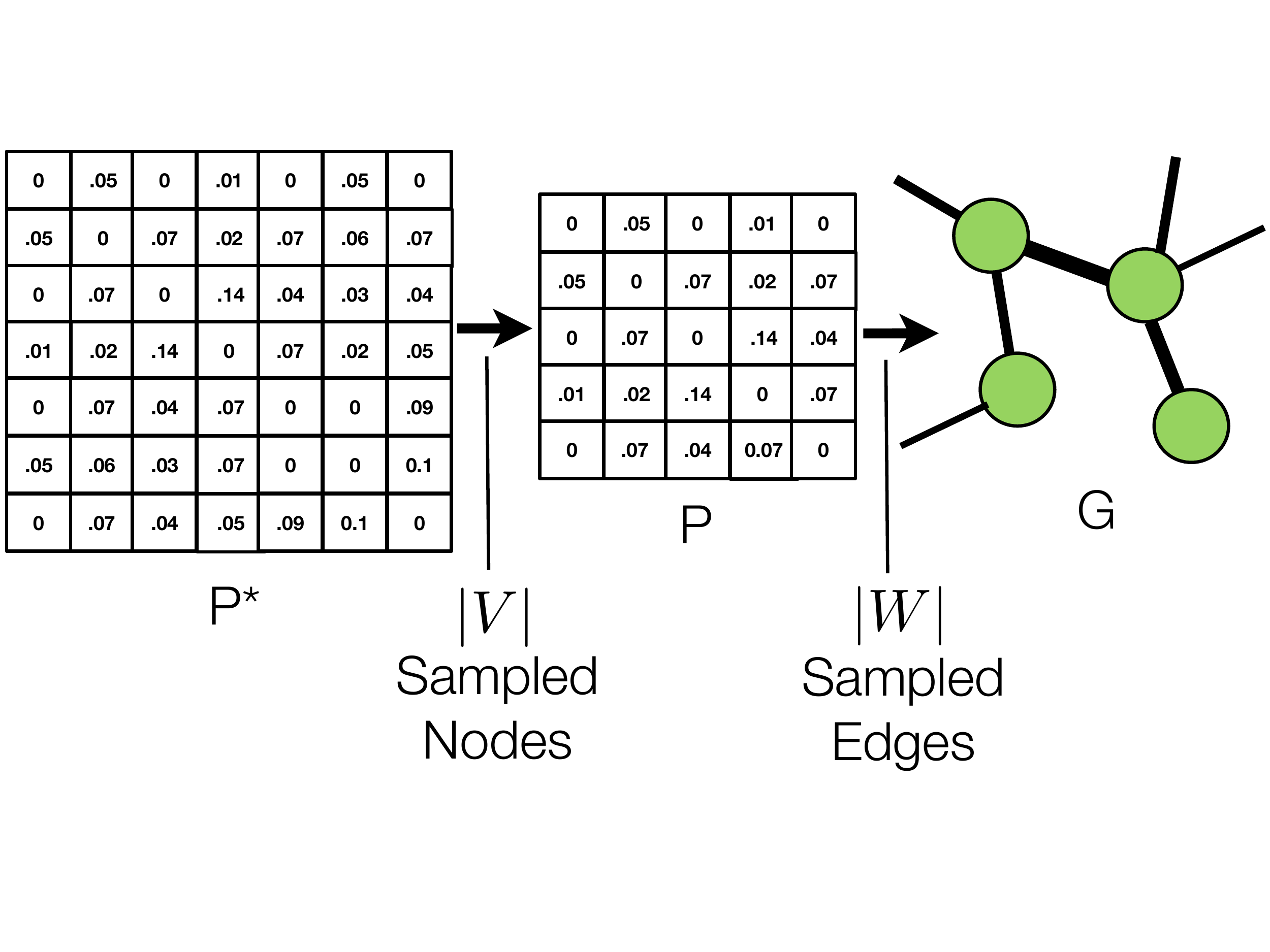}
\vspace{-15mm}
\caption{Graph generation process.  The matrix $P^*$ represents all possible nodes and their interaction probabilities.  By sampling $|V|$ nodes and $|W|$ edges the observed graph $G$ is obtained. }
\label{sampling-complete}
\end{centering}
\end{figure}

Given this generative process, the difference between normal and anomalous graphs is characterized by differences in their underlying models.  Let the normal model be represented by $P^{*n}, M^{n}_V$ and $M^{n}_W$ and let the anomalous model be represented by $P^{*a}, M^{a}_V$ and $M^{a}_W$.  Finding instances where $M_V$ or $M_W$ are anomalous is trivial since we can use the count of nodes or messages as the test statistic.  Finding instances of graphs drawn from $P^{*a}$ is more difficult as our choice of network statistics affects whether we are sensitive to changes in $|W|$ or $|V|$.


If we redefine our network statistics to be functions over $P^*$ instead of $G$ we avoid the problem of graph size as a confounding factor as $P^*$ is independent of $M_V$ or $M_W$.  However, since $P^*$ is unobservable, there is no way to calculate $S_k(P^*)$ directly. Instead we can only calculate $\widehat{S}_k(G)$ from the observed graph $G$.  $\widehat{S}_k(G)$ is an estimate of $S_k(P)$ using the sampled messages $W$ to estimate the underlying probabilities, and $S_k(P)$ itself is an estimate of the true $S_k(P^*)$ on a subset $V$ of the total nodes.  So just as the sampling procedure follows $P^* \rightarrow P \rightarrow G$, the estimation procedure follows the inverse steps $\widehat{S}_k(G) \rightarrow S_k(P) \rightarrow S_k(P^*)$.


Delta statistics like Graph Edit Distance can also be used for anomaly detection.  In this case the empirical statistic will be $\widehat{S}_k(G_1,G_2)$, where $G_1$ and $G_2$ are generated using the graph generation procedure described above, and the true value of the statistic is $S_k(P^*_1,P^*_2)$.  In order to be consistent Delta statistics should not change when either graph changes in size.  

Ideally, $\widehat{S}_k(G) = S_k(P) = S_k(P^*)$ and we would have the same output regardless of $W$ and $V$, being sensitive only to changes in the model.  However this is typically not attainable in practice as it is difficult to estimate the true statistic value from graphs that are extremely small - few edges and nodes provides less evidence of the underlying properties.  In addition, an unbiased statistic with extremely high variance is also a poor test statistic.  In many scenarios the best statistics are ones which converge to the value of $S_k(P^*)$ as $|V|, |W|$ increase, a property that we will refer to as {\em Size Consistency}.  In the next section we will formally define the properties of Size Consistent and Size Inconsistent statistics and show how they affect the accuracy of hypothesis tests.

\section{Properties of the Test Statistic}



\subsection{Size Consistency} 




As described previously, a statistic $S_k(P^*)$ depends on the properties of the procedure that generated the graph instance and is a measure of the graph properties independent of the exact number of edges and nodes in the graph.  Although the empirical statistic $\widehat{S}_k(G)$ may not be independent of the edge and node count, if it converges to $S_k(P^*)$ as $|V|$ and $|W|$ increase it is a reasonable approximation as long as $|V|$ and $|W|$ are large enough.  The bias of the empirical statistic due to graph size is $abs( S_k(P^*) - \widehat{S}_k(G) )$; if this bias converges to 0 as  $|V|$ and $|W|$ increase then $\widehat{S}_k(G)$ is \textit{Size Consistent}.


\begin{definition}
A statistic $\widehat{S}_k$ is {\em Size Consistent} w.r.t. $S_k$ if:

\vspace{-6mm}
\begin{small}
\begin{align*} 
&\lim_{|W| \rightarrow \infty} \widehat{S}_k(G) - S_k(P) = 0 \\
\mbox{ AND } \;\;\;  &\lim_{|V| \rightarrow |V^*|} S_k(P) - S_k(P^*) = 0 
\end{align*}
\end{small}
\vspace{-2mm}
\end{definition}






Delta statistics have the same requirements for consistency as standard statistic except that the edge and node count of both graphs must be increasing. 

\begin{definition}
A delta statistic $\widehat{S}_k$ is {\em Size Consistent} w.r.t. $S_k$ if:

\vspace{-6mm}
\begin{small}
\begin{align*} 
&\lim_{|V_1|, |V_2| \rightarrow |V^*|} \widehat{S}_k(G_1,G_2) = S_k(P_1,P_2) \\
\mbox{ AND } \;\;\;  &\lim_{|W_1|, |W_2| \rightarrow \infty} S_k(P_1,P_2) = S_k(P^*_1,P^*_2)
\end{align*}
\end{small}
\vspace{-2mm}
\end{definition}

\begin{theorem} \textbf{False Positive Rate for Size Consistent Statistics} \\
Let $ \{G_1... G_{{x}} \}$ be a finite set of ``normal'' graphs drawn from $P^*$, $M^{n}_W$ and $M^{n}_V$ and let $G_{{test}}$ be a test graph drawn from $P^*$, $M^{a}_W$ and $M^{a}_V$. 
Let $| W_{min} |$ be the minimum edge count in both $\{ G_{{x}} \}$ and $G_{{test}}$ and $|V_{min}|$ be the minimum node count.  For a hypothesis test using a Size Consistent test statistic $S_k$ and a p-value of $\alpha$, as $| W_{min} |\rightarrow \infty$ and $|V_{min}| \rightarrow |V^*|$ the probability of identifying $G_{{test}}$ as a false positive approaches $\alpha$. 
\label{convergetrueneg}
\end{theorem}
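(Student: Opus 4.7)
The plan is to show that, in the asymptotic regime $|W_{min}| \to \infty$ and $|V_{min}| \to |V^*|$, the distribution of $\widehat{S}_k(G_{test})$ becomes indistinguishable from that of each null-sample statistic $\widehat{S}_k(G_i)$. Once the test value is (asymptotically) a draw from the same law that produced the empirical null, a two-tailed hypothesis test with p-value $\alpha$ rejects with probability exactly $\alpha$ by construction.

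First, I would apply Size Consistency to each of the $x+1$ statistics. Writing
\begin{align*}
\widehat{S}_k(G) - S_k(P^*) = \bigl[\widehat{S}_k(G) - S_k(P)\bigr] + \bigl[S_k(P) - S_k(P^*)\bigr],
\end{align*}
the first bracket vanishes as $|W| \to \infty$ and the second as $|V| \to |V^*|$ by the two clauses of the definition. Because every $G_i$ and $G_{test}$ has $|W| \geq |W_{min}|$ and $|V| \geq |V_{min}|$, each $\widehat{S}_k(G_i)$ and $\widehat{S}_k(G_{test})$ converges to the same deterministic quantity $S_k(P^*)$; the key observation is that $G_{test}$ shares $P^*$ with the normal graphs (only $M_W^a, M_V^a$ differ from $M_W^n, M_V^n$), so the common limit is genuinely the same. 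Next, I would turn this into asymptotic exchangeability: since the test statistic and each null-sample statistic share a limiting distribution, the rank of $\widehat{S}_k(G_{test})$ within $\{\widehat{S}_k(G_1), \ldots, \widehat{S}_k(G_x), \widehat{S}_k(G_{test})\}$ converges to the discrete uniform on $\{1, \ldots, x+1\}$. A rejection region carved out of the empirical $\alpha/2$ and $1-\alpha/2$ quantiles of the null then excludes the test point with limiting probability exactly $\alpha$.

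The hard part will be that Size Consistency, as literally stated, only forces both distributions to collapse to the point mass $\delta_{S_k(P^*)}$, so the naive limit of the rejection probability is degenerate (both the critical thresholds and the test value pile up at the same atom). To make the conclusion rigorous one must upgrade convergence-of-bias to a statement about the entire conditional law of $\widehat{S}_k(G) \mid (|V|,|W|)$ depending on the generative process only through $P^*$ in the relevant regime, so that the null and test statistics are exchangeable up to $o(1)$ terms uniformly in the size variables. Once this uniform exchangeability is established, the false-positive rate---being a continuous functional of the joint law of $(\widehat{S}_k(G_1), \ldots, \widehat{S}_k(G_x), \widehat{S}_k(G_{test}))$---converges to its value under full exchangeability, which by the two-tailed calibration of the test is precisely $\alpha$.
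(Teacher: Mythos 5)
Your proposal follows the same route as the paper's own proof: invoke the two clauses of Size Consistency to conclude that each $\widehat{S}_k(G_i)$ and $\widehat{S}_k(G_{test})$ converges to the common value $S_k(P^*)$ (the graphs share $P^*$; only the size distributions $M_W, M_V$ differ), then argue that a test point exchangeable with the null sample is rejected with probability $\alpha$. The paper states exactly this in two sentences and stops at the assertion that the statistics are ``converging to the same distribution of values,'' which is precisely where you correctly flag the weakness: the definition of Size Consistency only controls the bias $\widehat{S}_k(G) - S_k(P^*)$, so in the limit both the empirical critical points $\phi_{lower}, \phi_{upper}$ and the test value collapse onto the single atom $S_k(P^*)$, and whether the test point lands inside or outside the shrinking acceptance interval is decided by fluctuations at a scale the definition says nothing about. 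The issue is sharpened by the fact that $G_{test}$ is drawn with $M^a_W, M^a_V$ rather than $M^n_W, M^n_V$, so its edge and node counts --- and hence the finite-sample law of its statistic --- can differ systematically from those of the null graphs; exchangeability therefore fails at every finite size and must be recovered asymptotically, uniformly in the size variables, exactly as you describe. In short: your approach is the paper's approach, but you have identified a genuine gap that the paper's proof shares and does not acknowledge, and closing it would require rate or distributional assumptions (e.g., on the conditional law of $\widehat{S}_k(G)$ given $|V|, |W|$) beyond Size Consistency as literally defined.
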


\begin{proof}
If $\widehat{S}_k(G)$ is a consistent estimator of $S_k(P)$, then 
as $| W_{min} | \rightarrow \infty$, $\widehat{S}_k(G) \rightarrow S_k(P)$ and if $S_k(P)$ is a consistent estimator of $S_k(P^*)$ then as $|V_{min}| \rightarrow |V^*|$, $S_k(P) \rightarrow S_k(P^*)$.  If $\{G_1...G_x\}$ and $G_{test}$ are drawn from $P^{*n}$, then $\widehat{S}_k(G_1)...\widehat{S}_k(G_x)$ and $\widehat{S}_k(G_{test}$ are converging to the same distribution of values and the hypothesis test will reject with the p-value probability of $\alpha$.
\end{proof}

As the number of edges and nodes drawn for the null distribution and test instance increase, the bias $abs( S_k(P^*) - \widehat{S}_k(G) )$ of the statistic calculated on those networks converges to zero.  This means that $\widehat{S}_k(G)$ effectively becomes equal to $S_k(P^*)$, and the outcome of the hypothesis test is only dependent on whether the test instance and null examples were both drawn from $P^{*n}$ or if the test instance was drawn from $P^{*a}$.  Even if the test case has an unusual number of edges or nodes, as long as the number of edges and nodes is not too small there will not be a false positive.  


Size consistency is also beneficial in the case of false negatives.  A statistic which is sensitive to changes in the edge or node count will produce a null distribution with high variance if $M_V^n$ or $M_W^n$ have high variance, which increases the chance of producing false negatives.  A size consistent statistic will have less variance as $|V|$ and $|W|$ increase, so as long as the minimum outputs of $M_V^n$ or $M_W^n$ are not too small the variance will be negligible.

\begin{theorem} \textbf{False Negative Rate for Size Consistent Statistics} \\
Let $G_{{test}}$ be a network that is anomalous (i.e., drawn from $P^{*a}$, $M^n_V$, $M^n_W$) with respect to property $k$, and $\{G_1...G_x \}$ be graph examples drawn from $P^{*n}$, $M^n_V$, $M^n_W$.  Let $| W_{min} |$ be the minimum of $M^n_W$ and $|V_{min}|$ be the minimum of $M^n_V$.  As $| W_{min} | \rightarrow \infty$ and $|V_{min}| \rightarrow |V^*|$ the probability of failing to reject $G_{test}$ approach 0 if $P^{*n} \neq P^{*a}$.
\label{convergetruepos}
\end{theorem}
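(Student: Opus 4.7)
The plan is to mirror the argument of Theorem \ref{convergetrueneg} but exploit the fact that the null and test instances are being drawn from different underlying models $P^{*n} \neq P^{*a}$. The key observation is that anomaly w.r.t. property $k$ implicitly means $S_k(P^{*n}) \neq S_k(P^{*a})$ (otherwise $S_k$ cannot distinguish the two models regardless of sample size). I will first record this as the working hypothesis, then show that the empirical null distribution collapses around $S_k(P^{*n})$ while the test statistic collapses around a distinct value $S_k(P^{*a})$, forcing the test statistic outside the critical region with probability tending to one.

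First I would apply Size Consistency twice, exactly as in the previous theorem. For each null example $G_i$, as $|W_{\min}| \to \infty$ and $|V_{\min}| \to |V^*|$, we have $\widehat{S}_k(G_i) \to S_k(P_i) \to S_k(P^{*n})$ where the first limit is the $|W|$-consistency step and the second is the $|V|$-consistency step. Applied to the test graph $G_{\text{test}}$, drawn from $P^{*a}$ under the same $M^n_V, M^n_W$, the same argument gives $\widehat{S}_k(G_{\text{test}}) \to S_k(P^{*a})$. Hence the null sample $\{\widehat{S}_k(G_1), \dots, \widehat{S}_k(G_x)\}$ concentrates at $S_k(P^{*n})$ while the test value concentrates at $S_k(P^{*a})$.

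Next I would translate concentration into a statement about the critical region. Let $\delta = |S_k(P^{*n}) - S_k(P^{*a})|/2 > 0$, which is strictly positive by the hypothesis that $P^{*n}$ and $P^{*a}$ are distinguishable under $S_k$. For $|W_{\min}|$ and $|V_{\min}|$ large enough, every null statistic lies within a $\delta$-ball of $S_k(P^{*n})$, so the empirical $\alpha/2$ and $(1-\alpha/2)$ quantiles used to form the two-sided critical points both lie in that ball. Simultaneously, $\widehat{S}_k(G_{\text{test}})$ lies within a $\delta$-ball of $S_k(P^{*a})$. By construction these two balls are disjoint, so the test statistic falls strictly outside the critical region, i.e., the null is rejected. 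Taking the limit, the probability of failing to reject tends to $0$.

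The main obstacle is the quantile step: Size Consistency is a statement about each $\widehat{S}_k(G_i)$ individually, but the critical points are empirical quantiles of a finite sample $\{G_1,\dots,G_x\}$ and depend on the whole sample jointly. I would handle this by noting that pointwise convergence of every sample element to the same limit $S_k(P^{*n})$ forces uniform concentration of the sample, so any order statistic (including the $\alpha/2$ and $1-\alpha/2$ quantiles) is squeezed into an arbitrarily small neighborhood of $S_k(P^{*n})$. The other subtle point worth flagging explicitly is the separation assumption $S_k(P^{*n}) \neq S_k(P^{*a})$; without it no statistic-based test can hope to detect the anomaly, so this really is the correct reading of ``anomalous with respect to property $k$'' and should be stated as a mild regularity condition on $S_k$ in the proof.
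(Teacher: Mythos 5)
Your proposal is correct and follows essentially the same route as the paper's proof: apply the two consistency limits to show the null statistics concentrate at $S_k(P^{*n})$ while the test statistic concentrates at $S_k(P^{*a})$, then conclude rejection with probability tending to one when these values differ. Your added $\delta$-ball argument for the empirical quantiles and your explicit reading of ``anomalous with respect to property $k$'' as $S_k(P^{*n}) \neq S_k(P^{*a})$ merely make rigorous what the paper leaves implicit (the paper itself conditions on $S_k(P^{*a}) \neq S_k(P^{*n})$ in its proof).
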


\begin{proof}
If $\widehat{S}_k(G)$ is a consistent estimator of $S_k(P)$, then 
as $| W_{min} | \rightarrow \infty$, $\widehat{S}_k(G) \rightarrow S_k(P)$ and if $S_k(P)$ is a consistent estimator of $S_k(P^*)$ then as $|V_{min}| \rightarrow |V^*|$, $S_k(P) \rightarrow S_k(P^*)$.  If $S_k(P^{*a}) \neq S_k(P^{*n})$, then as $| W_{min} | \rightarrow \infty$ and $|V_{min}| \rightarrow |V^*|$ the statistic $\widehat{S}_k(G_{test})$ and the set of statistics $ \widehat{S}_k(G_{1}), \widehat{S}_k(G_{2}), ... \widehat{S}_k(G_{x})$ converge to different values and $G_{test}$ will be flagged as an anomaly with probability 1.
\end{proof}

Now that we have investigated the effects of size consistency, we must look at the effects of its inverse.

\subsection{Size Inconsistency}



{\em Size Inconsistency} is the inverse of size consistency: if a statistic is not size consistent, then it is size inconsistent.  

\begin{definition}
A statistic $\widehat{S}_k$ is {\em Size Inconsistent} w.r.t. $S_k$ if: 

\vspace{-6mm}
\begin{small}
\begin{align*} 
&\lim_{|W| \rightarrow \infty} \widehat{S}_k(G) - S_k(P) \neq 0  \\
\mbox{ OR } \;\;\;  &\lim_{|V| \rightarrow |V^*|} S_k(P) - S_k(P^*) \neq 0 
\end{align*}
\end{small}
\vspace{-2mm}
\end{definition}

\noindent Where $S_k$ is a nontrivial function (a trivial function being one that is a constant, $\infty$, or $-\infty$ for all input values).  This definition also applies to the delta statistic case.

\begin{theorem} \textbf{False Positives for Size Divergent Statistics} \\
Let $\{G_1... G_{{x}} \}$ be a finite set of ``normal'' graphs drawn from $P^*$, $M^{n}_W$, and $M^{n}_V$ and $G_{{test}}$ be a test graph drawn from $P^*$.  If $\widehat{S}_k(G)$ diverges with increasing $|W|$ or $|V|$ and $M^{n}_W$, $M^{n}_V$ have finite bounds, there is some $|V|$ or $|W|$ for which a hypothesis test using $S_k(G)$ as the test statistic will incorrectly flag $G_{test}$ as an anomaly. 
\label{falsepos}
\end{theorem}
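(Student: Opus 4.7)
The plan is to exploit the contrast between a bounded null distribution and an unbounded test statistic. First I would use the assumption that $M^n_W$ and $M^n_V$ are finitely bounded: this forces the edge counts $|W_i|$ and node counts $|V_i|$ of the null graphs $G_1,\ldots,G_x$ to lie in bounded intervals. Provided $\widehat{S}_k$ is well-defined on graphs of bounded size, the realized null values $\widehat{S}_k(G_1),\ldots,\widehat{S}_k(G_x)$ are themselves bounded, and therefore the two-tailed critical points $C_{low} \le C_{high}$ produced at level $\alpha$ (as empirical quantiles of this finite sample) are finite quantities, not dependent on the size of any test graph.

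Next I would invoke the divergence hypothesis on the test side. By assumption $\widehat{S}_k(G)$ diverges as $|W|$ or $|V|$ grows, so for every finite threshold $T$ there is a graph size large enough that $\widehat{S}_k(G_{test}) > T$ (or $< -T$) for a test graph drawn from $P^*$ at that size. Choosing $T = \max(|C_{low}|, |C_{high}|)$, we obtain a concrete $|W_{test}|$ or $|V_{test}|$ at which $\widehat{S}_k(G_{test})$ lies outside $[C_{low}, C_{high}]$. The test therefore rejects the null, and because $G_{test}$ was generated from the same underlying $P^*$ as the null examples, this rejection is by definition a false positive, establishing the existence claim.

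The main obstacle is that $\widehat{S}_k(G_{test})$ is a random variable over the multinomial sampling of messages, so ``diverges'' must be interpreted probabilistically rather than as a deterministic limit on a single realization. I would handle this by reading the divergence assumption as divergence in probability (or divergence of the expectation accompanied by an implicit variance bound), which ensures that at the chosen size a realization of $G_{test}$ produces a statistic value outside $[C_{low}, C_{high}]$ with probability approaching one; any such realization witnesses the required $|V|$ or $|W|$. Apart from this probabilistic bookkeeping the argument is purely a magnitude comparison, so once the divergence is made precise the conclusion follows immediately from the boundedness of the null sample.
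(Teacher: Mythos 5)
Your proposal is correct and follows essentially the same argument as the paper: the finite null sample yields bounded critical points, while divergence of $\widehat{S}_k$ with $|W|$ or $|V|$ guarantees a test-graph size at which the statistic escapes those bounds, producing a rejection that is a false positive since $G_{test}$ shares the null model's $P^*$. Your added care about reading ``diverges'' probabilistically is a refinement the paper's own proof glosses over, but it does not change the structure of the argument.
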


\begin{proof}
When the set of graphs $\{G_1...G_{{x}}\}$ are used to estimate an empirical distribution of the null $\widehat{S_k}$, the distribution is bounded by $max[{S}_k(\{G_1...G_{{x}}\})]$ and $min[{S}_k(\{G_1...G_{x}\})]$, so the critical points $\phi_{lower}$ and $\phi_{upper}$ of a hypothesis test using this set of graphs will be within these bounds.  Since an increasing $| W_{{test}} | $ or $|V_{test}|$ implies ${S}_k(G_{{test}})$ diverges, then there exists a $| W_{{test}} |$ or $| V_{test}|$ such that ${S}_k(G_{{test}})$ is not within $\phi_{lower}$ and $\phi_{upper}$ and will be rejected by the test.
\end{proof}

Size Inconsistency generally occurs when the value of a statistic is a linear function of the edge weight or the number of nodes in the graph: when the edge weight or the number of nodes goes to infinity, the output of the statistic also diverges.  If a statistic is dependent on the size of a graph, then two graphs both drawn from $P^{*n}$ may produce entirely different values and a false positive will occur.

A second problem occurs when the edge counts in the estimation set have high variance.  If the statistic is dependent on the number of edges, noise in the edge counts translates to noise in the statistic values which lowers the statistical power (i.e. the percentage of true anomalies detected) of the test.  With a sufficient amount of edge count noise, the signal is completely drowned out and the statistical power of the anomaly detector drops to zero. 

\begin{theorem} \textbf{False Negatives for Size Divergent Statistics} \\
Let $G_{{test}}$ be a network that is anomalous (i.e., drawn from $P^{*a}$) with respect to property $k$.  If ${S}_k(G)$ diverges with increasing $| W|$ or $|V|$ there exists some $M^{n}_W$ or $M^{n}_V$ with sufficient variance such that a hypothesis test with p-value $\alpha$ and empirical null distribution $\widehat{S_k}$ will fail to detect ${S}_k(G_{{test}})$ as an anomaly with probability $1 - \alpha$. 
\label{falseneg}
\end{theorem}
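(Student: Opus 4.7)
The plan is to mirror the structure of Theorem \ref{falsepos} (false positives for size-divergent statistics) but exploit divergence in the \emph{opposite} direction: instead of showing that a diverging statistic pushes a normal graph outside the critical bounds, I would show that a diverging statistic, combined with enough variance in $M^n_W$ or $M^n_V$, pushes the critical bounds $\phi_{lower}, \phi_{upper}$ arbitrarily far apart, \emph{swallowing} any signal carried by $G_{test}$. Without loss of generality I would argue the case where $S_k$ diverges with $|W|$; the argument for $|V|$ is symmetric.

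First I would observe that because $S_k(G)$ diverges with $|W|$, for any constant $D > 0$ there exist edge counts $|W_a| < |W_b|$ such that the expected values $\mathbb{E}[S_k(G) \mid |W| = |W_b|] - \mathbb{E}[S_k(G) \mid |W| = |W_a|] > D$ even when the underlying model is held fixed at $P^{*n}$. Choosing $M^n_W$ to place substantial mass on both $|W_a|$ and $|W_b|$ produces an empirical null distribution whose range is at least $D$. By taking $D$ large enough, I can make the interval $[\phi_{lower}, \phi_{upper}]$ cover an arbitrary distance on the real line, because the two-tailed critical points of an empirical null are bounded by the observed extrema of $\widehat{S}_k(G_1),\dots,\widehat{S}_k(G_x)$.

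Next I would note that $G_{test}$ has its edge count drawn from $M^n_W$ (per the theorem's premise: only $P^*$ has changed, not the message-count or node-count distributions). So the random quantity $\widehat{S}_k(G_{test})$ is generated by the \emph{same} size-mixture as the null draws, with only the bounded offset $S_k(P^{*a}) - S_k(P^{*n})$ separating the two underlying distributions. Choosing $M^n_W$ so that the size-induced spread of $\widehat{S}_k$ dwarfs this offset makes $\widehat{S}_k(G_{test})$ exchangeable (in distribution, up to $o(1)$) with the null samples. By construction of an $\alpha$-level two-tailed empirical test, an exchangeable draw falls outside $[\phi_{lower}, \phi_{upper}]$ with probability $\alpha$, hence inside the acceptance region with probability $1-\alpha$, yielding the claimed false-negative rate.

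The main obstacle is making precise the claim that the anomaly offset becomes negligible relative to the size-driven spread. The cleanest way is to require only that $|S_k(P^{*a}) - S_k(P^{*n})|$ is finite (which the theorem implicitly assumes by treating it as a detectable signal) and then choose the variance of $M^n_W$ so that the size-induced range of $\widehat{S}_k$ exceeds this finite offset by an arbitrary factor. The technical care lies in handling the two-sided critical region: I would argue that shifting an exchangeable sample by a constant smaller than the half-width of the acceptance interval preserves the $1-\alpha$ acceptance probability, at least in the limit where the size-variance dominates. With that, the conclusion follows directly.
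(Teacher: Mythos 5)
Your proposal is correct and follows essentially the same route as the paper's proof: make the variance of $M^n_W$ (or $M^n_V$) large enough that the size-induced spread of the null distribution swamps the bounded model offset, so $\widehat{S}_k(G_{test})$ is effectively indistinguishable from a null draw and is rejected only with probability $\alpha$. Your write-up is in fact more careful than the paper's, which asserts that the inflated null variance ``covers all possible $S_k(G)$ values'' without spelling out the exchangeability step that actually yields the $1-\alpha$ acceptance rate.
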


\begin{proof}
Let $G_{{test}}$ be the test network drawn from $P^{*a}$, $M^{n}_W$, and $M^{n}_V$, and $\{G_1...G_{{x}}\}$ be the set of null distribution graphs drawn from $P^{*n}$, $M^{n}_W$ and $M^{n}_V$.  If ${S}_k(G)$ is a divergent function of $| W|$ or $|V|$, then the variance of of the null distribution $\widehat{S}_k$ estimated from $\{G_1...G_{{x}}\}$ is dependent on the variance of $| W|$ and $|V|$.  If the variance of sampled $| W|$ or $|V|$ is sufficiently large, the variance of $\widehat{S}_k$ will increase to cover all possible ${S}_k(G)$ values, and the test instance will fail to be flagged as an anomaly with probability $1 - \alpha$. 
\end{proof}

\noindent With a sufficient amount of edge count noise, the statistical power of the anomaly detector drops to zero.  Regardless of whether a time step is an example of an anomaly or not, if the variance of the test statistic is dominated by random noise in the edge count the time step will only be flagged due to random chance.

These theorems show that divergence with number of edges or nodes can lead to both false positives and false negatives from anomaly detectors that look for unusual network properties other than edge count. %
These theorems have been defined using a statistic calculated on a single network, but some statistics are delta measures which are measured on two networks.  In these cases, the edge counts of either or both of the networks can cause edge dependency issues.




\section{Network Statistics}
\label{Network Statistics}
In this section we introduce our set of proposed size consistent statistics, as well as analyze multiple existing statistics to determine if they are size consistent or inconsistent.  These properties are summarized in Table \ref{tab:statistics}; \textit{Fast Convergence} indicates fewer necessary edge/node observations to obtain a high level of accuracy. 


\subsection{Conventional Statistics}

\vspace{-2mm}
\paragraph{Graph Edit Distance}
\noindent The graph edit distance (GED) \cite{gao} is often used in anomaly detection tasks. 
GED on a weighted graph is typically defined as:

\vspace{-3mm}
\begin{small}
\begin{align} \label{eq:ged}
GED(G_{1}, G_{2}) &= | V_{1} | + | V_{2} | - 2  | V_{1} \cap V_{2} | \nonumber \\ & + \sum_{ij \in V_1 \cup V_2} abs( w_{ij,1} - w_{ij,2} ) 
\end{align}
\end{small}
\vspace{-3mm}

\begin{claim} 
GED is a Size Inconsistent statistic.
\end{claim}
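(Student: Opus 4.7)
The plan is to exhibit divergence of the empirical $\widehat{GED}(G_1,G_2)$ relative to the population-level $GED(P_1,P_2)$ as the message counts grow, which directly contradicts the edge-count clause of the Size Consistent definition for delta statistics. The key observation is that GED mixes a bounded node-overlap contribution with an $L_1$-style sum over raw edge \emph{counts}, and these counts scale with $|W|$.

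First I would bound $GED(P_1,P_2)$. Because $P_1$ and $P_2$ are normalized probability matrices, $\sum_{ij} |p_{ij,1} - p_{ij,2}| \leq 2$, and the node term $|V_1|+|V_2|-2\,|V_1\cap V_2|$ is at most $2|V^*|$. Hence $GED(P_1,P_2)$ is a bounded quantity that is independent of $|W_1|$ and $|W_2|$.

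Second, I would show that the empirical edge-weight contribution $\sum_{ij} |w_{ij,1} - w_{ij,2}|$ diverges as $|W_1|,|W_2| \to \infty$. Since $w_{ij,t}$ is a multinomial count from $|W_t|$ draws with per-cell probability $p_{ij,t}$, the expectation $\mathbb{E}[|w_{ij,1}-w_{ij,2}|]$ grows linearly with $|W|$ for any cell where $|W_1|\,p_{ij,1} \neq |W_2|\,p_{ij,2}$, and even when $P_1 = P_2$ with $|W_1| = |W_2|$ the expected absolute difference scales as $\sqrt{|W|\,p_{ij}(1-p_{ij})}$ by the standard Binomial fluctuation argument. Summing over the nonzero cells of the support yields a lower bound on $\widehat{GED}$ that diverges with $|W|$.

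Combining these two steps, $\widehat{GED}(G_1,G_2) - GED(P_1,P_2)$ does not converge to $0$; in fact it grows without bound, so the edge-count limit required by Size Consistency fails and the definition of Size Inconsistency is satisfied. The main obstacle I expect is handling the degenerate case where $P_1 = P_2$ cleanly: here I would lean on the $\sqrt{|W|}$ Binomial deviation bound to keep the conclusion unconditional, or, since Size Inconsistency only requires failure of the limit in some regime, simply exhibit a single concrete pair $(P_1,P_2)$ with $P_1 \neq P_2$ for which the linear-in-$|W|$ divergence is immediate.
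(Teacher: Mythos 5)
Your proposal is correct and reaches the right conclusion, but it takes a genuinely different --- and in fact more careful --- route than the paper. The paper fixes $|W_2| = |W_1| + W_\Delta$ for a constant $W_\Delta$, computes the expected \emph{signed} per-cell difference $E[w_{ij,1}-w_{ij,2}] = W_\Delta p_{ij}$, sums over cells to get $W_\Delta$, and concludes that $GED(G_1,G_2)$ converges to the constant $W_\Delta$, which is a trivial $S_k$ in the sense of the definition, hence inconsistency. You instead work with the absolute differences that actually appear in $GED$: you bound $GED(P_1,P_2)$ on the normalized matrices by $2 + 2|V^*|$ independently of $|W|$, and then show that $\sum_{ij}|w_{ij,1}-w_{ij,2}|$ diverges --- linearly in $|W|$ whenever the cell means separate, and like $\sqrt{|W|\,p_{ij}(1-p_{ij})}$ per cell even in the degenerate case $P_1=P_2$, $|W_1|=|W_2|$, by the standard binomial fluctuation bound. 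Your analysis is actually the sharper one: since $E[|X|]\ge|E[X]|$ and the fluctuations of $w_{ij,1}-w_{ij,2}$ have standard deviation of order $\sqrt{|W|}$, the expected $GED$ in the paper's own scenario diverges rather than converging to $W_\Delta$; the paper's implicit interchange of expectation and absolute value loses this. Both arguments establish Size Inconsistency (the paper via convergence to a model-independent constant, you via outright divergence, which rules out convergence to any nontrivial finite $S_k$), but yours is unconditional, handles the $P_1=P_2$ case explicitly, and does not rely on a limit statement that is not literally correct.
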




\noindent Consider the case where $G_1$ and $G_2$ are both drawn from $P^*$.  Let $|W_2| = |W_1| + W_\Delta$ where $W_\Delta$ is some constant value.  The expected difference in weights between two nodes $i,j$ in $G_1$ versus $G_2$ is:

\vspace{-4mm}
\begin{small}
\begin{align*}
 E[ w_{ij,1} - w_{ij,2} ] &= |W_1| p_{ij} - (|W_1| + W_{\Delta}) p_{ij} \nonumber = W_{\Delta} p_{ij}  
 \end{align*}
\end{small}
\vspace{-2mm}

\noindent Then, the limit as $|W_1|, |W_2|$ increase is

\vspace{-4mm}
\begin{small}
\begin{align*}
\lim_{|W_1|,|W_2| \rightarrow \infty} &GED(G_{1}, G_{2}) \nonumber \\ &= \lim_{|W_1|,|W_2| \rightarrow \infty} |V| + |V| - 2 |V \cap V| + \sum_{ij \in V} W_{\Delta} p_{ij} \nonumber \\ &= W_{\Delta}
 \end{align*}
\end{small}
\vspace{-2mm}

\noindent As $GED(G_{1}, G_{2})$ is converging to a constant value, it is not converging to a nontrivial $S_k$ and the first condition of Size Consistency is violated. $\square$



\paragraph{Degree Distribution and Degree Dist. Difference}

\noindent As defined before the Degree Distribution of a graph is the distribution of node degrees.  In this task we will find the difference between the degree distributions of two graphs using a delta statistic. 
Define the delta statistic Degree Distribution Difference $DD(G_1,G_2)$ between two graphs as:

\vspace{-3mm}
\begin{small}
\begin{align} \label{eq:dd}
 {DD}(G_1, G_2) =& \sum_{bin_k \in Bins} ( \sum_{i \in V_1} I[ D_i(G_1) \in bin_k ] \nonumber \\ &-  \sum_{i \in V_2} I[ D_i(G_2) \in bin_k ] )^2
\end{align}
\end{small}
\vspace{-2mm}

\noindent where $Bins$ is a consecutive sequence of equal size bins which encompass all degree values in both graphs.  Note that this value is an approximation of the Cram\'{e}r von-Mises criterion between the two empirical degree distributions.
Let the probabilistic degree of node $i$ be $D_i(P^*) = \sum_{j \neq i \in V^*} p^*_{ij}$.  Then let the value of $DD(P^*_1,P^*_2)$ be:

\vspace{-3mm}
\begin{small}
\begin{align} \label{eq:edd}
 {DD}(P^*_1,P^*_2) = &\sum_{bin_k \in Bins} ( \sum_{i \in V^*} I[ D_i(P^*_1) \in bin_k ] \nonumber \\ &-  \sum_{i \in V^*} I[ D_i(P^*_2) \in bin_k ] )^2
\end{align}
\end{small}
\vspace{-2mm}

%
%
%
%

\begin{table}[t]
\centering
\caption{Statistical properties of previous network statistics and our proposed alternatives.}
\label{tab:statistics}
{
\begin{tabular}{l | c | c | c | c |}
& Inconsist. & Consist. & Fast   \\
&&& Convergence \\
\hline
Mass Shift && \checkmark & \checkmark  \\
Probabilistic Degree && \checkmark & \checkmark  \\
Triangle Probability && \checkmark &\checkmark \\
\hline
Graph Edit Distance  & \checkmark & & \\
Degree Distribution  & \checkmark & & \\
Barrat Clustering & &\checkmark  &     \\
\hline
Netsimile &  \checkmark & & \\
Deltacon & & \checkmark & \\
\end{tabular}
}
\end{table}

\begin{claim} 
$DD(G_1,G_2)$ is a Size Inconsistent statistic.
\end{claim}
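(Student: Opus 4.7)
The plan is to follow the same template the authors used for GED: exhibit two graphs drawn from a common underlying $P^*$ for which $\widehat{DD}(G_1,G_2)$ fails to converge to $DD(P^*_1,P^*_2)$. The key observation is that the weighted degree $D_i(G) = \sum_{j} w_{ij}$ has expectation $|W| \cdot D_i(P)$ and therefore concentrates on a scale proportional to $|W|$, whereas the probabilistic degree used in \eqref{eq:edd} is a sum of probabilities living in a bounded range independent of $|W|$. Since the binning in \eqref{eq:dd} is taken over ``all degree values in both graphs,'' any mismatch between $|W_1|$ and $|W_2|$ produces a scale mismatch that persists in the limit.

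Concretely, I would first fix a single $P^*$ and set $V_1 = V_2 = V$, so the normalized distributions satisfy $P_1 = P_2 = P$; this forces $DD(P^*_1,P^*_2) = 0$ because every squared term in \eqref{eq:edd} vanishes. Then I would let $|W_1|, |W_2| \to \infty$ with $|W_2| = c\,|W_1|$ for a fixed constant $c > 1$ (which is permitted, since $|W|$ is drawn from $M_W$ and may vary). By the law of large numbers applied to the multinomial sampling, $D_i(G_1)/|W_1| \to D_i(P)$ and $D_i(G_2)/|W_2| \to D_i(P)$ almost surely, so the two empirical degree sequences concentrate on disjoint windows whose separation $(c-1)\,|W_1|\,D_i(P)$ grows linearly in $|W_1|$. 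Holding the number of equal-width bins fixed (or, more generally, assuming bin widths that are $o(|W_1|)$), every node of $G_1$ eventually falls into a bin disjoint from those occupied by $G_2$. Each nonempty bin then contributes a positive squared count to \eqref{eq:dd}, so $\widehat{DD}(G_1,G_2)$ is bounded below by a strictly positive constant. This contradicts convergence to $0 = DD(P^*_1,P^*_2)$ and violates the first limit in the delta version of Size Consistency.

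The main obstacle is the adaptive nature of the bin definition, because bin widths depend on the observed range of degrees and could, in principle, be chosen to grow on the same scale as $|W_1|$. I would handle this by ruling out both extreme regimes: if bin widths grow linearly in $|W_1|$ so that $G_1$ and $G_2$ masses collapse into a common bin, then $\widehat{DD}$ collapses to the trivial identically-zero function of $(P^*_1,P^*_2)$, which falls under the authors' ``not converging to a nontrivial $S_k$'' clause already invoked for GED; otherwise the scale-mismatch argument above applies. A secondary technical point I would treat carefully is that the LLN only gives $O(1/\sqrt{|W|})$ concentration around the mean, but since the cluster separation grows linearly in $|W_1|$, these fluctuations are negligible and cannot rescue the limit. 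Either way, $\widehat{DD}$ fails to track $DD(P^*_1,P^*_2)$, establishing Size Inconsistency.
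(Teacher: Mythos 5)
Your proposal is correct and follows essentially the same route as the paper's proof: both draw $G_1,G_2$ from a common $P^*$ on the same node set (so the target value is $DD(P^*,P^*)=0$) and let the two edge totals differ so that the raw degrees $D_i(G_1)$ and $D_i(G_2)$ sit on mismatched scales, land in different bins, and force a nonzero limit, violating the first consistency condition. The only difference is that the paper uses a fixed additive offset $|W_2|=|W_1|+W_\Delta$ and simply asserts that some node changes bins, whereas your multiplicative offset $|W_2|=c\,|W_1|$ makes the separation grow linearly in $|W_1|$, which lets you rigorously dismiss the $O(\sqrt{|W|})$ degree fluctuations and the adaptive bin-width regimes that the paper glosses over.
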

%
%


\noindent Let $G_1, G_2$ be drawn from $P^*$ using the same node set $V$ and let $|W_2| = |W_1| + W_\Delta$ for some constant $W_\Delta$.  As $|W_1|,|W_2|$ increase the value of $D_i(G_2) - D_i(G_1)$ converges to 

\vspace{-2mm}
\begin{small}
\begin{align}
 (|W_1| + W_\Delta) \sum_{j \neq i \in V} p^*_{ij} -  |W_1| \sum_{j \neq i \in V} p^*_{ij} = W_\Delta \sum_{j \neq i \in V} p^*_{ij}
\end{align}
\end{small}
\vspace{-1mm}

\noindent So for sufficiently large $W_\Delta$, at least one node will be placed into a higher bin for $G_2$ versus $G_1$, and the limit $\lim_{|W_1|,|W_2|\rightarrow \infty} DD(G_1,G_2)$ is not equal to $DD(P^*,P^*)$.  This violates the first condition of Size Consistency and therefore the Degree Distribution Difference is Size Inconsistent. $\square$

Other measures create aggregates using the degrees of multiple nodes \cite{priebe, berlingerio} but as the degree is size inconsistent these aggregates tend to be so as well. 

\paragraph{Weighted Clustering Coefficient}
\noindent Clustering coefficient is a measure of the transitivity, the propensity to form triangular relationships in a network.
As the standard clustering coefficient is not designed for weighted graphs we will be analyzing a weighted clustering coefficient, specifically the Barrat weighted clustering coefficient (CB)\cite{saramaki}:

\vspace{-3mm}
\begin{small}
\begin{align} \label{eq:wcc}
CB(G) = \sum_{i} \frac{1}{|V|(k_i-1)D_i(G)} \sum_{j,k} \frac{w_{ij}+w_{ik}}{2} \widehat{a}_{ij}\widehat{a}_{ik}\widehat{a}_{jk} 
\end{align}
\end{small}
\vspace{-2mm}

\noindent where $\widehat{a}_{ij} = I[w_{ij} > 0]$, $D_i(G) = \sum_j w_{ij}$, and $k_i = \sum_j a_{ij}$.  Other weighted clustering coefficients exist but they behave similarly to the Barrat coefficient.

\begin{claim} 
CB is a Size Consistent statistic that converges to 
\begin{small}
\begin{align}
CB(P^*) &= \sum_{i} \! \frac{1}{|V^*| ({a}^*_i-1) \sum_{j} \! p^*_{ij}}  \sum_{j,k}   \frac{ (p^*_{ij}+p^*_{ik})}{2} {a}^*_{ij}{a}^*_{ik}{a}^*_{jk}.
\end{align}
\end{small}
\end{claim}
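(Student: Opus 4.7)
The plan is to verify the two limits in the definition of size-consistency in turn, using the multinomial sampling model and the renormalization step that relates $P$ to $P^*$.

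First I would show $\lim_{|W|\to\infty} CB(G) - CB(P) = 0$. Since $W$ is a multinomial draw of $|W|$ messages from $P$, the strong law gives $w_{ij}/|W| \to p_{ij}$ almost surely, hence $D_i(G)/|W| = \sum_j w_{ij}/|W| \to \sum_j p_{ij}$. The indicator $\widehat{a}_{ij} = I[w_{ij} > 0]$ equals $a_{ij}$ deterministically when $p_{ij} = 0$, and when $p_{ij} > 0$ it equals $0$ only with probability $(1-p_{ij})^{|W|} \to 0$, so $\widehat{a}_{ij} \to a_{ij}$ almost surely and therefore $k_i = \sum_j \widehat{a}_{ij} \to a_i$. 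The key algebraic observation is that the $|W|$ factors cancel in the summand: divide the inner sum by $|W|$ and multiply the outer factor $1/D_i(G)$ by $|W|$, so that each $w_{ij}$ becomes $w_{ij}/|W|$ and $D_i(G)$ becomes $D_i(G)/|W|$. Every term in the resulting expression converges to its analogue with $p_{ij}$ and $a_{ij}$ in place of $w_{ij}/|W|$ and $\widehat{a}_{ij}$, so the continuous mapping theorem yields $CB(G) \to CB(P)$.

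Second I would show $\lim_{|V|\to |V^*|} CB(P) - CB(P^*) = 0$. By the generative process, $p_{ij} = p^*_{ij}/Z$ with $Z = \sum_{ij \in V} p^*_{ij}$. As $|V|\to|V^*|$ we must have $V = V^*$ and hence $Z \to 1$, so $p_{ij} \to p^*_{ij}$ for every pair $i,j \in V$. Consequently $a_{ij} = I[p_{ij}>0] = I[p^*_{ij}>0] = a^*_{ij}$ and $a_i \to a^*_i$. The outer sum, which ranges over $V$, converges to a sum over $V^*$, and substituting these limits into the formula for $CB(P)$ yields exactly the claimed expression for $CB(P^*)$.

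The main obstacle is the first limit: both the singular behavior of $1/((k_i-1)D_i(G))$ when a row is nearly empty and the discreteness of the indicator $\widehat{a}_{ij}$ require care. For the former, one restricts the outer sum to vertices with $a_i \geq 2$, which is the only regime in which the Barrat formula is defined in the first place, and observes that for $|W|$ large enough $k_i = a_i$ with probability one on that set. For the latter, one needs the uniform fact that the finitely many events $\{w_{ij} = 0 : p_{ij} > 0\}$ all have probability tending to zero, so that joint almost-sure convergence of all $\widehat{a}_{ij}$ and all $w_{ij}/|W|$ holds and the continuous mapping theorem can be applied directly to the full Barrat expression. The second limit, by contrast, is purely deterministic algebra once the normalization constant is handled.
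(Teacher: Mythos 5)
Your proof follows the same two-step route as the paper: first cancel the $|W|$ factors after replacing $w_{ij}/|W|$ by $p_{ij}$ and $\widehat{a}_{ij}$ by $a_{ij}$, then substitute $p_{ij} = p^*_{ij}/Z$ with $Z = \sum_{ij \in V} p^*_{ij}$ and let the node set grow to $V^*$. You are in fact more careful than the paper on the points it glosses over --- the joint convergence of the finitely many indicators, the near-empty rows where $k_i - 1$ could vanish, and the normalization constant (the paper notes that $Z$ cancels between numerator and denominator of each term rather than arguing $Z \to 1$, but both dispatch it correctly).
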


%

\noindent First we will find $CB(P)$ by taking the limit as $|W| \rightarrow \infty$:

\vspace{-3mm}
\allowdisplaybreaks{
\begin{align*}
\lim_{|W| \rightarrow \infty} &CB(G) \nonumber \\ &= \lim_{|W| \rightarrow \infty} \sum_{i} \frac{1}{|V| ( \widehat{a}_i-1) D_i(G)} \sum_{j,k} \frac{w_{ij}+w_{ik}}{2} \widehat{a}_{ij}\widehat{a}_{ik}\widehat{a}_{jk} \nonumber \\
&= \sum_{i} \frac{1}{|V| ({a}_i-1) |W| \sum_{j} p_{ij}} \sum_{j,k} \frac{|W| (p_{ij}+p_{ik})}{2} {a}_{ij}{a}_{ik}{a}_{jk} \nonumber \\
&= \sum_{i} \frac{1}{|V| ({a}_i-1)  \sum_{j} p_{ij}} \sum_{j,k} \frac{ (p_{ij}+p_{ik})}{2} {a}_{ij}{a}_{ik}{a}_{jk} \nonumber \\
&= CB(P)
\end{align*}
}
\vspace{-3mm}

\noindent Now we take the limit $\lim_{|V| \rightarrow |V^*|} CB(P)$:

\vspace{-3mm}
\allowdisplaybreaks{
\begin{align*}
& \lim_{|V| \rightarrow |V^*|} CB(P) \nonumber \\ &= \lim_{|V| \rightarrow |V^*|} \sum_{i} \frac{1}{|V| ({a}_i-1)  \sum_{j} p_{ij}} \sum_{j,k} \frac{ (p_{ij}+p_{ik})}{2} {a}_{ij}{a}_{ik}{a}_{jk} \nonumber \\
&= \lim_{|V| \rightarrow |V^*|} \sum_{i} \frac{1}{|V| ({a}_i-1) \frac{1}{\sum_{ij \in V} p^*_{ij}} \sum_{j} p^*_{ij}} \nonumber \\ & \sum_{j,k}  \frac{1}{\sum_{ij \in V}p^*_{ij}} \frac{ (p^*_{ij}+p^*_{ik})}{2} {a}_{ij}{a}_{ik}{a}_{jk} \nonumber \\
& =  \sum_{i} \frac{1}{|V^*| ({a}^*_i-1) \sum_{j} p^*_{ij}}  \sum_{j,k}  \frac{ (p^*_{ij}+p^*_{ik})}{2} {a}^*_{ij}{a}^*_{ik}{a}^*_{jk} \nonumber \\
\hfill \square
\end{align*}
}
\vspace{-4mm}


Other weighted clustering coefficients such as those proposed by Onnela et. al. \cite{onnela} and Holme et. al. \cite{holme} behave in a similar manner.

\paragraph{Deltacon}
\noindent The core element of the Deltacon statistic is the Affinity Matrix which is a measure of the closeness (in terms of random walk distance) between all nodes in a graph.  Pairs of graphs with similar Affinity Matrices are scored as being more likely to be from the same distribution.  

\begin{claim} 
Deltacon is a Size Consistent statistic.
\end{claim}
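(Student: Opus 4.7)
The plan is to leverage the fact that the Deltacon distance is computed from the Fast Belief Propagation affinity matrix $S = [I + \epsilon^2 D - \epsilon A]^{-1}$, aggregated via a Matusita-style root-squared difference across entries, so it suffices to show that this affinity matrix is a continuous function of the underlying probability matrix $P$ (and hence of $P^*$ after restriction and renormalization). Concretely, I would first rewrite the weighted affinity matrix in terms of $\widehat{P}_t$: since $\widehat{p}_{ij,t}=w_{ij,t}/|W_t|$, the quantities $A$ and $D$ used to build $S$ can be expressed (using $\widehat{P}_t$ in place of the weighted adjacency and its row sums in place of the weighted degree, or equivalently by absorbing the $|W_t|$ factor into $\epsilon$) so that $S$ depends on $W_t$ only through $\widehat{P}_t$. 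This step removes any dependence on the absolute magnitude $|W_t|$ and is the key algebraic reformulation.

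With this reformulation in hand, the first condition of Size Consistency follows from a standard continuity argument. By the strong law of large numbers for multinomial sampling, $\widehat{p}_{ij,t} \to p_{ij,t}$ almost surely as $|W_t| \to \infty$, so the matrix $I + \epsilon^2 D - \epsilon A$ built from $\widehat{P}_t$ converges entrywise to the corresponding matrix built from $P_t$. Matrix inversion is continuous on the open set of non-singular matrices, hence $S(G_t) \to S(P_t)$; since the Matusita distance is continuous in its arguments, $\widehat{S}_k(G_1,G_2) \to S_k(P_1,P_2)$, which is the first limit.

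For the second condition, as $|V_t| \to |V^*|$ the vertex subset $V_t$ exhausts $V^*$ and the normalization constant $Z = \sum_{ij \in V_t} p^*_{ij}$ converges to $1$, so the restricted-and-renormalized $P_t$ converges entrywise to $P^*_t$ (with zero padding outside $V_t$). The same continuity argument applied to the matrix inverse and to the Matusita distance then gives $S_k(P_1,P_2) \to S_k(P^*_1,P^*_2)$, establishing Size Consistency when combined with the first limit. The delta-statistic version is immediate since the same convergence holds for both $G_1$ and $G_2$.

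The main obstacle I expect is the first step: choosing and justifying a convention under which the weighted affinity matrix depends on $|W_t|$ only through $\widehat{P}_t$. The original Deltacon formulation is defined on unweighted graphs with integer-valued $A$ and $D$, so for the weighted, size-varying setting one must commit to a normalization (I would use $\widehat{P}_t$ directly, which is the natural estimator of $P$). Once that convention is fixed, uniform non-singularity of $I + \epsilon^2 D - \epsilon A$ in a neighborhood of the limiting matrix is a standard property of Fast Belief Propagation for sufficiently small $\epsilon$, and the remainder of the argument is routine continuity.
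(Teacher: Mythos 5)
Your argument is sound for the variant of Deltacon you define, but it takes a genuinely different route from the paper's. The paper treats the Fast Belief Propagation affinity matrix as a truncated polynomial $S \approx I + \epsilon A + \epsilon^2 A^2$ in the \emph{binary} adjacency matrix $\widehat{a}_{ij} = I[w_{ij} > 0]$, so the whole question reduces to convergence of the support: once every nonzero cell of $P$ has been sampled (probability $\to 1$ as $|W| \to \infty$, since each nonzero $p_{ij}$ is bounded below), $\widehat{A} = A$, and as $|V| \to |V^*|$ the restricted adjacency approaches $A^*$. Dependence on $|W|$ never enters because an indicator does not scale with the number of samples, so no normalization convention is needed. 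You instead work with the weighted affinity matrix, must commit to rescaling by $|W_t|$ (replacing $W_t$ with $\widehat{P}_t$, or absorbing $|W_t|$ into $\epsilon$), and then chain the law of large numbers through continuity of matrix inversion and of the Matusita distance. Your route is more rigorous about exactly which function is being computed and would extend to a genuinely weighted Deltacon, but be aware it proves consistency of a \emph{modified} statistic whose normalization the paper never adopts; the paper's binary-adjacency reading is what lets it both give a two-line proof and make its follow-up observation that convergence is slow in practice, since a single unsampled bridge edge flips an entry of $A$ and can change path lengths substantially. Your remaining technical worries (non-singularity of $I + \epsilon^2 D - \epsilon A$ for small $\epsilon$, entrywise convergence of the renormalized $P$ to $P^*$ as $Z \to 1$) are handled correctly.
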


\noindent The Affinity Matrix $S$ is approximated with Fast Belief Propagation and is estimated with $S \approx I + \epsilon*A + \epsilon^2*A^2$ where $A$ is the adjacency matrix and $\epsilon$ is the coefficient of attenuating neighbor influence.  As $|W| \rightarrow \infty$ and $|V| \rightarrow |V^*|$ the adjacency matrix $A$ approaches $A^*$ which is the adjacency matrix of $P^*$, so the statistic does converge to the value given by the Affinity Matrix difference calculated on the true $P^*$ graphs.  However, this convergence will be slow in practice as a small difference between $A$ and $A^*$ can cause large changes in the path lengths between nodes if the missing edges are critical bridges between graph regions.


\paragraph{Netsimile}
\noindent Netsimile is an aggregate statistic using multiple simple statistics to form descriptive vectors.  These statistics include number of neighbors, clustering coefficient (unweighted), two-hop neighborhood size, clustering coefficient of neighbors, and total edges, outgoing edges, and neighbors of the egonet.  

\begin{claim} 
Netsimile is a Size Inconsistent statistic.
\end{claim}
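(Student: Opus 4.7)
The plan is to mirror the template of the earlier GED and Degree Distribution Difference inconsistency proofs: exhibit a pair of graphs $G_1,G_2$ drawn from a common $P^*$ for which Netsimile's Canberra distance between signatures converges to a quantity determined by a confounding size parameter rather than by a nontrivial function of $P^*$. Because Netsimile aggregates seven per-node features via five moments each and then combines them coordinate-wise into a Canberra distance, and every Canberra term is nonnegative, it suffices to identify a single coordinate that fails a Size Consistency condition and argue that its contribution does not cancel in the sum.

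The primary target is the mean-degree coordinate. Under the multinomial sampling model $\Pr[\widehat{a}_{ij}=1] = 1-(1-p_{ij})^{|W|}$, so the expected mean unweighted degree
\[
 E\!\left[\tfrac{1}{|V|}\sum_{i\neq j}\widehat{a}_{ij}\right] = \tfrac{1}{|V|}\sum_{i\neq j}\bigl(1-(1-p_{ij})^{|W|}\bigr)
\]
depends explicitly on $|W|$. Taking $G_1,G_2$ drawn from a common $P^*$ on the same vertex set $V$ with $|W_2|=|W_1|+W_\Delta$ --- exactly the construction used in the Degree Distribution Difference claim --- produces distinct mean-degree coordinates whose difference is controlled by $W_\Delta$ and $P^*$ rather than by any property distinguishing two underlying models. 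The Canberra contribution from this coordinate is then pinned to a $W_\Delta$-dependent value rather than to a meaningful $S_k(P^*,P^*)$, mirroring the GED failure mode.

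The main obstacle is the same asymptotic wrinkle the earlier inconsistency proofs had to handle: since $(1-p_{ij})^{|W|}\to 0$ whenever $p_{ij}>0$, the unweighted mean-degree coordinate in isolation does converge in the strict $|W|\to\infty$ limit, so degree alone is not enough to falsify Size Consistency asymptotically. To push the argument through I would shift focus to a coordinate whose per-node statistic does not saturate --- either a weighted egonet count $\sum_{jk\in N_i} w_{jk}$, whose mean grows linearly with $|W|$ and whose Canberra ratio between the two signatures stabilizes at a function of $|W_2|/|W_1|$, or a higher-moment coordinate such as the skew or kurtosis of the egonet-edge distribution, whose value on the subgraph $P$ depends on $|V|$ in a way that does not uniformly match $S_k(P^*)$. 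Exhibiting a single $P^*$ for which at least one such coordinate contributes a non-vanishing, size-dependent term is the crux of the proof, and completes the argument that $\widehat{S}_k(G_1,G_2)$ fails to converge to $S_k(P^*,P^*)$, establishing Size Inconsistency.
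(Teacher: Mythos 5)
Your proposal is essentially the paper's own argument: identify a Netsimile component that scales with the raw edge count (the paper points to degree-type and egonet edge-count features, you to the weighted egonet count), and observe that the Canberra normalization then drives that coordinate to a limit determined by $|W_1|,|W_2|$ rather than to any nontrivial function of $P^*$, violating the first Size Consistency condition. You are in fact slightly more careful than the paper in noting that the \emph{unweighted} degree coordinate saturates as $|W|\rightarrow\infty$ and so cannot carry the argument alone, but the overall route and conclusion coincide.
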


\noindent Statistics that use the raw edge count such as $D_i(G)$ will not be consistent as shown earlier, so aggregates that use these types of statistics will also be inconsistent.  The statistic uses the Canberra distance ($\frac{abs(S_k(G_1) - S_k(G_2))}{S_k(G_1) + S_k(G_2)}$) for each component statistic $S_k$ as a form of normalization, but as the component statistics diverge to infinity the Canberra distance converges to 0 and the normalization is still inconsistent.


\subsection{Proposed Size Consistent Statistics}

We will now define a set of Size Consistent statistics designed to measure network properties similar to the previously described dependent statistics, but without the sensitivity to total network edge count.  They will also be designed such that the empirical estimations converge to the true values as quickly as possible.


\begin{figure}[h!]
\begin{centering}
\includegraphics[width=.75\columnwidth,natwidth=610,natheight=642,trim={0mm 30mm 0mm 30mm},clip]{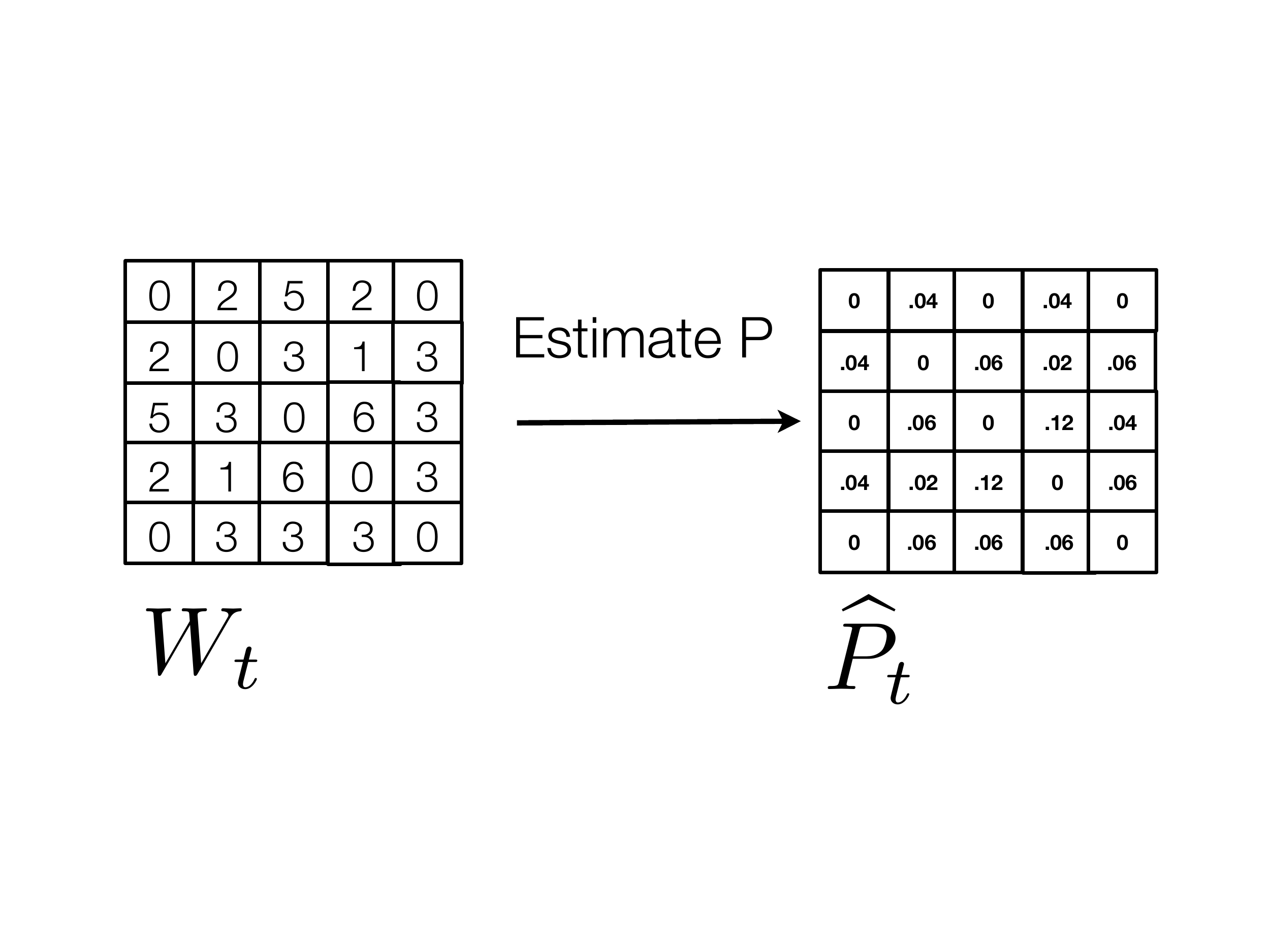}
\vspace{-5mm}
\caption{Estimation of the $\widehat{P_t}$ matrix from the observed $W$ weights.}
\label{P-estimation}
\end{centering}
\end{figure}

\vspace{4mm}

These statistics use a matrix $\widehat{P}_t$ where $\widehat{p}_{ij,t} = \frac{w_{ij,t}}{|W_t|}$ which is empirical estimate of $P_t$ obtained by normalizing the matrix as shown in figure \ref{P-estimation}.  Obtaining this matrix can be thought of as a reversal of the sampling process shown in Figure \ref{sampling-complete}.  Although $\widehat{P}_t$ is only an estimate of $P_t$, it is an unbiased one, and given an increasingly large $|W_t|$ it will be eventually exactly equal to $P_t$.  Therefore, empirical statistics which use $\widehat{P}_t$ in place of $P_t$ as their input will converge to the true statistic calculated on $P_t$ and the statistic converges w.r.t. $|W_t|$.

However, this does not guarantee that $\widehat{P}_t$ will converge in probability to $P^*_t$ as the number of nodes in $V_t$ increases.  In fact, the value of any cell $\widehat{p}_{ij,t}$ is inversely proportionate to $|V_t|$: as both $\widehat{P}_t$ and $P^*_t$ are probability distributions which sum to 1, the more cells in either matrix the lower the probability mass in each cell on average.  This concentrating effect as $V_t$ is sampled from $V^*_t$ is demonstrated in figure \ref{mass-concentration}.  



\begin{figure}[h!]
\begin{centering}
\includegraphics[width=.75\columnwidth,natwidth=610,natheight=642, trim={0mm 20mm 0mm 10mm},clip]{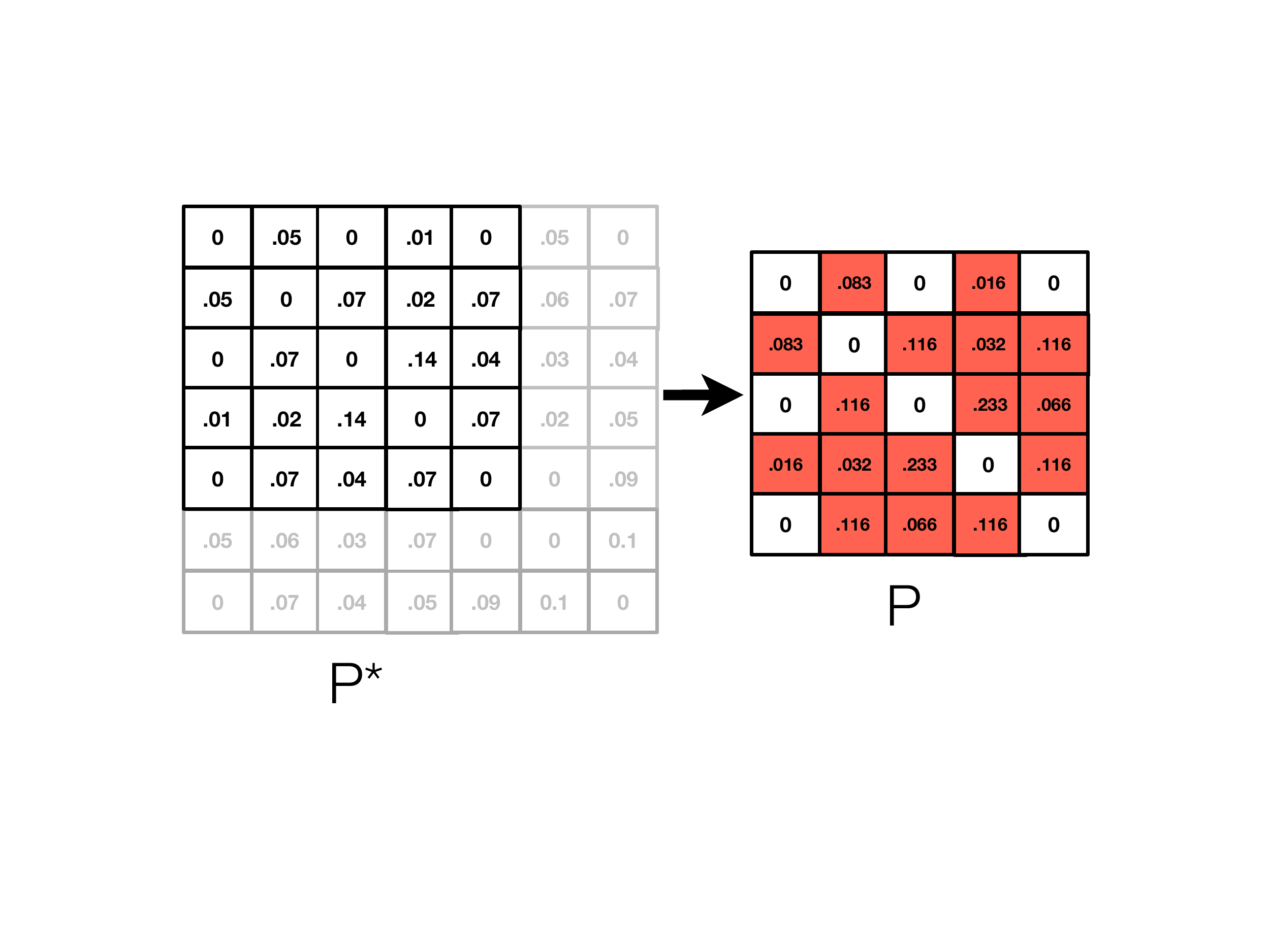}
\vspace{-8mm}
\caption{Mass of the cells in $P$ increase as $|V|$ decreases.}
\label{mass-concentration}
\end{centering}
\end{figure}


The solution to avoiding this concentration of probability mass is to introduce normalizing terms which negates the effect. These terms are $\overline{p^*}_t = \frac{1}{| A_t^* |} \sum_{ij \in V_t^*} p^*_{ij,t}$ for $S_k(P^*_t)$ and the empirical term $\overline{p}_t = \frac{1}{| A_t |} \sum_{ij \in V} p_{ij,t}$ for $S_k(P_t)$, where $|A_t^*|$ and $|A_t|$ are the number of nonzero cells in $P_t^*$ and $P_t$ respectively.  Replacing each $p^*_{ij,t}$ and $p_{ij,t}$ term in a statistic with $\frac{p^*_{ij,t}}{\overline{p^*_t}}$ and $\frac{p_{ij,t}}{\overline{p_t}}$ ensures that the statistic also converges as $|V_t|$ increases.

The utility of the $\overline{p^*}_t$ and $\overline{p}_t$ terms is to normalize the probability mass concentration effect when the size of $|V|$ changes.  As $P$ is a proper probability distribution and sums to a total of 1, decreasing $|V|$ also causes the cells in $P$ to decrease and the probability mass in each cell to rise (illustrated in figure \ref{mass-concentration}). Normalizing by the mean of each nonzero cell $\overline{p}$ allows the terms of the consistent statistics to converge as $|V|$ increases and ensures that the bias remains small.  Another way to consider this term is that $\overline{p}$ is a renormalization of $\overline{p^*}_V$ where $\overline{p^*}_V$ is the mean of the subset of $P^*$ cells that belong to $V$.  As $\overline{p^*}_V$ is the sample mean approximating $\overline{p^*}$ it is an unbiased estimator of $\overline{p^*}$ and the inverse $\frac{1}{\overline{p^*}_V}$ is a consistent estimator of $\frac{1}{\overline{p^*}}$ due to Slutsky's theorem.  The regions spanned by each of these terms are shown in figure \ref{region-visualization}.



\begin{figure}[h!]
\begin{centering}
\includegraphics[width=.75\columnwidth,natwidth=610,natheight=642,trim={0mm 10mm 0mm 10mm}]{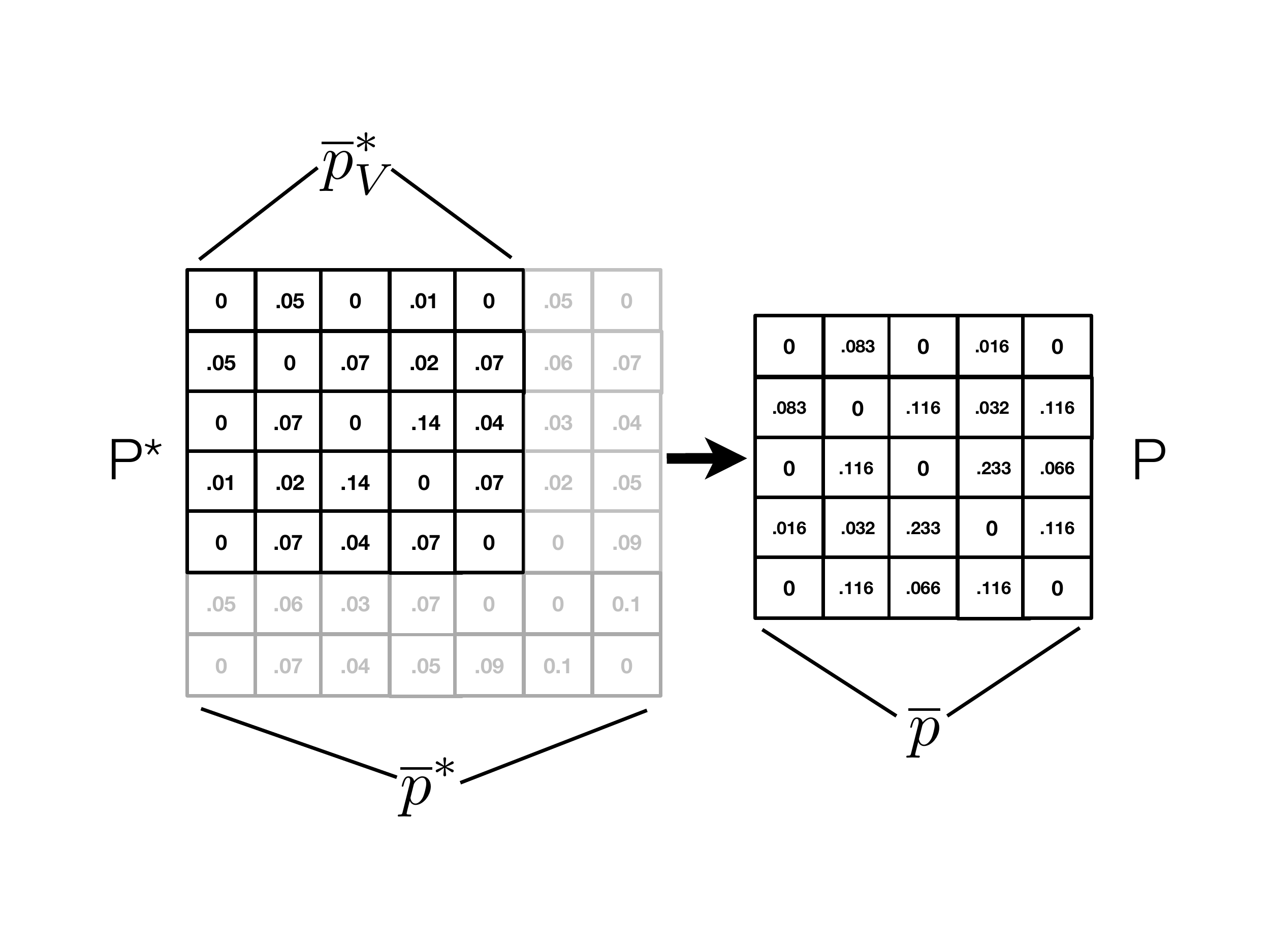}
\vspace{-5mm}
\caption{Visualization of the regions averaged to obtain $\overline{p^*}, \overline{p},$ and $ \overline{p^*}_V$.}
\label{region-visualization}
\end{centering}
\end{figure}


\paragraph{Probability Mass Shift}

\noindent We will now introduce a new consistent statistic called Probability Mass Shift which is a measure of change in the underlying $P^*$ matrices which produced two graphs.  Similar to graph edit distance when used on a dynamic network it is a measure of the rate of change the network is experiencing; unlike graph edit distance it is consistent with respect to the size of the input graphs.

\noindent Let $P^*_1$ and $P^*_2$ be probability distributions over a node set $V^*$.  Define the Probability Mass Shift between $P^*_1$ and $P^*_2$ to be:

\vspace{-4mm}
\begin{small}
\begin{align} \label{eq:ms}
{MS}(P^*_1, P^*_2) &=  \frac{1}{Z_{V^*}} \sum_{ij \in V^*} ( \frac{ {p^*}_{ij,1} }{\overline{p^*_1}} - \frac{ {p^*}_{ij,2} }{\overline{p^*_2}} )^2
\end{align}
\end{small}
\vspace{-2mm}

\noindent where the term $\overline{p^*_x} = \frac{1}{| A^* |} \sum_{ij \in V^*} p^*_{ij}$ refers to the average value of nonzero cells in $P^*_x$ and $Z_{V^*} = {| V^* | \choose 2}$.  

Let the Probability Mass Shift between node subsets $V_1, V_2 \subset V^*$ be:

\vspace{-4mm}
\begin{small}
\begin{align} \label{eq:ms}
{MS}(P_1, P_2) &=  \frac{1}{Z_{V_{\cap}}} \sum_{ij \in V_{\cap}} ( \frac{ {p}_{ij,1} }{\overline{p_1}} - \frac{ {p}_{ij,2} }{\overline{p_2}} )^2
\end{align}
\end{small}
\vspace{-2mm}

\noindent where $V_{\cap}$ is the intersection of $V_1$ and $V_2$, $p_{ij,x} = \frac{p^*_{ij,x}}{\sum_{ij \in V_{\cap}} p^*_{ij,x}}$, and $\overline{p_x} = \frac{1}{| A |} \sum_{ij \in V_{\cap}} p_{ij}$.

Now define the empirical Probability Mass Shift $\widehat{MS}(G_1,G_2)$ to be:


\vspace{-3mm}
\begin{small}
\begin{align} \label{eq:ems}
\widehat{MS}(G_1, G_2) &=  \frac{1}{Z_{V_{\cap}}} \sum_{ij \in V_{\cap}} ( \frac{ \widehat{p}_{ij,1} }{\overline{\widehat{p}_1}} - \frac{ \widehat{p}_{ij,2} }{\overline{\widehat{p}_2}} )^2
\end{align}
\end{small}
\vspace{-2mm}

\noindent where $\overline{\widehat{p}_x} = \frac{1}{| \widehat{A_x} |} \sum_{ij \in V_{\cap}} \widehat{p}_{ij}$, $| \widehat{A_x} | = \sum_{ij \in V_{\cap}} I[ w_{ij,x} > 0]$, and $\widehat{p}_{ij,x} = \frac{w_{ij,x}}{\sum_{ij \in V_{\cap}} w_{ij,x} }$.

\vspace{4mm}
\begin{theorem} 
$\widehat{MS}(G_1,G_2)$ is a size consistent statistic which \\ converges to $MS(P^*_1, P^*_2)$. 
\end{theorem}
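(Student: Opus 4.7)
The plan is to verify the two conditions in the definition of size consistency for delta statistics: first, $\widehat{MS}(G_1,G_2) \to MS(P_1,P_2)$ as $|W_1|,|W_2| \to \infty$, and second, $MS(P_1,P_2) \to MS(P^*_1,P^*_2)$ as $|V_1|,|V_2| \to |V^*|$. Because $\widehat{MS}$ is a finite sum of squared differences of ratios $\widehat{p}_{ij,x}/\overline{\widehat{p}_x}$, in each step I would push the limit inside the sum via the continuous mapping theorem once I verify that every component converges.

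For the first (edge-count) limit, I would observe that the weights $\{w_{ij,x}\}_{ij \in V_\cap}$ restricted to $V_\cap$ follow a multinomial with cell probabilities $p_{ij,x}$ and a total count that tends to infinity with $|W_x|$. By the law of large numbers, $\widehat{p}_{ij,x} = w_{ij,x}/\sum_{ij \in V_\cap} w_{ij,x}$ converges in probability to $p_{ij,x}$. For the denominator $\overline{\widehat{p}_x}$, I would show $|\widehat{A_x}| \to |A_x|$: for any cell with $p_{ij,x} > 0$ the indicator $I[w_{ij,x} > 0]$ equals one almost surely in the limit (a coupon-collector argument), while cells with $p_{ij,x} = 0$ are never sampled. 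Hence $\overline{\widehat{p}_x} \to \overline{p_x}$, and the continuous mapping theorem applied to the finite sum yields $\widehat{MS}(G_1,G_2) \to MS(P_1,P_2)$.

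For the second (node-count) limit, I would exploit the algebraic simplification that $P_x$ is renormalized so $\sum_{ij \in V_\cap} p_{ij,x} = 1$, giving $\overline{p_x} = 1/|A_x|$ and therefore $p_{ij,x}/\overline{p_x} = |A_x| \cdot p^*_{ij,x}/\sum_{k\ell \in V_\cap} p^*_{k\ell,x}$. Analogously $p^*_{ij,x}/\overline{p^*_x} = |A^*_x| \cdot p^*_{ij,x}$. As $|V_1|,|V_2| \to |V^*|$, the intersection $V_\cap$ expands to $V^*$, so $|A_x| \to |A^*_x|$, $\sum_{k\ell \in V_\cap} p^*_{k\ell,x} \to 1$, and each existing summand of $MS(P_1,P_2)$ converges to the corresponding summand of $MS(P^*_1,P^*_2)$. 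The additional index pairs brought into the sum as $V_\cap$ grows contribute precisely the terms already present in $MS(P^*_1,P^*_2)$, and $Z_{V_\cap} = \binom{|V_\cap|}{2} \to Z_{V^*}$, closing the limit.

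The main obstacle is the first step, specifically the joint behavior of $\widehat{p}_{ij,x}$ and $|\widehat{A_x}|$ which both appear in the denominator of the normalized ratio. A careless argument could fail if a true-nonzero cell is never observed for finite $|W_x|$, making $\overline{\widehat{p}_x}$ underestimate $\overline{p_x}$ systematically. I expect the cleanest route is to establish the coverage event $\{|\widehat{A_x}| = |A_x|\}$ has probability tending to one, after which everything reduces to the standard continuous-mapping argument on a finite-dimensional vector of multinomial frequencies.
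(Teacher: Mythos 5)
Your proposal is correct and follows essentially the same route as the paper: both establish the edge-count limit by showing the coverage event $\{|\widehat{A_x}|=|A_x|\}$ occurs with probability tending to one (the paper's $(1-\epsilon)^{|W_x|}\to 0$ argument is exactly your coupon-collector step) and then passing the limit through the finite sum, and both handle the node-count limit by rewriting $p_{ij,x}/\overline{p_x}=|A_x|\,p^*_{ij,x}/\sum_{k\ell\in V_\cap}p^*_{k\ell,x}$ and arguing that the restricted normalizers converge to their $V^*$ counterparts (the paper phrases this via Slutsky's theorem applied to the inverse sample mean $1/(\overline{p^*_x}|V_\cap)$, which is the same fact you obtain by direct algebra). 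The only material in the paper's treatment not covered by your plan is the subsequent finite-sample bias computation and correction term, which is an enhancement beyond the consistency claim itself.
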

\vspace{-1mm}




\vspace{-3mm}
\begin{small}
\allowdisplaybreaks{
\begin{align}
& \lim_{|V_1|,|V_2| \rightarrow |V^*|} MS(P_1,P_2) \nonumber \\ &= \lim_{|V_1|,|V_2| \rightarrow |V^*|} \frac{1}{Z_P} \sum_{ij \in V_{\cap}} ( {| A_1 |}  {p}_{ij,1} - {| A_2 |}   {p}_{ij,2})^2 \nonumber \\ 
&= \lim_{|V_1|,|V_2| \rightarrow |V^*|} \frac{1}{Z_P} \sum_{ij \in V_{\cap}} (\frac{| A_1 |}{\sum_{ij \in V_{\cap} } p^*_{ij,1}  } p^*_{ij,1} - \frac{| A_2 |}{\sum_{ij \in V_{\cap}}p^*_{ij,2} } p^*_{ij,2})^2 \nonumber \\ 
\end{align}
}
\end{small}
\vspace{-4mm}

\noindent As $\frac{| A_x |}{\sum_{ij \in V_{\cap}} p^*_{ij,x}} = \frac{ \sum_{ij \in V_{\cap}} I [ p^*_{ij,x} > 0 ] }{\sum_{ij \in V_{\cap}} p^*_{ij,x}} \approx  \frac{1}{ (\overline{ p^*_x } ) }$, this is an approximation calculated over only a subset of the nodes $V_{\cap}$, denoted as $\frac{1}{\overline{p^*_1} | V_{\cap}}$.  

\vspace{-3mm}
\begin{small}
\allowdisplaybreaks{
\begin{align}
\lim_{|V_1|,|V_2| \rightarrow |V^*|} MS(P_1,P_2) \nonumber \\
&= \lim_{|V_1|,|V_2| \rightarrow |V^*|} \frac{1}{Z_P} \sum_{ij \in V_{\cap}} (\frac{1}{\overline{p^*_1} | V_{\cap}} p^*_{ij,1} - \frac{1}{\overline{p^*_2} | V_{\cap}} p^*_{ij,2})^2 \nonumber \\
&=  \frac{1}{Z_{V^*}} \sum_{ij \in V^*}  \lim_{|V_1|,|V_2| \rightarrow |V^*|} \big( \frac{1}{(\overline{p^*_1} | V_{\cap})^2 } (p^*_{ij,1})^2  +  \frac{1}{(\overline{p^*_2} | V_{\cap})^2 }  (p^*_{ij,2})^2 \nonumber \\ &- 2  \frac{1}{(\overline{p^*_1} | V_{\cap}) }  \frac{1}{(\overline{p^*_2} | V_{\cap}) } p^*_{ij,1} p^*_{ij,2}  \big) 
\end{align}
}
\end{small}
\vspace{-2mm}

\noindent $\overline{p^*_x} | V_{\cap}$ and $(\overline{p^*_x} | V_{\cap})^2$ are the sample mean and square of the sample mean of the value of the $P^*$ cells in $V_{\cap}$ respectively, and according to Slutsky's Theorem their inverses $\frac{1}{\overline{p^*_x} | V_{\cap}}$ and $\frac{1}{(\overline{p^*_x} | V_{\cap})^2}$ converge in probability to $\frac{1}{(\overline{p^*_x})}$ and $\frac{1}{(\overline{p^*_x})^2}$ as $| V_{\cap} | \rightarrow | V^* |$.  

\vspace{-3mm}
\allowdisplaybreaks{
\begin{align}
 \lim_{|V_1|,|V_2| \rightarrow |V^*|} MS(P_1,P_2) 
& = \frac{1}{Z_{V^*}} \sum_{ij \in V^*}  \big( \frac{1}{(\overline{p^*_1} )^2 } (p^*_{ij,1})^2  \nonumber \\ +  \frac{1}{(\overline{p^*_2})^2 }  (p^*_{ij,2})^2 -& 2  \frac{1}{(\overline{p^*_1} ) }  \frac{1}{(\overline{p^*_2} ) } p^*_{ij,1} p^*_{ij,2}  \big) = MS(P^*_1,P^*_2) 
\end{align}
}
\vspace{-3mm}


\vspace{-2mm}
\allowdisplaybreaks{
\begin{align}
  \lim_{| W_1 |, | W_2 | \rightarrow \infty} &\widehat{MS}(G_1, G_2)   \nonumber \\
&= \lim_{| W_1 |, | W_2 | \rightarrow \infty} \frac{1}{Z_P} \sum_{ij \in V_{\cap}}   ( \widehat{| A_1 |}  \widehat{p}_{ij,1} - \widehat{| A_2 |}  \widehat{p}_{ij,2})^2   \nonumber \\
&= \lim_{| W_1 |, | W_2 | \rightarrow \infty} \frac{1}{Z_P} \sum_{ij \in V}  \left(  \widehat{| A_1 |}  \frac{ {w}_{ij,1} }{| W_1 |} - \widehat{| A_2 |}  \frac{ {w}_{ij,2} }{| W_2 |}\right)^2  \nonumber \\
&= \lim_{| W_1 |, | W_2 | \rightarrow \infty} \frac{1}{Z_P} \sum_{ij \in V}  \frac{\widehat{| A_1 |}^2}{| W_1 |^2}  {w}_{ij,1}^2 \nonumber \\ &+  \frac{\widehat{| A_2 |}^2}{| W_2 |^2}  {w}_{ij,2}^2 - 2\frac{ \widehat{| A_1 |} }{| W_1 |}\frac{ \widehat{| A_2 |} }{| W_2 |}{w}_{ij,1}{w}_{ij,2} 
\end{align}
}
\vspace{-2mm}

%
%

Let the minimum value of any nonzero cell in $P_x$ be a finite $\epsilon$.  The probability of any node pair not being sampled from $P_x$ is $(1 - \epsilon)^{| W_x |}$, which is converging to 0.  Once every nonzero cell has been sampled, $|\widehat{A_x}| = |A_x|$, so this term is converging and can be replaced: 

\vspace{-3mm}
\allowdisplaybreaks{
\begin{align}
&=  \lim_{| W_1 |, | W_2 | \rightarrow \infty} \frac{1}{Z_P} \sum_{ij \in V}  \frac{{| A_1 |}^2}{| W_1 |^2} {w}_{ij,1}^2 +  \frac{{| A_2 |}^2}{| W_2 |^2}  {w}_{ij,2}^2  \nonumber \\ &- 2\frac{ {| A_1 |} }{| W_1 |}\frac{ {| A_2 |} }{| W_2 |} {w}_{ij,1}  {w}_{ij,2} \nonumber \\
&= \frac{1}{Z_P} \sum_{ij \in V}  {| A_1 |}^2 {p}_{ij,1}^2 + {| A_2 |}^2 {p}_{ij,2}^2 - 2 {| A_1 |} {| A_2 |}  {p}_{ij,1}  {p}_{ij,2} \nonumber \\
&= MS(P_1,P_2) \square
\end{align}
}
\vspace{-2mm}


\noindent We can improve upon the empirical version of the statistic by calculating the amount of bias for $|W_1|,|W_2|$ values and compensating.  As the expectation of $w_{ij,x}^2$ for any node pair $i,j$ given $ | W_x |$ can be written as:

\vspace{-3mm}
\begin{align}
E_{W_x }[ w^2_{ij,x} ] &= Var(w_{ij,x}) + E_{W_x }[ w_{ij,x} ]^2 \nonumber \\&= Var( Bin(| W_x |, {p}_{ij,x}) ) + E_{W_x }[ Bin(| W_x |, {p}_{ij,x}) ]^2 \nonumber  \\
&= | W_x |{p}_{ij,x}(1-{p}_{ij,x}) + | W_x |^2{p}^2_{ij,x} 
\end{align}
\vspace{-2mm}

\noindent We can rewrite the expectation of the empirical mass shift: 

\vspace{-3mm}
\allowdisplaybreaks{
\begin{align}
&\frac{1}{Z_P} \sum_{ij \in V}  \frac{{(| A_1 |)}^2}{| W_1 |^2}  ( | W_1 |{p}_{ij,1}(1-{p}_{ij,1}) + | W_1 |^2{p}^2_{ij,1} ) \nonumber \\ & + \frac{{(| A_2 |)}^2}{| W_2 |^2}  ( | W_2 |{p}_{ij,2}(1-{p}_{ij,2}) + | W_2 |^2{p}^2_{ij,2} )  \nonumber \\ &- 2\frac{ {| A_1 |} }{| W_1 |}\frac{ {| A_2 |} }{| W_2 |}| W_1 |p_{ij,1}  | W_2 |p_{ij,2} \nonumber \\
 &= \frac{1}{Z_P} \sum_{ij \in V} (| A_1 |p_{ij,1} - | A_2 |p_{ij,2})^2 \nonumber \\ &+ \frac{1}{| W_1 |}   | A_1 |^2p_{ij,1}(1-p_{ij,1}) +  \frac{1}{| W_2 |}   | A_2 |^2p_{ij,2}(1-p_{ij,2}) \nonumber \\
&= MS(P_1, P_2) + \frac{1}{| W_1 |}   | A_1 |^2 p_{ij,1}(1-p_{ij,1}) \nonumber \\ &+  \frac{1}{| W_2 |}   | A_2 |^2 p_{ij,2} (1-p_{ij,2})
\end{align}
}
\vspace{-2mm}

\noindent which is equal to $MS(P_1, P_2)$ plus a bias term.  

%


Although we have shown the empirical mass shift to be size consistent, we can improve the rate of convergence by subtracting our estimate of the bias:

\vspace{-3mm}
\begin{small}
\allowdisplaybreaks{
\begin{align}
\widehat{MS}(G_1, G_2) &= ( | \widehat{A_1} |  \widehat{p}_{ij,1} -  | \widehat{A_2} |  \widehat{p}_{ij,2})^2 - \frac{1}{| W_1 |} | \widehat{A_1} |^2 \widehat{p}_{ij,1}(1-\widehat{p}_{ij,1})  \nonumber \\ &- \frac{1}{| W_2 |}  | \widehat{A_2} |^2 \widehat{p}_{ij,2}(1-\widehat{p}_{ij,2}) 
\end{align}
}
\end{small}
\vspace{-2mm}

\paragraph{Probabilistic Degree Distance} 


The Probabilistic Degree Distance is a delta statistic that measures the difference between the degree distributions of two graphs in a size-consistent manner.  It is defined as:

\vspace{-3mm}
\begin{small}
\begin{align} 
& {PDD}(P^*_1, P^*_2) = \sum_{bin_k \in Bins} (\frac{1}{{|V^*|}} \sum_{i \in V^*} I[ PD_i(P^*_1) \in bin_k ] \nonumber \\ &- \frac{1}{{|V^*|}} \sum_{i \in V^*} I[  PD_i(P^*_2) \in bin_k ] )^2
\end{align}
\end{small}
\vspace{-2mm}

\noindent where $Bins$ is a consecutive sequence of equal size bins which encompass all $PD_i$ values, $PD_i(P^*) = \frac{1}{\overline{p^*}}\sum_{j \in V^*} p^*_{ij}$ is the \textit{Probabilistic Degree} of a node $i$, and $\overline{p^*} = \frac{1}{|V^*|}$ is the average probability mass per node.  We can rewrite the probabilistic degree as $PD_i(P^*) = |V^*| \sum_{j \in V^*} p^*_{ij}$.   

As the name suggests the probabilistic degree is a normalized version of node degree, and the distribution of probabilistic degrees replaces the standard degree distribution.  Before we can begin our proofs about the consistency of the PDD, we must first analyze the behavior of this probabilistic degree distribution.

The probabilistic degree of a node can be represented as the mean of the masses in the cells of that node in $P^*_k$:

\begin{small}
\begin{align}
F^*_{row,k}(x) &= \sum_{i \in V^*_k} I[ x > \frac{ \sum_{j \neq i \in V^*_k} p^*_{ij,k}}{(\overline{p^*}_{row,k})} ]
\end{align}
\end{small}

Where $\overline{p^*}_{row,k} = \frac{\sum_{ij \in V^*_k} p^*_{ij,k}}{N^*} =  \frac{1}{N^*}$.  We can rewrite the CDF as 

\begin{small}
\begin{align}
F^*_{row,k}(x) &= \sum_{i \in V^*_k} I[ x > N^* * \sum_{j \neq i \in V^*_k} {p^*_{ij,k}} ]
\end{align}
\end{small}







%
%
%
%
%
%

As before let us investigate the effect of node sampling by calculating the value of the statistic using the normalized probabilities $P_1, P_2$ on node subsets $V_1, V_2$:

\begin{small}
\begin{align}
F_{row,k}(x) &= {\sum_{i \in V_k} I[ x > \sum_{j \neq i \in V_k} \frac{p_{ij,k}}{(\overline{p}_{row,k})} ]}
\end{align}
\end{small}




Where $\overline{p}_{row,k} = \frac{1}{N}$.

Since $p_{ij,k} = \frac{p^*_{ij,k}}{\sum_{ij \in V_k} p^*_{ij,k}}$, we can rewrite $F_{row,k}(x)$ as

\begin{small}
\begin{align}
F_{row,k}(x) &= {\sum_{i \in V_k} I[ x >  N * \frac{1}{\sum_{jl \in V_k} p^*_{jl,k}}*{\sum_{j \neq i \in V_k} p^*_{ij,k}} ]} \nonumber \\
&= \sum_{i \in V_k} I[ x > \frac{ 1 }{\overline{p^*}_{row,k} | V_k} * \sum_{j \neq i \in V_k} p^*_{ij,k} ]
\end{align}
\end{small}

Where $\overline{p^*}_{row,k} | V_k$ is the mean probability mass per row in $P^*$ excluding any cells/rows that don't belong in the set $V_k$.  


If we take the expectation of the PDF for a particular $i$ with respect to the node sample $V_k$:

\begin{small}
\begin{align} 
&E_{V_k} [  \frac{1}{\overline{p^*}_{row,k} | V_k}*{\sum_{j \neq i \in V_k} p^*_{ij,k}} ] \nonumber \\
= &E_{N} \bigg[    E_{V_k \big| N} [ \frac{1}{\overline{p^*}_{row,k} | V_k}*{\sum_{j \neq i \in V_k} p^*_{ij,k}} ] \bigg] \nonumber \\
= &E_{N} \bigg[    E_{V_k \big| N} [ \frac{1}{\overline{p^*}_{row,k} | V_k}]*E_{V_k \big| N} [ {\sum_{j \neq i \in V_k} p^*_{ij,k}} ] \bigg] \nonumber \\
\end{align}
\end{small}

If we apply Wald's equation to $E_{V_k \big| N} [ {\sum_{j \neq i \in V_k} p^*_{ij,k}} ]$ we obtain

\begin{small}
\begin{align} 
E_{V_k \big| N} [ {\sum_{j \neq i \in V_k} p^*_{ij,k}} ] = E_{V_k \big| N} [ | A_{row_i,k} | ] * \overline{p^*_{ij,k}}
\end{align}
\end{small}

If we assume that probability mass in row $i$ is evenly distributed amongst the columns, then the fraction of row mass in $V_k$ versus $V^*_k$ is equal to the fraction of their sizes:

\begin{small}
\begin{align} 
 E_{V_k \big| N} [ | A_{row_i,k} | ] * \overline{p^*_{ij,k}} = \frac{N}{N^*} * | A^*_{row_i,k} | * \overline{p^*_{ij,k}} = \frac{N}{N^*} * {\sum_{j \neq i \in V^*_k} p^*_{ij,k}}
\end{align}
\end{small}

Now if we approximate $E_{V_k \big| N} [ \frac{1}{\sum_{jl \in V_k} p^*_{jl,k}}]$ with a taylor expansion we obtain:

\begin{small}
\begin{align} 
&E_{V_k \big| N} [ \frac{1}{\overline{p^*}_{row,k} | V_k}] = \nonumber \\
&\frac{1}{E_{V_k \big| N} [\overline{p^*}_{row,k} | V_k]}  + \frac{1}{(E_{V_k \big| N} [\overline{p^*}_{row,k} | V_k])^3}*Var(\overline{p^*}_{row,k} | V_k)
\end{align}
\end{small}

If we make the same assumption that row mass is roughly evenly distributed across the columns of the matrix, \[ E_{V_k \big| N} [\overline{p^*}_{row,k} | V_k] = \frac{N}{N^*} * \overline{p^*}_{row,k}\].  We can also rewrite the variance term as \[ Var( \frac{\sum_{ij \in V_k} p^*_{ij,k}}{N}) = \frac{1}{N^2} * Var(\sum_{ij \in V_k} p^*_{ij,k}) \]  

Putting this together we have

\begin{small}
\begin{align} 
&E_{V_k \big| N} [ \frac{1}{\overline{p^*}_{row,k} | V_k}] = \nonumber \\
&\frac{N^*}{N * \overline{p^*}_{row,k} }   + \frac{N^{*3}}{N^3 * \overline{p^*}_{row,k}^3} *\frac{1}{N^2} * Var(\sum_{ij \in V_k} p^*_{ij,k})
\end{align}
\end{small}


A typical degree distribution of a social network tends to be a power-law in type, which means that a handful of nodes have a large degree and most have a very small degree.  Again we will assume that covariance between edge probabilities are limited to within row/column pairs.  If we assume that the majority of nodes have a sub $O(N^{1/2})$ number of neighbors then the $1/N^2$ term will be greater than the number of covariance terms and the bias from these nodes will converge to 0.  Likewise, if the handful of high degree nodes have a sub $O(N)$ number of neighbors their covariance terms will also be less than $1/N^2$ and the bias will also converge to 0.


\begin{small}
\begin{align} 
\frac{N^*}{N * \overline{p^*}_{row,k} }*\frac{N}{N^*} * {\sum_{j \neq i \in V^*_k} p^*_{ij,k}} =  \frac{\sum_{j \neq i \in V^*_k} p^*_{ij,k}}{\overline{p^*}_{row,k}}
\end{align}
\end{small}

Which is the true PDF calculated on $P^*_k$.  This means that the CDF of the row masses converges to the correct distribution as $N$ approaches $N^*$. $\square$

%

\vspace{5mm}

\textbf{Degree Distribution Edge Bias}

Now we will consider the CDF of the empirical row mass calculated from an edge sampling $W_k$:

\begin{small}
\begin{align} 
\widehat{F}_{row,k}(x) &= \sum_{i \in V} I[ x > \frac{\sum_{j \neq i \in \widehat{V}_k} \widehat{p}_{ij,k}}{\overline{\widehat{p}_{row,k}} } ]
\end{align}
\end{small}

Where $\overline{\widehat{p}_{row,k}} = \frac{1}{\widehat{N}_k}$ and $\widehat{V}_k$ is the set of nodes that have at least one edge in $W_k$.  

If we take the expectation of the PDF with respect to $W_k$ we obtain

\begin{small}
\begin{align} 
E_{W_k} [ \widehat{N}_k * \sum_{j \neq i \in \widehat{V}_k} \widehat{p}_{ij,k} ]
\end{align}
\end{small}

As all rows in $P_k$ have at least one cell with nonzero probability, as $| W_k | \uparrow$, $\widehat{V}_k \rightarrow V_k$ as the probability of sampling at least one edge from every node approaches 1.  So if we take the limit as $| W_k |$ increases:

\begin{small}
\begin{align} 
&\lim_{| W_k | \rightarrow \infty}E_{W_k} [ \widehat{N}_k * \sum_{j \neq i \in \widehat{V}_k} \widehat{p}_{ij,k} ] \nonumber \\
= &E_{W_k} [ {N}_k * \sum_{j \neq i \in {V}_k} \widehat{p}_{ij,k} ] \nonumber \\
= & {N}_k * \sum_{j \neq i \in {V}_k} E_{W_k} [ \widehat{p}_{ij,k} ] \nonumber \\
= & {N}_k * \sum_{j \neq i \in {V}_k} p_{ij,k} 
\end{align}
\end{small}

So

\begin{small}
\begin{align} 
&\lim_{| W_k | \rightarrow \infty}E_{W_k} [ \widehat{F}_{row,k}(x) ] = F_{row,k}(x)
\end{align}
\end{small}
%
%
%


Now let us define the empirical probabilistic degree distance and analyze its behavior.  The empirical probabilistic degree is $\widehat{PD}_i(G) = {{|V|}}  \sum_{j \in V} {\widehat{p}}_{ij}$ and the empirical version of the delta statistic on $G_1,G_2$ is:

\vspace{-3mm}
\begin{small}
\begin{align} 
& \widehat{PDD}(G_1, G_2) = \sum_{bin_k \in Bins} (\frac{1}{{|V|}} \sum_{i \in V} I[ \widehat{PD}_i(G_1) \in bin_k ] \nonumber \\ &- \frac{1}{{|V|}} \sum_{i \in V^*} I[  \widehat{PD}_i(G_2) \in bin_k ] )^2
\end{align}
\end{small}
\vspace{6mm}

\begin{theorem} $\widehat{PDD}(G_1, G_2)$ is a size consistent statistic \\ which converges to ${PDD}(P^*_1, P^*_2)$. \end{theorem}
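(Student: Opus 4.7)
The plan is to split the claim into the two size-consistency conditions for delta statistics and reduce each to a convergence statement about the row-mass CDFs $F_{row,k}$, $\widehat{F}_{row,k}$, and $F^*_{row,k}$ already analyzed above. First I would rewrite PDD in a CDF-friendly form: for each bin $b$ with endpoints $L_b < U_b$, the summand $\frac{1}{|V|}\sum_{i\in V} I[\widehat{PD}_i(G_k) \in bin_b]$ equals $\frac{1}{|V|}\bigl(\widehat{F}_{row,k}(U_b) - \widehat{F}_{row,k}(L_b)\bigr)$, and analogously for $PDD(P_1,P_2)$ and $PDD(P^*_1,P^*_2)$ with their respective CDFs. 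The statistic is then a finite sum of squared linear combinations of these CDF evaluations, hence a continuous functional of the CDFs.

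For the edge limit $|W_1|,|W_2| \to \infty$ I would invoke the result established earlier that $E_{W_k}[\widehat{F}_{row,k}(x)] \to F_{row,k}(x)$. Since $\widehat{PD}_i(G_k) = |V|\sum_j \widehat{p}_{ij,k}$ and each $\widehat{p}_{ij,k} = w_{ij,k}/|W_k|$ is a binomial sample proportion with variance $O(1/|W_k|)$, the law of large numbers gives $\widehat{PD}_i(G_k) \to PD_i(P_k)$ in probability, so for every $x$ not equal to a value of some $PD_i(P_k)$ the CDF evaluation $\widehat{F}_{row,k}(x)$ converges in probability to $F_{row,k}(x)$. Applying the continuous mapping theorem to the squared-difference functional then yields $\widehat{PDD}(G_1,G_2) \to PDD(P_1,P_2)$.

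For the node limit $|V_1|,|V_2| \to |V^*|$ I would directly apply the convergence $F_{row,k}(x) \to F^*_{row,k}(x)$ that was worked out above via Wald's equation and the Taylor expansion of $E_{V_k}[1/(\overline{p^*}_{row,k}|V_k)]$, together with the sub-$O(\sqrt{N})$ and sub-$O(N)$ degree assumptions that force the covariance remainder to vanish. The same continuous mapping argument then gives $PDD(P_1,P_2) \to PDD(P^*_1,P^*_2)$, and composing the two limits completes the size-consistency proof.

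The main obstacle I anticipate is the discontinuity of the indicator binning: convergence of a CDF at an interior point does not automatically transfer to a point placed at a bin boundary, because a single probabilistic degree crossing $L_b$ or $U_b$ flips the indicator and introduces a non-vanishing jump. The remedy I would adopt is a Portmanteau-style argument that restricts attention to bin boundaries at which the limiting CDF $F^*_{row,k}$ is continuous; since there are only countably many jump points, an infinitesimal perturbation of the bin grid always accomplishes this. A secondary technicality is the mismatch between the $1/|V|$ prefactor in the empirical statistic and the $1/|V^*|$ prefactor in the target, but this ratio tends to $1$ as $|V|\to|V^*|$, so Slutsky's theorem pushes it harmlessly through the limit.
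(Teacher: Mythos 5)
Your proposal follows essentially the same route as the paper: establish pointwise convergence $\widehat{PD}_i(G) \to PD_i(P)$ as $|W|\to\infty$ via the law of large numbers, then $PD_i(P) \to PD_i(P^*)$ as $|V|\to|V^*|$ via Slutsky applied to the inverse sample mean (supported by the Wald/Taylor analysis of the row-mass CDFs), and push both limits through the binning indicators and the squared differences. The one place you go beyond the paper is in explicitly flagging and patching the discontinuity of the indicators at bin boundaries (via a Portmanteau-style restriction to continuity points), a technicality the paper's proof passes over silently.
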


\noindent First take the limit of the Probabilistic Degree for a node as $|W|$ increases: 

\vspace{-3mm}
\begin{small}
\allowdisplaybreaks{
\begin{align} 
\lim_{|W| \rightarrow \infty}  \widehat{PD}_i(G) &= \lim_{|W| \rightarrow \infty}  {{|V|}}  \sum_{j \in V} \widehat{p}_{ij} \nonumber \\
= \lim_{|W| \rightarrow \infty}  {|V|}  \sum_{j \in V} \frac{{w}_{ij}}{|W|}
&= |V| \sum_{j \in V} p_{ij} = PD_i(P)
\end{align}
}
\end{small}
\vspace{6mm}

\noindent If we take the same limit over the Probabilistic Degree Difference we obtain:

\vspace{-3mm}
\begin{small}
\begin{align} 
\lim_{|W| \rightarrow \infty} & \widehat{PDD}(G_1, G_2) = \sum_{bin_k \in Bins} (\frac{1}{{|V_1|}} \sum_{i \in V_1} I[ {PD}_i(P_1) \in bin_k ] \nonumber \\ &- \frac{1}{{|V_2|}} \sum_{i \in V_2} I[  {PD}_i(P_2) \in bin_k ] )^2 = PDD(P_1,P_2)
\end{align}
\end{small}
\vspace{-2mm}

\noindent If we take the limit as $|V| \rightarrow |V^*|$ of $PD_i(P)$:

\vspace{-3mm}
\begin{small}
\begin{align} 
\lim_{|V| \rightarrow |V^*|} PD_i(P) &= \lim_{|V| \rightarrow |V^*|}  |V| \sum_{j \in V} p_{ij}  = \lim_{|V| \rightarrow |V^*|} \frac{|V|}{\sum_{ij \in V} p^*_{ij} } \sum_{j \in V} p^*_{ij}
\end{align}
\end{small}
\vspace{-2mm}

\noindent $\frac{|V|}{\sum_{ij \in V} p^*_{ij} }$ can be rewritten as $\frac{1}{\bar{p^*} | V}$ where $\bar{p^*} | V$ is the average probability mass per node in $V$.  As this is an inverse mean, it will converge to the true value $\frac{1}{\bar{p^*}}$, and therefore $\lim_{|V| \rightarrow |V^*|} PD_i(P) = \frac{1}{\bar{p^*} } \sum_{j \in V^*} p^*_{ij} = |V^*| \sum_{j \in V^*} p^*_{ij} = PD_i(P^*)$.

%

If we take the limit on the $PDD$ we obtain a similar result:

\vspace{-3mm}
\begin{small}
\begin{align} 
\lim_{|V| \rightarrow |V^*|} PDD(P_1,P_2) =  \sum_{bin_k \in Bins} &(\frac{1}{{|V^*|}} \sum_{i \in V^*} I[ {PD}_i(P^*_1) \in bin_k ] \nonumber \\ - \frac{1}{{|V^*|}} \sum_{i \in V^*} I[  {PD}_i(P_2) \in bin_k ] )^2 \nonumber &= PDD(P^*_1,P^*_2)
\end{align}
\end{small}
\vspace{-3mm}

\noindent If we take the expectation over $V$:

\vspace{-3mm}
\allowdisplaybreaks{
\begin{align} 
E[ PDD(P_1,&P_2) ] = E[ \sum_{bin_k \in Bins} (\frac{1}{{|V_1|}} \sum_{i \in V_1} I[ {PD}_i(P_1) \in bin_k ] \nonumber \\ 
&- \frac{1}{{|V_2|}} \sum_{i \in V_2} I[  {PD}_i(P_2) \in bin_k ] )^2 ] \nonumber \\
=& \sum_{bin_k \in Bins} E[  (\frac{1}{{|V_1|}} Bin(|V_1|, p_{k,1}) -  \frac{1}{{|V_2|}} Bin(|V_2|, p_{k,2}) )^2 ] \nonumber \\
=& \sum_{bin_k \in Bins} E[  (\frac{1}{{|V_1|}} Bin(|V_1|, p_{k,1}))^2 ] + E[ (\frac{1}{{|V_2|}} Bin(|V_2|, p_{k,2}) )^2 ] \nonumber \\ 
&- 2 E[\frac{1}{{|V_1|}} Bin(|V_1|, p_{k,1})] E[\frac{1}{{|V_2|}} Bin(|V_2|, p_{k,2})] \nonumber \\
\end{align}
}
\vspace{-3mm}

\noindent where $p_{bin k,x}$ is the probability of any node selected from $V_x$ belonging to bin $k$.  Using the same approach as with Mass Shift, we obtain a bias correction of $-\frac{ (p_{k,1})(1 - p_{k,1})  }{|V_2|} - \frac{ (p_{k,2})(1 - p_{k,2})  }{|V_2|}$.

\paragraph{Triangle Probability} 


\noindent As the name suggests, the triangle probability (TP) statistic is an approach to capturing the transitivity of the network and an alternative to traditional clustering coefficient measures.  Define the triangle probability as: 

\vspace{-3mm}
\begin{small}
\begin{align} \label{eq:tp}
TP(P^*) &= \frac{1}{Z^*} \sum_{ijk \in V^*} (\frac{1}{(\overline{p^*})})^3  p^*_{ij}  p^*_{ik}  p^*_{jk} \nonumber \\
&=\frac{1}{Z^*} \sum_{ijk \in V^*} | A^* |^3 p^*_{ij}  p^*_{ik}  p^*_{jk}
\end{align}
\end{small}
\vspace{-4mm}

\noindent The empirical version on $G$ is:

\vspace{-4mm}
\begin{small}
\begin{align}
\widehat{TP}(G) &= \frac{1}{Z} \sum_{ijk \in V} \widehat{| A |}^3 \widehat{p}_{ij}  \widehat{p}_{ik}  \widehat{p}_{jk}
\end{align}
\end{small}
\vspace{-2mm}

\noindent where $Z^* = {| V^* | \choose 3}$ and $Z = {| V | \choose 3}$.

\begin{theorem} $\widehat{TP}(G)$ is a size consistent statistic which converges to ${TP}(P^*)$. \end{theorem}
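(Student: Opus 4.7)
The proof plan is to handle the two convergence conditions separately, closely following the template established for the Probability Mass Shift and Probabilistic Degree Distance proofs. The first step will be to show $\lim_{|W| \to \infty} \widehat{TP}(G) = TP(P)$, and the second step will be to show $\lim_{|V| \to |V^*|} TP(P) = TP(P^*)$, which together establish size consistency.

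For the $|W| \to \infty$ limit, I would first argue that $\widehat{|A|} \to |A|$: since every nonzero cell of $P$ has mass at least some $\epsilon > 0$, the probability that a given nonzero cell fails to be sampled is $(1-\epsilon)^{|W|}$, which vanishes. This is the same reasoning already used in the Mass Shift proof. Then I would expand
\begin{align*}
\widehat{TP}(G) = \frac{1}{Z} \sum_{ijk \in V} \widehat{|A|}^3 \, \frac{w_{ij}}{|W|} \frac{w_{ik}}{|W|} \frac{w_{jk}}{|W|},
\end{align*}
and observe that each $w_{ij}/|W|$ converges almost surely to $p_{ij}$ by the strong law of large numbers for the multinomial, and the pairwise products converge jointly because the $w_{ij}$ are bounded. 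Substituting the limits yields $\frac{1}{Z} \sum_{ijk \in V} |A|^3 p_{ij} p_{ik} p_{jk} = TP(P)$.

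For the $|V| \to |V^*|$ limit, the key algebraic trick is the same one used in the Mass Shift argument: write $p_{ij} = p^*_{ij}/(\sum_{ij \in V} p^*_{ij})$, so that
\begin{align*}
|A| \, p_{ij} = \frac{|A|}{\sum_{ij \in V} p^*_{ij}} \, p^*_{ij} = \frac{1}{\overline{p^*} \mid V}\, p^*_{ij},
\end{align*}
where $\overline{p^*} \mid V$ is the sample mean of nonzero $p^*_{ij}$ cells restricted to $V$. Consequently
\begin{align*}
TP(P) = \frac{1}{Z} \sum_{ijk \in V} \frac{1}{(\overline{p^*} \mid V)^3} \, p^*_{ij} p^*_{ik} p^*_{jk}.
\end{align*}
As $|V| \to |V^*|$, the sample mean $\overline{p^*} \mid V \to \overline{p^*}$, so by Slutsky's theorem $1/(\overline{p^*} \mid V)^3 \to 1/(\overline{p^*})^3 = |A^*|^3$. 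Combined with $Z \to Z^*$ and the index set $V \to V^*$, this gives $TP(P) \to TP(P^*)$.

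The main obstacle I expect is in the $|V|$ limit, specifically justifying the exchange of limit and sum together with the Slutsky application when the summand is a random quantity depending on $V$. I would resolve this the same way the Mass Shift proof does: move the limit inside the sum appealing to boundedness of each $p^*_{ij} p^*_{ik} p^*_{jk}$ and the convergence in probability of the inverse sample mean, then apply Slutsky to the $1/(\overline{p^*} \mid V)^3$ factor. The $|W|$ limit is routine once $\widehat{|A|} \to |A|$ is established, so I do not anticipate difficulty there. Optionally, I would close by noting that a bias-correction analogous to the one derived for Mass Shift could be worked out by expanding $E[w_{ij}w_{ik}w_{jk}]$ using moments of the multinomial, though this is not needed for consistency itself.
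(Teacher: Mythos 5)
Your proposal is correct and follows essentially the same route as the paper's proof: the same two-step decomposition, the same $(1-\epsilon)^{|W|}$ argument for $\widehat{|A|} \rightarrow |A|$, the same substitution $|A|\,p_{ij} = p^*_{ij}/(\overline{p^*}\,|\,V)$ followed by Slutsky's theorem on the inverse sample mean, and the same optional bias correction via moments of $w_{ij}w_{ik}w_{jk}$. The only differences are cosmetic elaborations (explicitly invoking the law of large numbers and the limit-sum exchange) that the paper leaves implicit.
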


\noindent As before, if we take the limit with increasing $|W|$: 

\vspace{-3mm}
\begin{small}
\allowdisplaybreaks{
\begin{align}
\lim_{|W| \rightarrow \infty} \frac{1}{Z} \sum_{ijk \in V} \widehat{| A |}^3 \widehat{p}_{ij}  \widehat{p}_{ik}  \widehat{p}_{jk} 
&= \lim_{|W| \rightarrow \infty} \frac{1}{Z} \sum_{ijk \in V} \frac{\widehat{| A |}^3}{|W|^3} {w}_{ij}  {w}_{ik}  {w}_{jk} \nonumber \\
= \frac{1}{Z} \sum_{ijk \in V} {{| A |}^3} {p}_{ij}  {p}_{ik}  {p}_{jk} &= TP(P)
\end{align}
}
\end{small}
\vspace{-2mm}

\noindent Now if we take the limit as $|V| \rightarrow |V^*|$:

\vspace{-3mm}
\begin{small}
\allowdisplaybreaks{
\begin{align}
&\lim_{|V| \rightarrow |V^*|} \frac{1}{Z} \sum_{ijk \in V} {{| A |}^3} {p}_{ij}  {p}_{ik}  {p}_{jk} \nonumber \\
&= \lim_{|V| \rightarrow |V^*|} \frac{1}{Z} \sum_{ijk \in V} \frac{{| A |}^3}{(\sum_{ij \in V} p^*_{ij} )^3} p^*_{ij}  p^*_{ik}  p^*_{jk} \nonumber \\
&= \lim_{|V| \rightarrow |V^*|} \frac{1}{Z} \sum_{ijk \in V} \frac{1}{(\bar{p^*} | V )^3} p^*_{ij}  p^*_{ik}  p^*_{jk} 
\end{align}
}
\end{small}
\vspace{-2mm}

\noindent Similar to the approach before, $ \frac{1}{(\bar{p^*} | V )^3}$ converges to $\frac{1}{(\bar{p^*}  )^3}$ by Slutsky's Theorem, so the final limit is

\vspace{-3mm}
\begin{small}
\begin{align}
&= \frac{1}{Z^*} \sum_{ijk \in V^*} \frac{1}{(\bar{p^*}  )^3} p^*_{ij}  p^*_{ik}  p^*_{jk} = TP(P^*) \square
\end{align}
\end{small}
\vspace{-2mm}

As with the Mass Shift, let us take the expectation w.r.t. $|W|$ and see if we can improve the rate of convergence with a bias correction:

\vspace{-3mm}
\begin{small}
\begin{align}
E[ \widehat{TP}(G) ]  &= E \biggr[  \frac{1}{Z} \sum_{ijk \in V} (\widehat{| A |})^3  \widehat{p}_{ij}  \widehat{p}_{ik}  \widehat{p}_{jk} \biggr] \nonumber \\
&=   \frac{1}{Z} \sum_{ijk \in V} E \biggr[ (\frac{\widehat{| A |}}{| W |})^3  {w}_{ij}  {w}_{ik}  {w}_{jk} \biggr]
\end{align}
\end{small}
\vspace{-2mm}

\noindent As before, let us assume that we have enough edge samples so that $\widehat{| A |} = | A |$:

\vspace{-3mm}
\begin{small}
\begin{align}
  \frac{1}{Z} \sum_{ijk \in V}  E\biggr[ (\frac{| A |}{| W |})^3   {w}_{ij}  {w}_{ik}  {w}_{jk} \biggr]   
\end{align}
\end{small}
\vspace{-2mm}



\noindent The quantity $E[  \frac{1}{| W |^3} {w}_{ij}  {w}_{ik}  {w}_{jk} ]$ can be calculated with

\vspace{-3mm}
\begin{small}
\allowdisplaybreaks{
\begin{align*}
E\biggr[ \frac{1}{| W |^3}  &w_{ij} w_{ik} w_{jk} \biggr]  = \frac{1}{| W |^3} E[w_{ij} w_{ik} w_{jk}]  \nonumber \\
=& \frac{1}{| W |^3}  E[w_{ij} w_{ik}] E[w_{jk}] - cov(w_{ij}w_{ik}, w_{jk}) \\
=& \frac{1}{| W |^3}  E[w_{ij}] E[w_{ik}] E[w_{jk}] \nonumber \\ &- E[w_{jk}] cov(w_{ij},w_{ik}) - cov(w_{ij} w_{ik}, w_{jk}) \\
=& \frac{1}{| W |^3}  | W |^3 p_{ij} p_{ik} p_{jk} + | W |^2 p_{ij} p_{ik} p_{jk} - cov(w_{ij} w_{ik}, w_{jk}) 
\end{align*}
}
\end{small}
\vspace{-2mm}


\noindent The covariance term can be expanded with the formula for products of random variables \cite{bohrnstedt}:

\vspace{-3mm}
\begin{small}
\allowdisplaybreaks{
\begin{align*}
 cov(w_{ij} \cdot w_{ik},&w_{jk}) = \nonumber \\ & E[w_{ij}]cov(w_{ik},w_{jk}) + E[w_{ik}]cov(w_{ij}w_{jk}) \\ &+ E[(w_{ij} - E[w_{ij}])(w_{ik} - E[w_{ik}])(w_{jk} - E[w_{jk}]) ] \\
=& -2 | W |^2p_{ij}p_{ik}p_{jk} \\&+E\biggr[ w_{ij}w_{ik}w_{jk} - E[w_{ij}]w_{ik}w_{jk} \nonumber \\ &- E[w_{ik}]w_{ij}w_{jk}  - E[w_{jk}]w_{ik}w_{ij} \\ & + E[w_{ij}]E[w_{ik}]w_{jk} + E[w_{jk}]E[w_{ik}]w_{ij}  \\& + E[w_{ij}]E[w_{jk}]w_{ik}- E[w_{ij}]E[w_{jk}]E[w_{ik}] \biggr] \\
=& -2| W |^2p_{ij}p_{ik}p_{jk} + E[w_{ij} w_{ik} w_{jk}] \\&- | W |p_{ij}E[w_{ik}w_{jk}] - | W |p_{ik}E[w_{ij}w_{jk}] \nonumber \\ &- | W |p_{jk}E[w_{ik}w_{ij}] \\&+ 3| W |^3p_{ij}p_{ik}p_{jk} - | W |^3p_{ij}p_{ik}p_{jk} \\
=& -2| W |^2p_{ij}p_{ik}p_{jk} \nonumber \\ & + E[w_{ij}w_{ik}w_{jk}] - 3| W |^3p_{ij}p_{ik}p_{jk} \\&+ | W |p_{ij}cov(w_{ik},w_{jk}) \nonumber \\ & + | W |p_{ik}cov(w_{ij},w_{jk}) + | W |p_{jk}cov(w_{ik},w_{ij}) \\&+ 3| W |^3p_{ij}p_{ik}p_{jk} - | W |^3p_{ij}p_{ik}p_{jk} \\
=& -2| W |^2p_{ij}p_{ik}p_{jk} + E[w_{ij}w_{ik}w_{jk}]  \nonumber \\ &- 3| W |^2p_{ij}p_{ik}p_{jk} - | W |^3p_{ij}p_{ik}p_{jk} \\
=& -5| W |^2p_{ij}p_{ik}p_{jk} + E[w_{ij}w_{ik}w_{jk}] - | W |^3 p_{ij}p_{ik}p_{jk} 
\end{align*}
}
\vspace{-2mm}
\end{small}


\noindent By plugging the covariance into the original equation we obtain:


\vspace{-3mm}
\allowdisplaybreaks{
\begin{align*}
E\biggr[ \frac{1}{| W |^3}  w_{ij} w_{ik} w_{jk} \biggr]  
=&  \frac{1}{| W |^3} ( | W |^3p_{ij} p_{ik} p_{jk} + | W |^2 p_{ij} p_{ik} p_{jk}  \\
+ 5| W |^2p_{ij}p_{ik}p_{jk} &- E[w_{ij}w_{ik} w_{jk}]
+ | W |^3p_{ij}p_{ik}p_{jk}  )\\
2 E[ \frac{1}{| W |^3}  w_{ij} w_{ik} w_{jk} ] 
=&  \frac{1}{| W |^3} ( 2| W |^3p_{ij} p_{ik} p_{jk} + 6| W |^2 p_{ij} p_{ik} p_{jk} ) \\
E[ \frac{1}{| W |^3}  w_{ij} w_{ik} w_{jk} ] =&  \frac{1}{| W |^3}(  | W |^3p_{ij} p_{ik} p_{jk} + 3| W |^2 p_{ij} p_{ik} p_{jk} )\\
=&  p_{ij} p_{ik} p_{jk}  + \frac{3}{| W |}{p}_{ij}{p}_{ik}{p}_{jk}
\end{align*}
}
\vspace{-2mm}


\noindent So the bias term is $  \frac{3}{| W |}{p}_{ij}{p}_{ik}{p}_{jk}$.  If we subtract the empirical version of this term to compensate, we obtain the corrected empirical Triangle Probability:

\vspace{-3mm}
\begin{small}
\begin{align*}
\widehat{TP}(G) &= \frac{1}{Z} \sum_{ijk \in V} \widehat{| A |}^3 ( \widehat{p}_{ij}  \widehat{p}_{ik}  \widehat{p}_{jk} - \frac{3}{| W |}\widehat{p}_{ij}\widehat{p}_{ik}\widehat{p}_{jk} )
\end{align*}
\end{small}
\vspace{-2mm}

\section{Anomaly Detection Process}

In order to perform the anomaly detection on a dynamic network the collection of messages need to first be converted into a sequence of graph instances.  As each message consists of a pair of nodes and an associated timestamp, after picking a time step width $\Delta$ the graph at each sequential time step $t$ is created by adding all messages falling between $t$ and $t+\Delta$ to matrix $W_t$, producing a sequence of graphs.  The algorithm is described in Figure~\ref{alg:graphprocess}.   

Then, a statistic value needs to be calculated at every graph instance in the stream.  As the length of the stream is usually short compared to the size of the graphs, the computational complexity depends on the cost of calculating the network statistics on the largest graph instance.  In order to calculate our consistent statistics $\widehat{P}_t$ must be estimated, which is easily obtained by normalizing the observed messages $W$.  Then the network statistic scores are calculated at each time step.  This generates a set of standard time series which can be analyzed with traditional time series anomaly detection techniques.

Selection of a proper $\Delta$ time step width is crucial.  Due to the nature of size-consistent statistics larger values of $\Delta$ will reduce the error associated with statistical bias, but larger values also reduce the granularity of the detection algorithm making it harder to pinpoint the exact time that the anomaly occurred.

\begin{figure}[t!]
\begin{algorithmic}
\STATE $GraphProcess(messages, \Delta):$ 
\\ \hrulefill
\STATE $t_{start} = 0, t_{end} = \Delta$
\WHILE{$t_{start} <$ last timestamp in $messages$}
\STATE $W = [], V = \{ \}$
\FOR{$m_{ij,t}$ in $messages$}
\IF{$t_{start}<  t < t_{end}$}
\IF{$i$ not in $V$}
\STATE add $i$ to $V$, $w_{i,x} = 0$, $w_{x,i} = 0$
\ENDIF
\IF{$j$ not in $V$}
\STATE add $j$ to $V$, $w_{j,x} = 0$, $w_{x,j} = 0$
\ENDIF
\STATE $w_{ij}$ ++
\ENDIF
\ENDFOR
\STATE return $G_{t_{start}} = \{ W, V \}$
\STATE return $\widehat{P}_{t_{start}} = W / | W |$  
\STATE $t_{start} += \Delta, t_{end} += \Delta$
\ENDWHILE
\\ \hrulefill
\end{algorithmic}
\vspace{3mm}
\caption{Creation of the dynamic graph sequence from message stream using time step width $\Delta$.}
\label{alg:graphprocess}


\end{figure}%

Now that we have a stream of graphs we can perform the anomaly detection process.  For every time step $t$ the graph at $G_t$ becomes the test graph and the graphs $G_{t-1}, G_{t-2}... G_{t-k}$ become the null distribution examples (here we use $k=50$).  By applying $\widehat{S}_k$ to each graph we obtain both the test point and the null distribution.  Given a certain p-value $\alpha$, we then set the critical points to be the values which reject the most extreme $\alpha/2$ values from the null distribution on both sides.  If the test point $\widehat{S}_k(G_t)$ falls outside of these critical points we can reject the null hypothesis and raise an anomaly flag.  This detection algorithm is described in \ref{alg:anomalydetection}.


\subsection{Smoothing}

Rather than calculating delta statistics using a weighted matrix $W_{t-1}$ which contains only the communications of the immediately prior time step, an aggregate of prior time steps $W_{t-k} ... W_{t-1}$ can be used by simply calculating the average weighted matrix $\overline{W}$ from $W_{t-k} ... W_{t-1}$ and then calculating $S_k(W_t, \overline{W})$ as the delta statistic.  The advantage of this approach is that it measures the distance of the current behavior from the average behavior seen in a range of recent past timesteps, and as such is less susceptible to flagging time $t$ due to an outlier in $W_{t-1}$.  

\begin{figure}[h!]
\begin{algorithmic}
\STATE $AnomalyDetection(G_1, G_2, ... G_t, S_k, \alpha):$ 
\\ \hrulefill
\FOR{$i$ in $50...t$ }
\FOR{$j$ in $1...i-1$}
\STATE  Add $S_k(G_j)$ to $NullDistr$
\ENDFOR
\STATE $CriticalPoints = CalcCPs(NullDistr,\alpha)$
\IF{$S_k(G_i)$ outside $CriticalPoints$ }
\STATE Generate Anomaly Flag at time $i$
\ENDIF
\ENDFOR
\\ \hrulefill 
\\
\end{algorithmic}
\vspace*{2mm}
\caption{ Anomaly detection procedure for a graph stream $\{ G_1 ... G_t \}$, graph statistic $S_k$ and p-value $\alpha$.}
\label{alg:anomalydetection}


\end{figure}%

Another smoothing option is to use a moving window approach with overlapping time steps, i.e. calculate $W_t, W_{t+\delta}, W_{t+2*\delta}...$ where $W_{t+\delta}$ starts at time $t+\delta$ and ends at time $t+\delta+\Delta$.  This effectively allows for a larger time step without sacrificing granularity, as it should be straightforward to find which $\delta$-wide time span that an anomaly occurred in.

A prior edge weight value for the cells of $W_t$ is another option.  Instead of using $W_t$, one can use $W'_t$ where $w'_{ij,t} = w_{ij,t} + c$ for some value of $c$.  In general, $c$ should be small, usually less than 1, as this prior value adds $c |V_t|^2$ total weight to the matrix and $c |V_t|^2 << |W_t|$ in the ideal case.  Larger values of $c$ can easily wash out the actual network behavior leading all of the graph examples to seem uniform.


So far the $P$ matrices have been estimated with a frequentist approach using the observed message frequencies to estimate the probabilities.  If one desires to assign a prior distribution to the $P$ matrix, a Bayesian approach is easily implemented by choosing a Beta distribution for each cell in $P$ and using them as conjugate priors for normalized binomial distributions using the observed message frequencies as the evidence.  The reason we did not utilize this approach is because it is difficult to choose proper prior distributions: due to the sparsity of most networks the vast majority of cells in $P$ are zero.  Similar to the prior edge weight approach assigning a nonzero prior to all cells in $P$ tends to dilute the network, but deciding which cells to assign a zero prior probability is nontrivial.  Because 0 is the natural value for most pairs of nodes in the network trying to smooth by assigning a non-zero prior to all these node pairs is detrimental.

\subsection{Complexity Analysis}

Statistics like probability mass and probabilistic degree can be calculated at each step in $O( |A_t|)$ time, making their overall complexity $O( |A_t| t)$, where $|A_t|$ is the number of nonzero elements in $W_t$.  Triangle probability, on the other hand, is more expensive as some triangle counting algorithm must be applied.  The fastest counting algorithms typically run in $O(|V|^k)$ time where $2 < k < 3$, making the overall complexity $O(|V|^k  t)$ for the whole stream.  However, if we make the assumption that the maximum number of neighbors of any node is bounded by $n_{max}$, we can approximate the triangle count with $O(|V|  n_{max}^2  t)$.  Note that any other statistic-based approach such as Netsimile that utilizes triangle count or clustering coefficient must make the same approximations in order to run in linear time.  

\section{Experiments}

Now that we have established the properties of size-consistent and -inconsistent statistics we will show the tangible effects of these statistical properties using synthetic, semi-synthetic, and real-world datasets.  The objective for the synthetic and semi-synthetic experiments is to maximize the true positive detection rate (where a true positive is flagging a graph generated with anomalous parameters) and minimize the false positive rate (where a false positive is flagging a graph with unusual edge count or node count but generated with normal parameters).  The real-world experiments will be an exploratory analysis, demonstrating how to discover and explain events in a real-world dynamic graph.  

We will compare each of the consistent statistics to the conventional one they were intended to replace: graph edit distance for probability mass shift, degree distribution difference to probabilistic degree difference, and Barrat weighted clustering to triangle probability.  In addition we will also compare the performance of the consistent statistics to Netsimile and Deltacon.  Netsimile is an aggregate statistic which attempts to measure graph differences in a variety of dimensions and as such can be applied to find many types of anomalies.  Deltacon on the other hand measures graph differences through the distances between nodes in the graphs and attempts to find anomalies of an entirely different type than the consistent statistics.


\subsection{Synthetic Data Experiments} 

In order to create data with specific known properties we used generative graph models.  There are four types of graphs generated: 
\begin{enumerate}
\item normal graph examples which are used to create the null distribution for a hypothesis test.
\item edge false positive graphs which are generated using the same model parameters but with additional sampled edges. 
\item node false positive graphs which are generated using the same model parameters but with additional sampled nodes.
\item true positive graphs which are generated with a normal number of edges and nodes but different model parameters.  
\end{enumerate}
The first three sets of graphs are created with the same generative model but with varying edges and nodes in the output graphs while the last set uses a different generative model.  An illustration of the null distribution, false positive distribution with additional edges, and true positive distribution is shown in figure \ref{synthetic-generation}.

First a set of normal graph examples are created using the process described in 1. which will form the null distribution graphs.  A statistic is calculated for each graph example and given an $\alpha$ value the two critical points are found.  Then a false positive graph set is created using either 2. or 3. and a true positive graph set created using 4. and statistics calculated for each.  The percentage of false positive graphs outside the critical points becomes the \textit{false positive rate} while the percentage of true positive graphs becomes the \textit{true positive rate}.  By varying the value of $\alpha$ and plotting the true positive vs. false positive rate for each value we can create an ROC curve showing the tradeoff of true anomalous instances found versus falsely flagged instances.


The circle on the ROC curves represents selecting a p-value of 0.05.  The number of edges in the normal and true positive graphs ranges from 300k-400k while the edge false positive graphs have 400k-500k, and the number of nodes in the normal and true positive graphs is 25k while the node false positive graphs have 30k.  An equal number of graphs of each type were generated.  For a statistic that detects the model changes reasonably well we expect the false positive distribution to be very close to the null distribution, while the anomalous distribution is significantly different.  

Ideal performance on the ROC curve would be a horizontal line across the entire top of the plot: this would indicate perfect performance in detecting true positive graphs even at low p-value, and a false positive rate that is low until the p-value is increased.  For comparison a diagonal line with a slope of 1 would indicate random performance where each false positive and true positive graph is flagged as anomalous using an unbiased coin flip.  Any statistic which has a curve below this line has more sensitivity to the additional edges or nodes of the false positive graphs than to the model changes of the true positive graphs.  Some of the statistics evaluated even have a vertical line at the right of the plot: this indicates that no matter the p-value picked all false positive graphs are being flagged but not all true positive graphs are flagged; this is the worst possible space for the statistic to be in.

To evaluate delta statistics, graphs were generated in pairs, the first being from the normal/false positive/true positive model while the other always from the normal model, and the delta statistic calculated between them.

\begin{figure}[!h]
\begin{center}
\includegraphics[width=.79\columnwidth,natwidth=610,natheight=642,trim={0mm 10mm 0mm 0mm},clip]{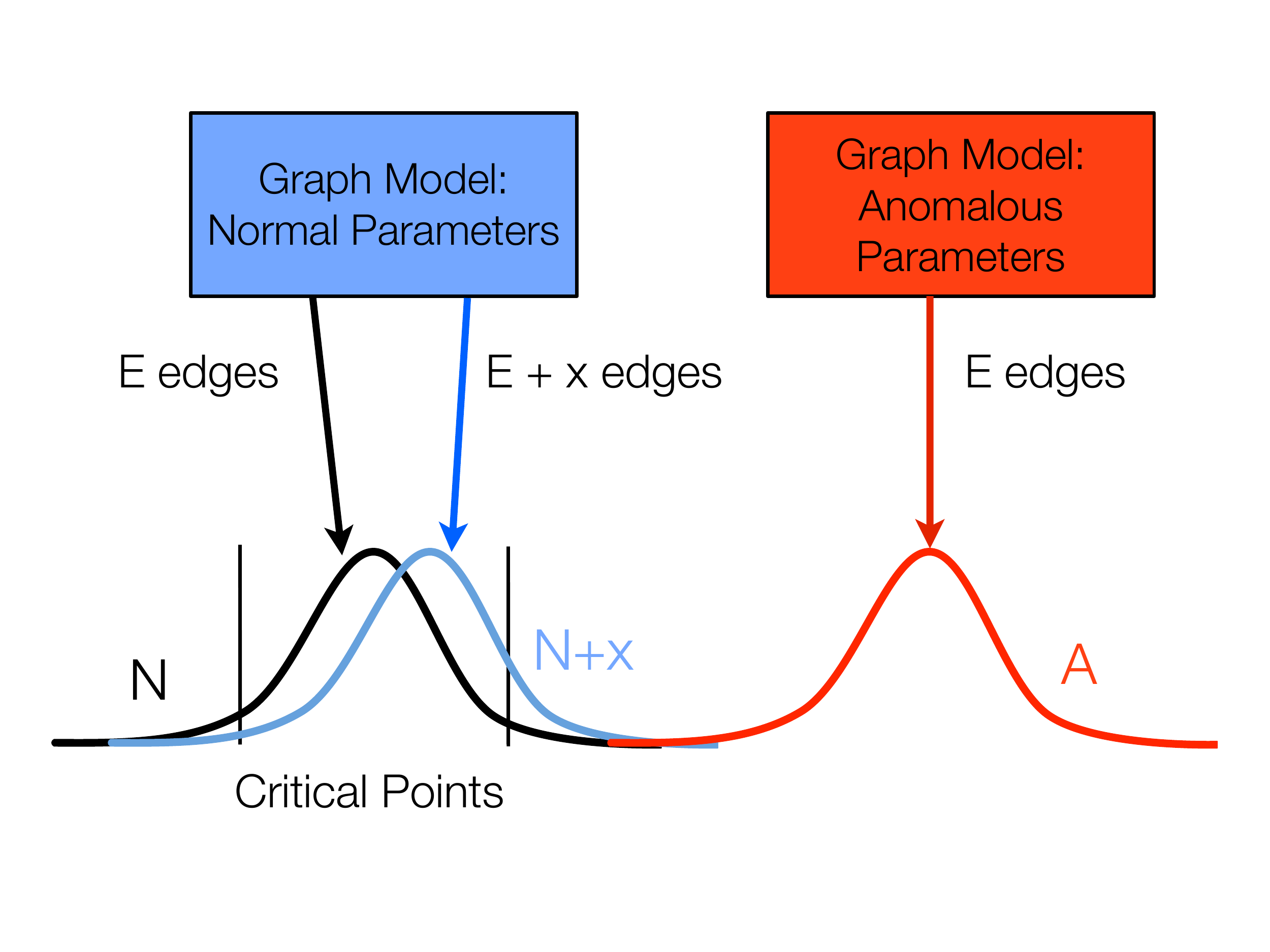}
\caption{Diagram of synthetic experiments and the three sets of generated graphs.  }
\label{synthetic-generation}
\end{center}
\end{figure}


\begin{figure}[h!]
\begin{algorithmic}
\STATE $Synthetic(P_N, P_A, | W |, | V |, \delta, S_k, \alpha):$ 
\\ \hrulefill
\FOR{$i$ in $1...50$ }
\STATE $NormalGraphs.add(GenerateGraph(|W|,|V|,P_N)$
\STATE $FalsePosGraphs.add(GenerateGraph(|W|+\delta,|V|,P_N)$
\STATE $AnomalousGraphs.add(GenerateGraph(|W|,|V|,P_A)$
\ENDFOR
\STATE $NullDistr = \{ S_k(G), G \in NormalGraphs \}$
\STATE $CriticalPoints = CalcCPs(NullDistr,\alpha)$
\FOR{$G$ in $False Positive Graphs$}
\IF{$S_k(G)$ outside $CriticalPoints$ }
\STATE False Positives ++
\ENDIF
\ENDFOR
\FOR{$G$ in $Anomalous Graphs$}
\IF{$S_k(G)$ outside $CriticalPoints$ }
\STATE True Positives ++
\ENDIF
\ENDFOR
\\ \hrulefill
\end{algorithmic}
\vspace{3mm}
\caption{Synthetic data experimental procedure for statistic $S_k$ using normal probability matrix $P_N$, anomalous probability matrix $P_A$, and $\Delta$ additional edges in False Positive graphs.}
\label{fig:synthgeneration}


\end{figure}%



\begin{figure*}[h!]
\centering
\begin{tabular}{c c c}
\vspace{-3mm}
 \subfigure[]{\includegraphics[trim={2mm 2mm 2mm 2mm},clip,width=.30\columnwidth,natwidth=610,natheight=642]{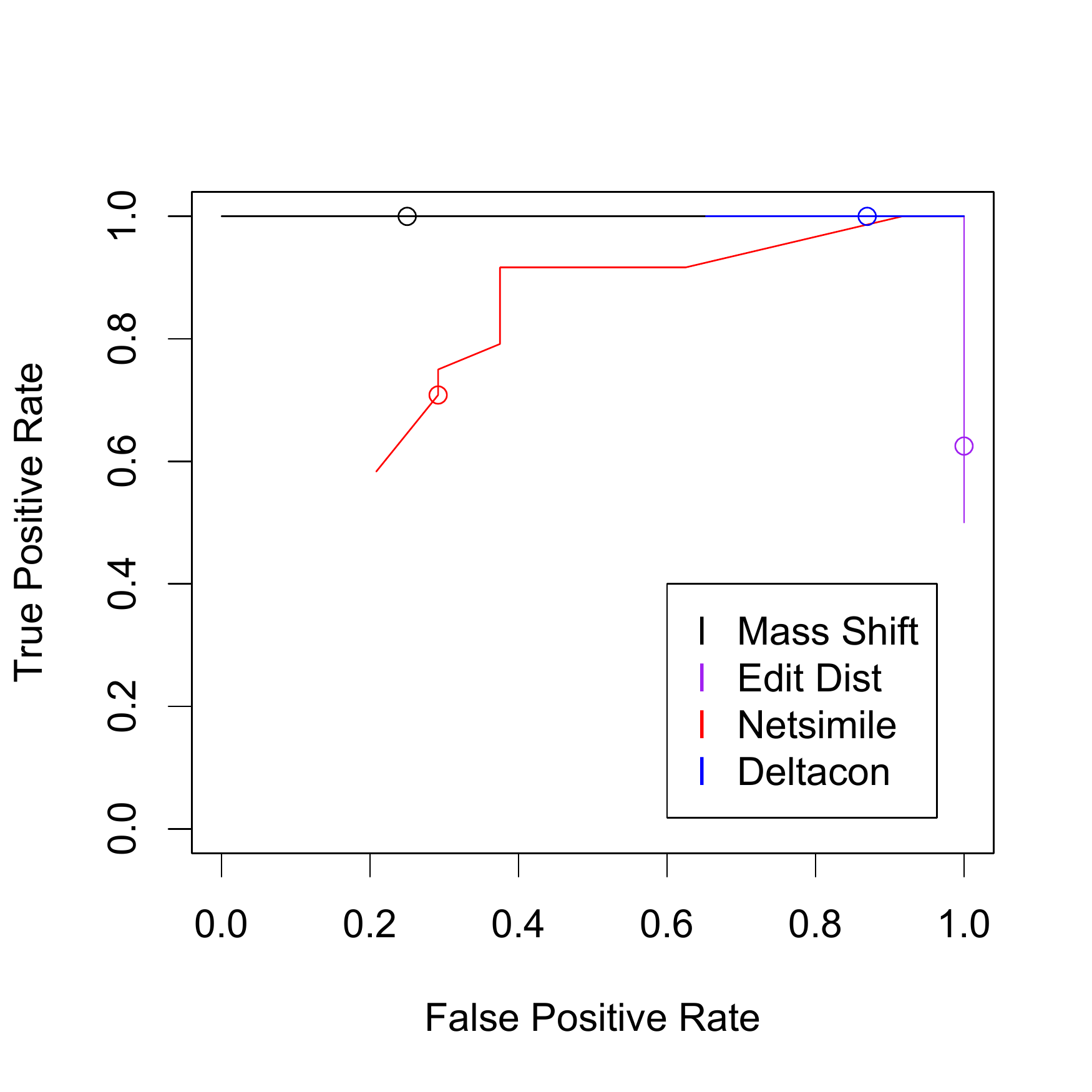}} & \subfigure[]{\includegraphics[width=.30\columnwidth,natwidth=610,natheight=642]{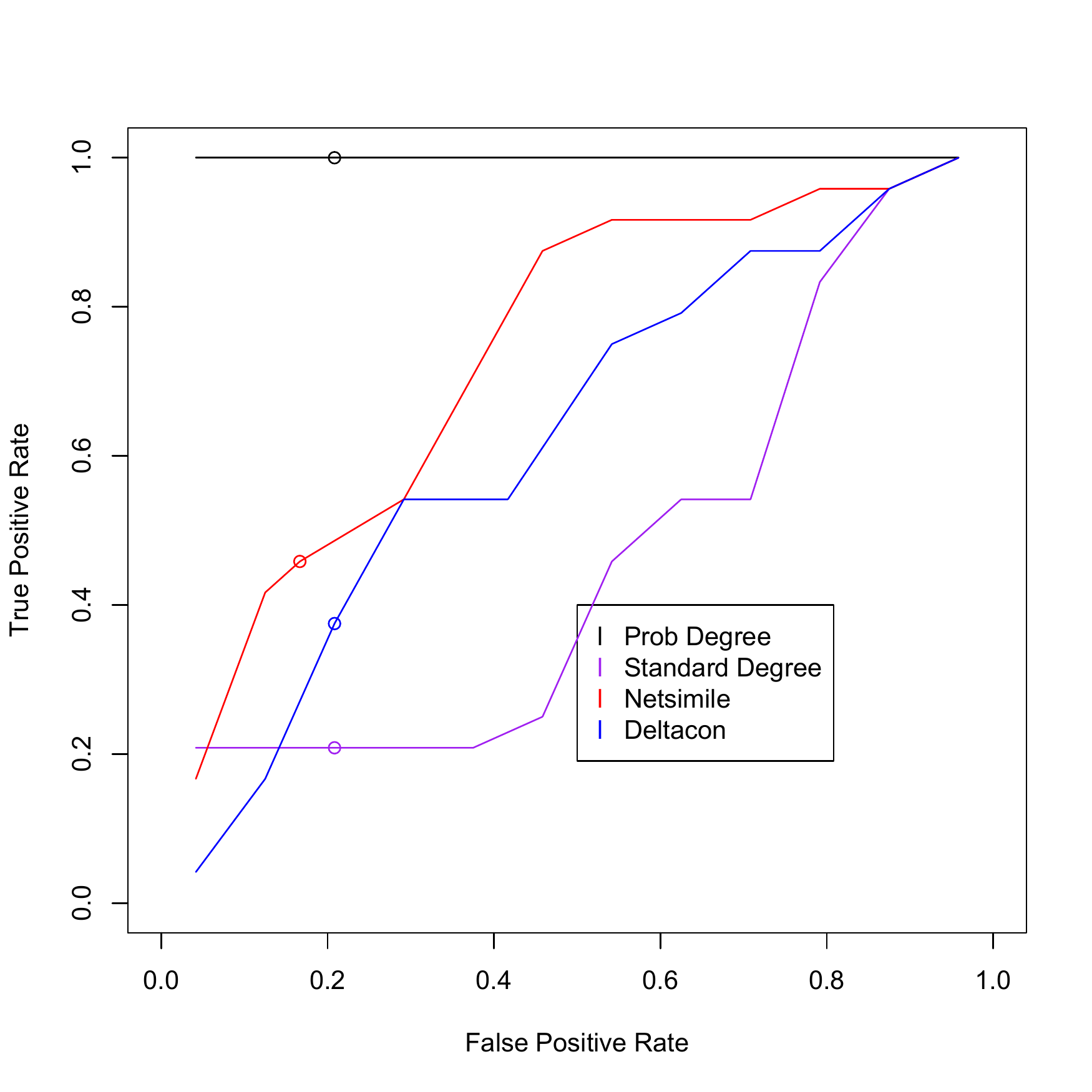}} & 
 \subfigure[]{\includegraphics[width=.30\columnwidth,natwidth=610,natheight=642]{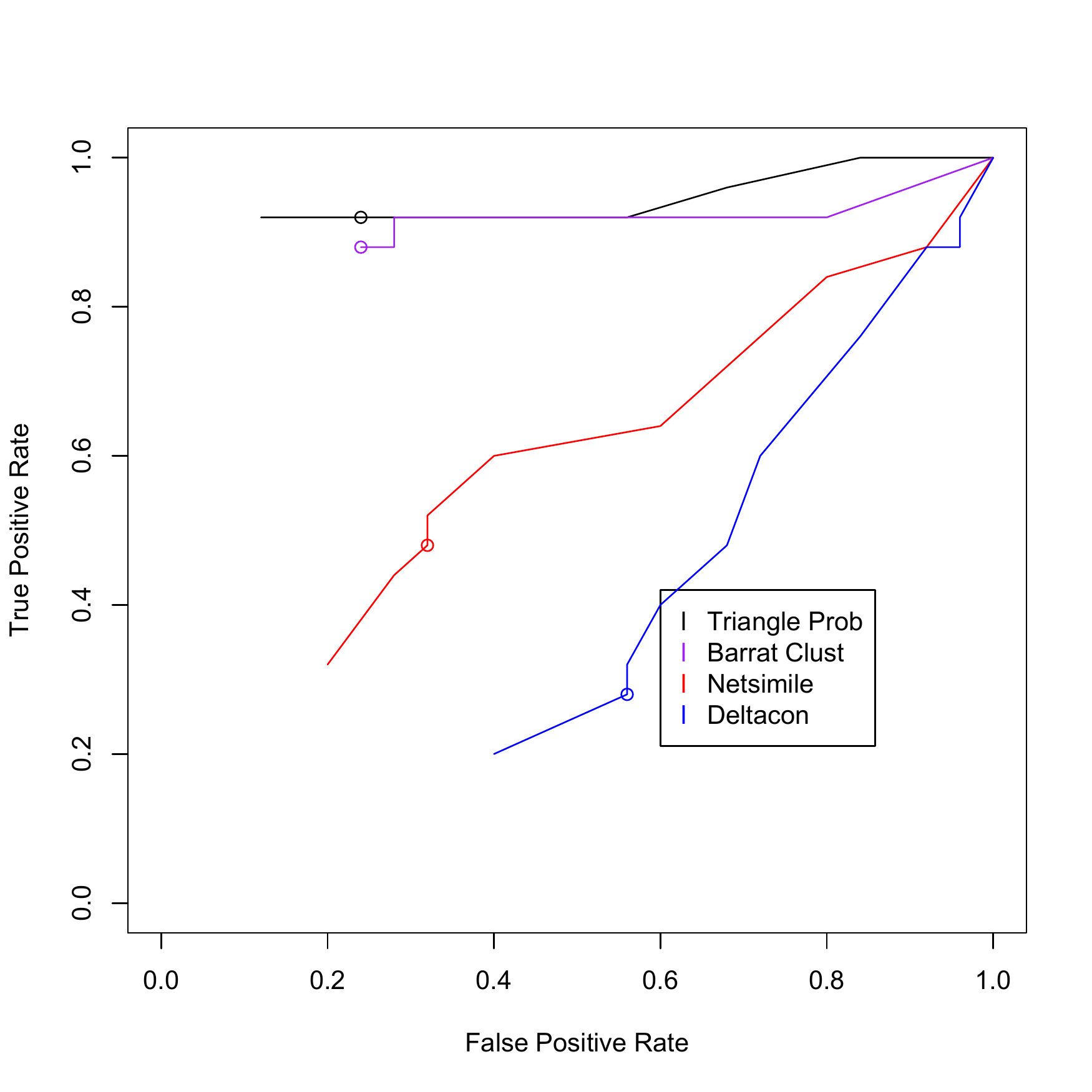}} \\ 
\subfigure[]{\includegraphics[width=.30\columnwidth,natwidth=610,natheight=642]{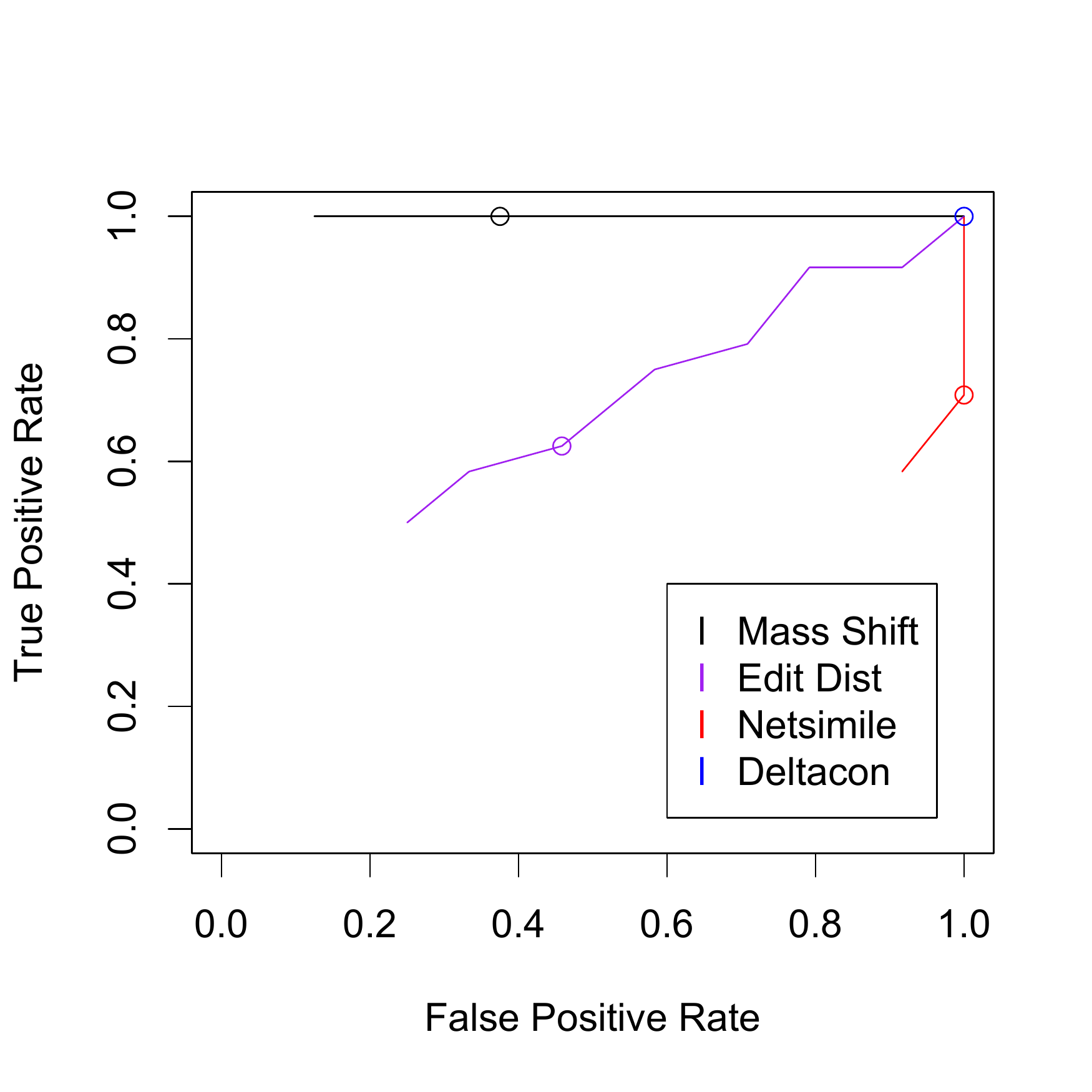}} & \subfigure[]{\includegraphics[width=.30\columnwidth,natwidth=610,natheight=642]{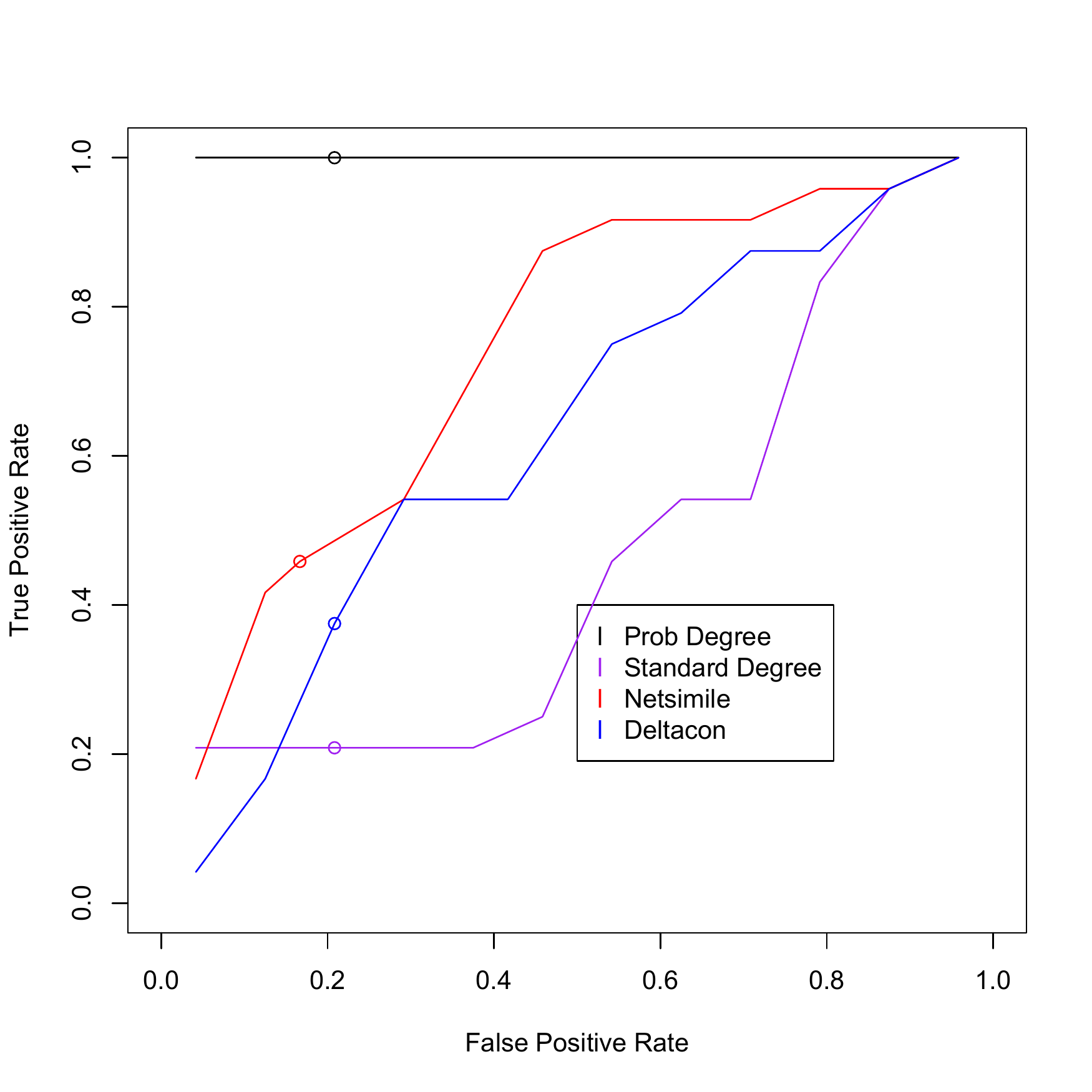}} & 
 \subfigure[]{\includegraphics[width=.30\columnwidth,natwidth=610,natheight=642]{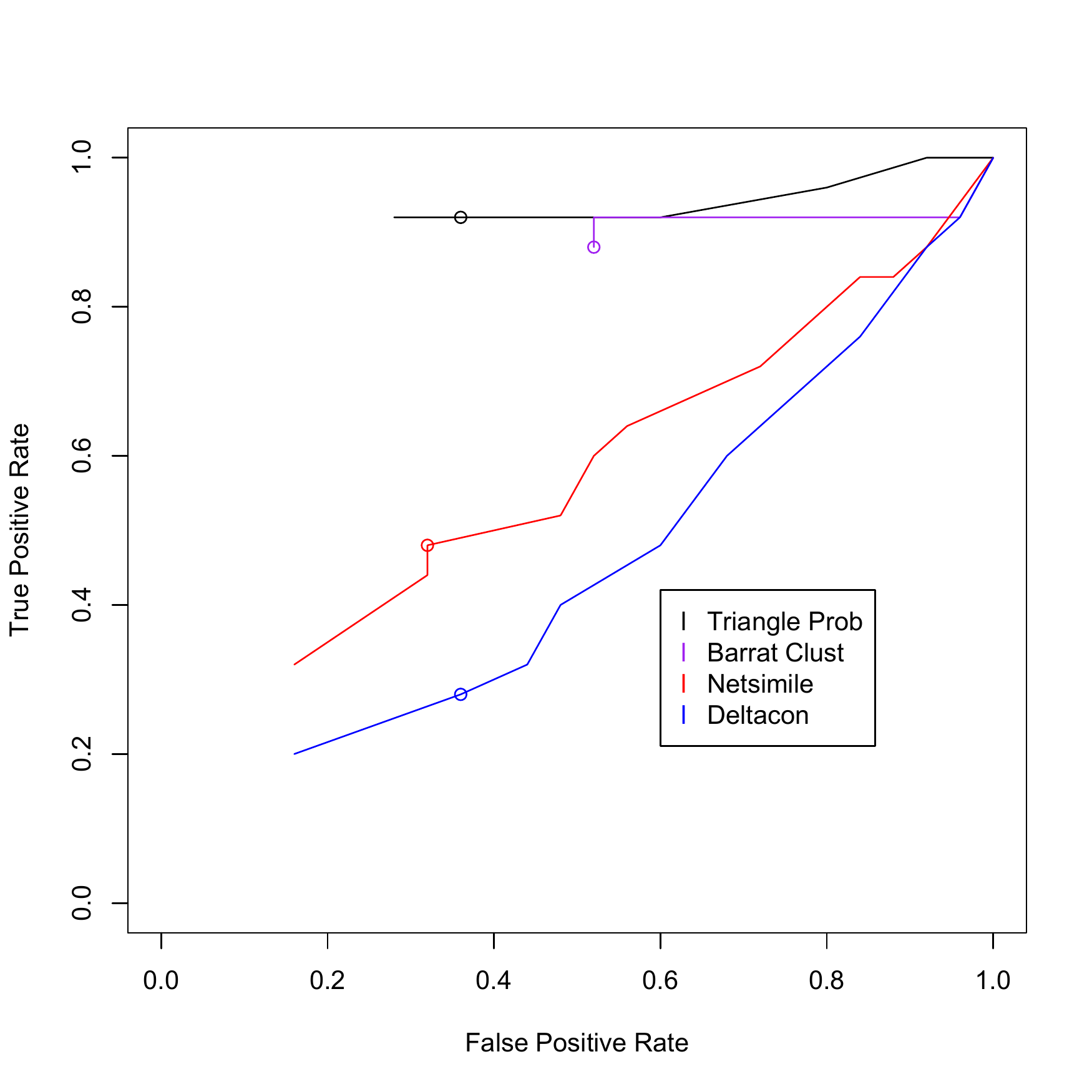}} \\ 
\subfigure[]{\includegraphics[width=.30\columnwidth,natwidth=610,natheight=642]{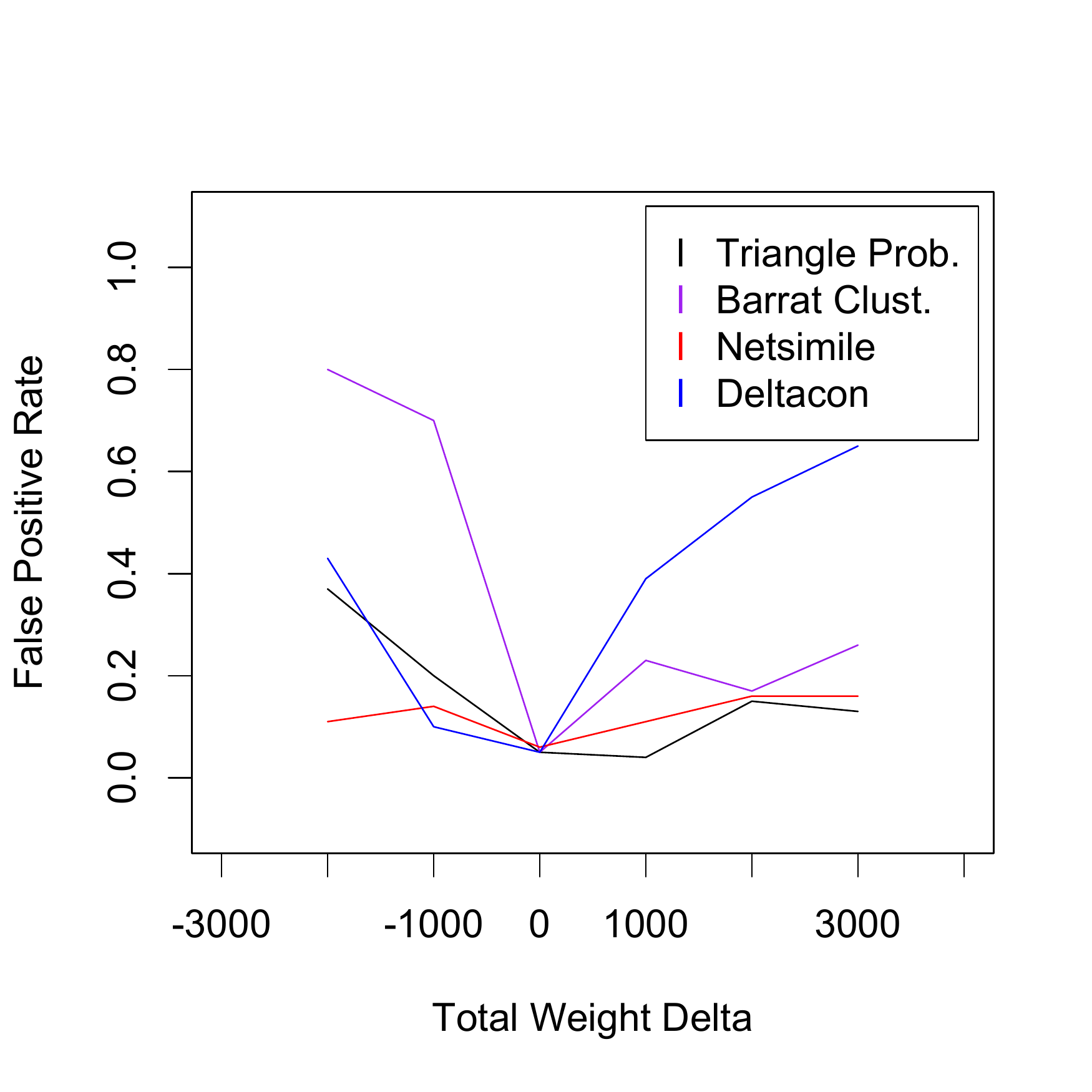}} & \subfigure[]{\includegraphics[width=.30\columnwidth,natwidth=610,natheight=642]{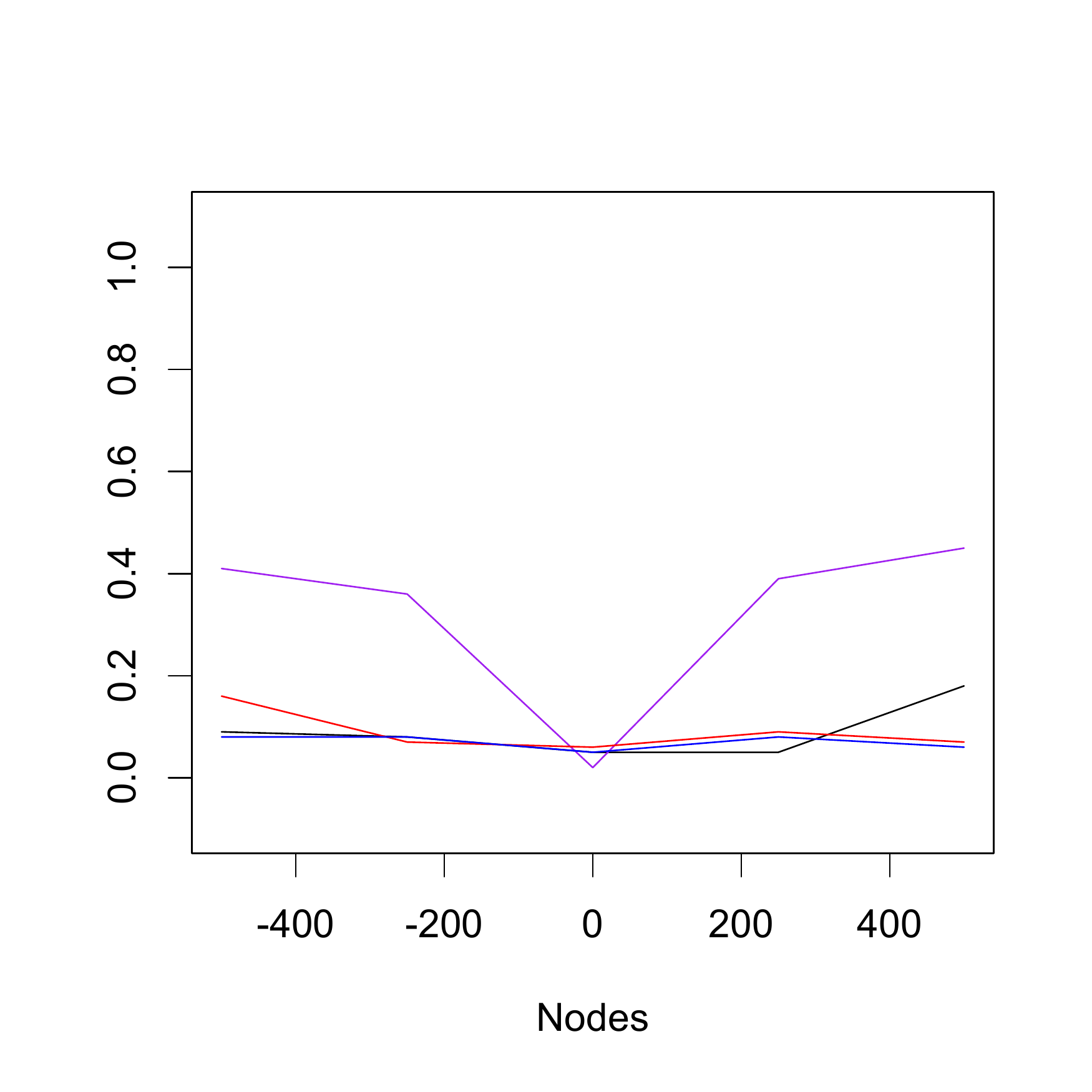}} &  \subfigure[]{\includegraphics[width=.30\columnwidth,natwidth=610,natheight=642]{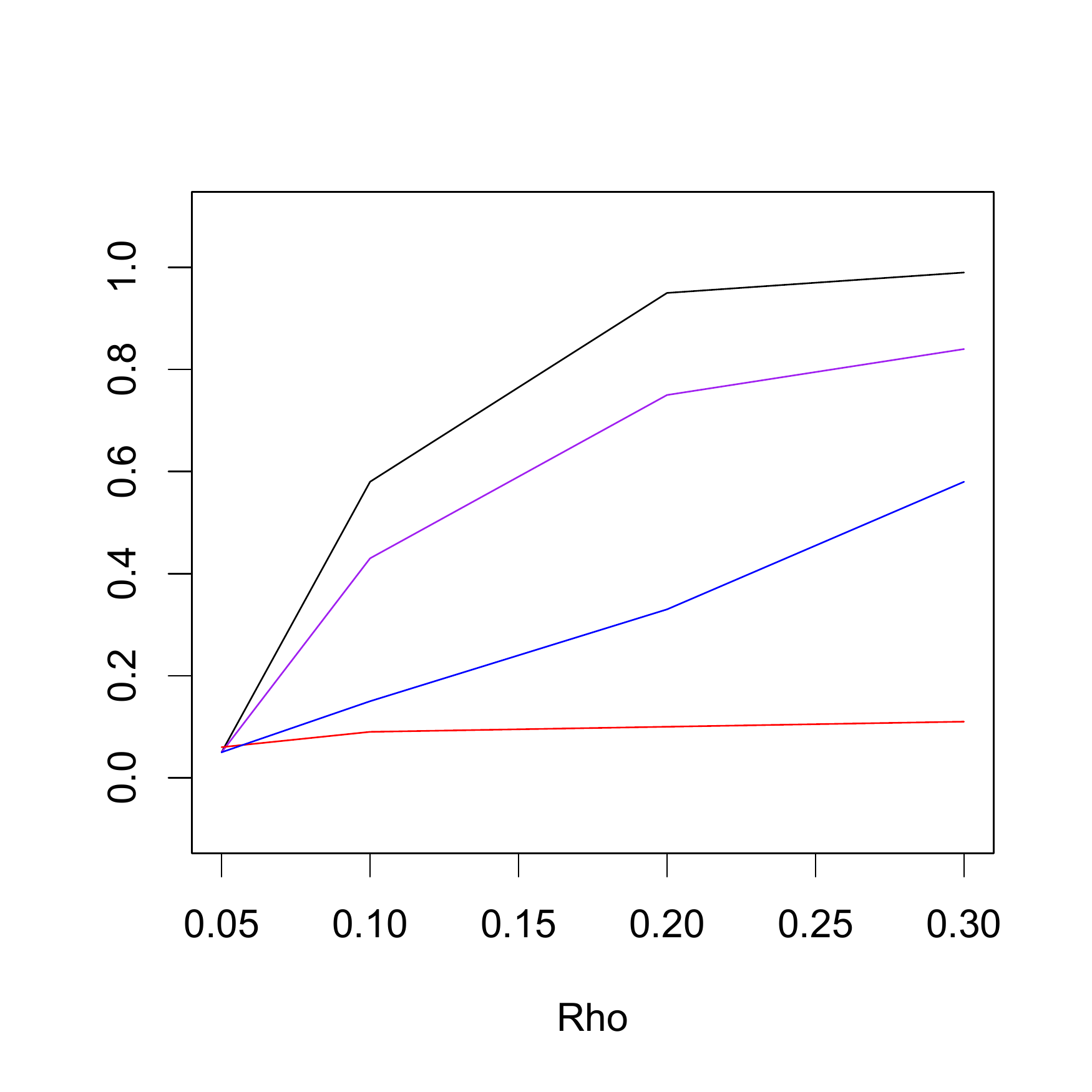}} \\  \vspace{-2mm}
\end{tabular}
\caption{(a),(b),(c): ROC curves with false positives due to extra edges.  (d),(e),(f): ROC curves with false positives due to extra nodes.  (g) false positive rate vs. edges, (h) false positive rate vs. nodes, (i) true positive rate.}
\label{fig:ROC}
\end{figure*}

To test the performance of graph change statistics like graph edit distance and probability mass shift, synthetic data was generated using a mixture model that either samples edges from a static normal graph instance from 1. or from an anomalous graph from 4.  The initial graph has a power-law degree distribution with an exponent of 2.0 and was generated using a Chung-Lu sampling process while the alternative graph was generated with an Erdos-Renyii graph model.  The normal model draws edge samples only from the initial graph, while the alternative model draws 5\% of the edges from the alternative graph.  The performance of these statistics is shown in \ref{fig:ROC} (a) and (d).  Mass Shift strictly dominates the other statistics as either the edges or nodes changes.  

To determine ability to detect degree distribution changes synthetic graphs were also generated using a Chung-Lu process, however anomalous graph instances were generated by altering the exponent parameter of the power law determining degree distribution rather than using a mixture model.  The normal graph instances have a power-law degree distribution with an exponent of 2.0 while the true positive graph have an exponent of 1.8.  The performance is shown in figure \ref{fig:ROC} (b) and (e).





The transitivity experiments were done by creating graphs with a varying amount of triangle closures.  To create each graph, a KPGM model with a seed of 
\begin{small}
$\left[ \begin{tabular}{c c} 0.4 & 0.3 \\ 0.3 & 0 \end{tabular}  \right]$
\end{small}
is used to sample an initial edge set.  These parameters were selected to create a graph with a branching pattern with few natural triangles.  Then, with probability $\rho$ each edge is removed and replaced with a triangle closure by performing a random walk (identical to the technique used in the Transitive Chung Lu model \cite{pfeiffer}).  The normal graphs were generated with a rho of 0.05 while the alternative graphs had a rho of 0.055.  The results are in figures (c) and (f).  

Figures \ref{fig:ROC} (g)-(i) shows the effect of changing (g) edges, (h) nodes, or (i) model parameter on transitivity statistics.  The zero point on the false positive plots compares graphs of the same size and model which will produce false positives at the p-value rate, while deviating in either direction introduces more false positives.  The power in figure (i) depends on the deviation in the model parameter.

\subsection{Semi-Synthetic Data Experiments}



\begin{figure}[!h]
\begin{center}
\includegraphics[width=.79\columnwidth,natwidth=610,natheight=642]{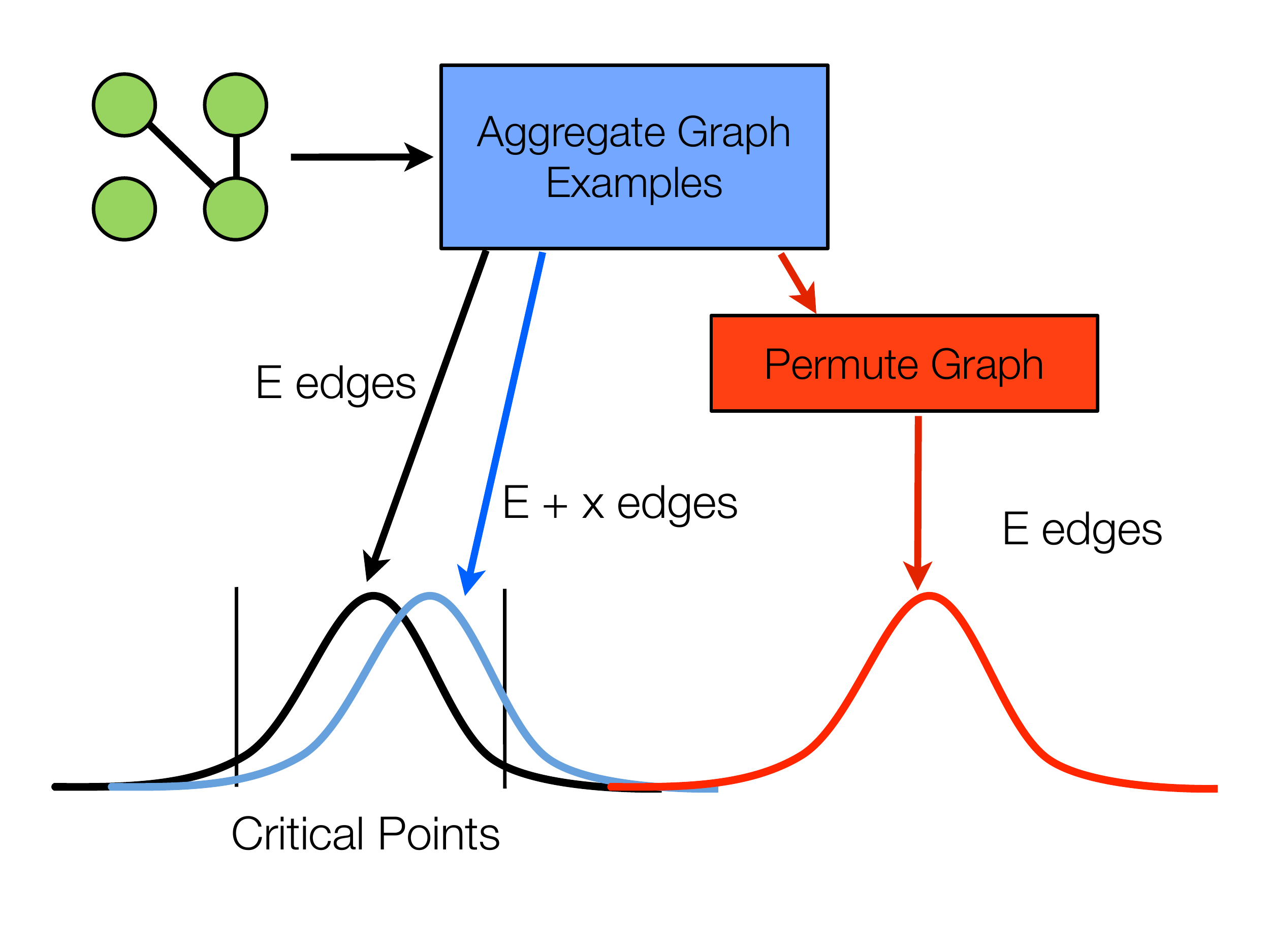}
\caption{Diagram of semi-synthetic experiments and the three sets of generated graphs.  }
\label{semi-synthetic-generation}
\end{center}
\end{figure}
Although synthetically driven experiments have the advantage of complete control over the network properties of the generated graphs, these experiments give inherently artificial results and the utility of any conclusions drawn from those experiments depends heavily on the comprehensiveness of the experiments.  To ensure that these results generalize to more realistic scenarios I've also evaluated them using a set of semi-synthetic experiments where the normal and false positive graph examples of 1. -- 3. are sampled from real-world networks and the true positive anomaly examples of 4. are artificially inserted.  These experiments show that the proposed consistent statistics are superior at discovering anomalies inserted into real-world data.

\begin{figure}[h!]
\begin{algorithmic}
\STATE $SemiSynthetic(P_N, P_A, | W |, | V |, \delta, S_k, \alpha):$ 
\\ \hrulefill
\FOR{$i$ in $1...50$ }
\STATE $NormalGraphs.add(SampleGraph(|W|,|V|,P_N)$
\STATE $FalsePosGraphs.add(SampleGraph(|W|+\delta,|V|,P_N)$
\STATE $AnomalousGraphs.add(SampleGraph(|W|,|V|,P_A)$
\ENDFOR
\STATE $NullDistr = \{ S_k(G), G \in NormalGraphs \}$
\STATE $CriticalPoints = CalcCPs(NullDistr,\alpha)$
\FOR{$G$ in $False Positive Graphs$}
\IF{$S_k(G)$ outside $CriticalPoints$ }
\STATE False Positives ++
\ENDIF
\ENDFOR
\FOR{$G$ in $Anomalous Graphs$}
\IF{$S_k(G)$ outside $CriticalPoints$ }
\STATE True Positives ++
\ENDIF
\ENDFOR
\\ \hrulefill
\end{algorithmic}
\vspace{3mm}
\caption{Semi-synthetic data experimental procedure for statistic $S_k$ using normal probability matrix $P_N$, anomalous probability matrix $P_A$, and $\Delta$ additional edges in False Positive graphs.}
\label{fig:semisynthgeneration}


\end{figure}%


The first step in generating the graph sets is to aggregate all graph instances from a dynamic network source into a single graph example which will become our normal graph source.  All normal graph examples are generated from this source graph by first sampling an active node set, obtaining the subgraph over those active nodes, then sampling edges with replacement to create the sample graph.  By aggregating all instances over time we smooth out any variations that occur over the lifespan of the network and obtain the ``average'' behavior of the network to use as our normal examples.  False positive examples are creates by sampling additional nodes or edges from the same source network.  

True positive examples are sampled from a separate, alternate source instance which is created by permuting the original source graph in some way.  To generate network change anomalies the alternate source has 5\% of its edges selected uniformly at random compared to the source; degree distribution anomalies are generated by taking 30\% of the edges of high degree nodes (high degree meaning in the top 50\% of nodes) and assigning them uniformly at random; and transitivity anomalies are generated by performing triangle closures by selecting an initial node, randomly walking two steps, then linking the endpoints of the walk to form a triangle.  The semi-synthetic data generation process is shown in figure \ref{semi-synthetic-generation} and the exact algorithm for generating the data is described in Figure \ref{fig:semisynthgeneration}.  The input $P_N$ is created by dividing the aggregated normal graph described above by $|W|$ and the input $P_A$ is created by modifying the aggregated normal graph in one of the ways described above and then dividing by $|W|$.  


The dataset used for the underlying graph was the University E-mail dataset described in Section \ref{real-data-experiments} used in the real data experiments; when aggregated this data forms a graph with 54102 total nodes and 5236591 total messages.  Edge false positives are generated by creating graphs with 20k nodes and either 400k or 600k edges while node false positives are generated by sampling either 20k or 30k nodes and sampling edges equal to 20 times the number of nodes.  Sampling edges as a ratio of nodes in the node false positive experiment is to hold the density of the graphs constant.


We analyze the performance of the statistics using the same ROC approach as with the synthetic data.  Figure \ref{fig:ROCsemiSynth} shows the resulting ROC curves.   As with the synthetic experiments (a)-(c) show mass shift statistics, degree distribution statistics, and transitivity statistics respectively when the false positives are generated with additional edges, while (d)-(f) have false positives generated via additional nodes.  The proposed consistent statistics have superior performance in most cases, and none of the competing statistics perform well in both the additional edges and additional nodes scenarios.

\vspace{-3mm}
\begin{figure}[h!]
\centering

\subfigure[]{\includegraphics[width=.32\columnwidth,natwidth=610,natheight=642]{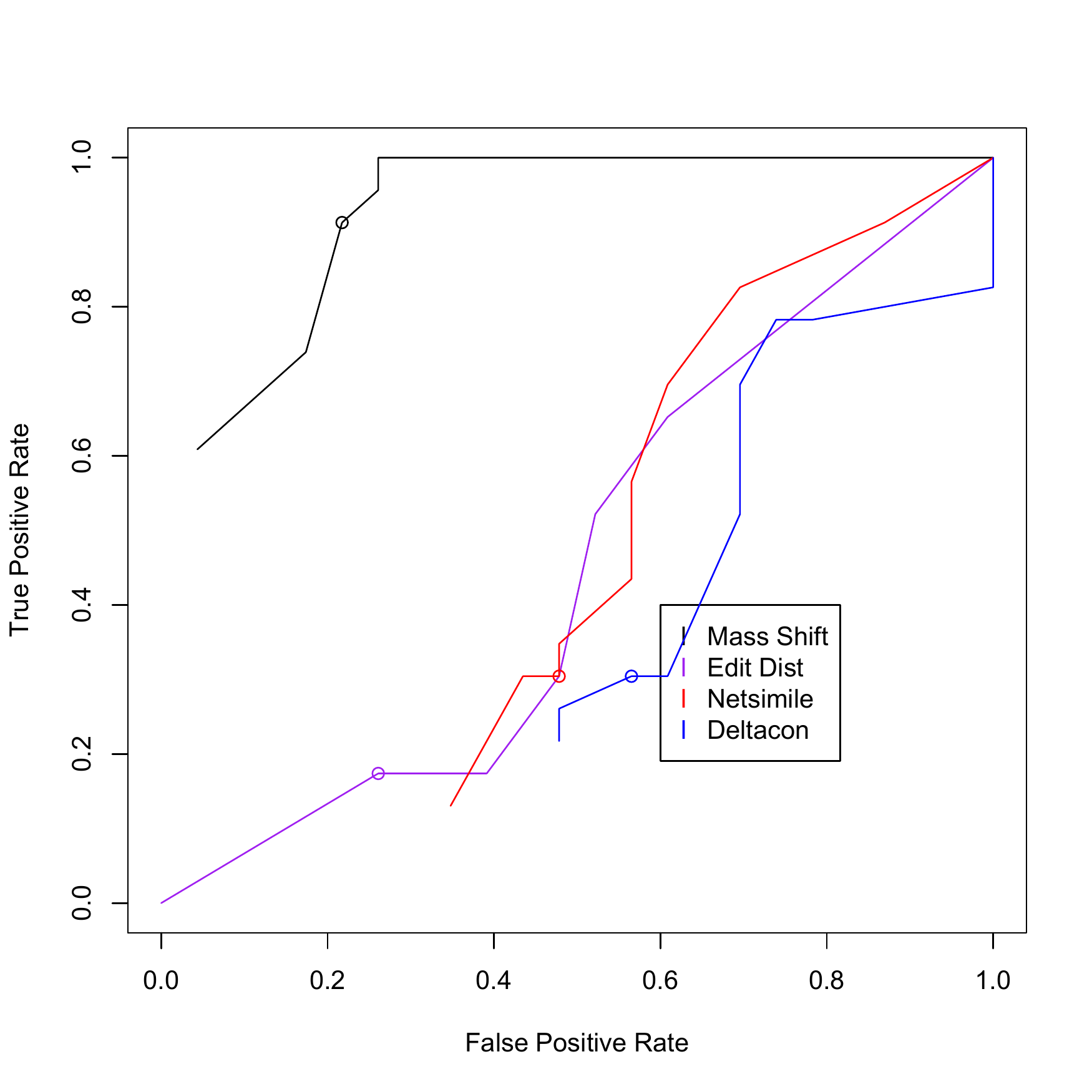}}
\subfigure[]{\includegraphics[width=.32\columnwidth,natwidth=610,natheight=642]{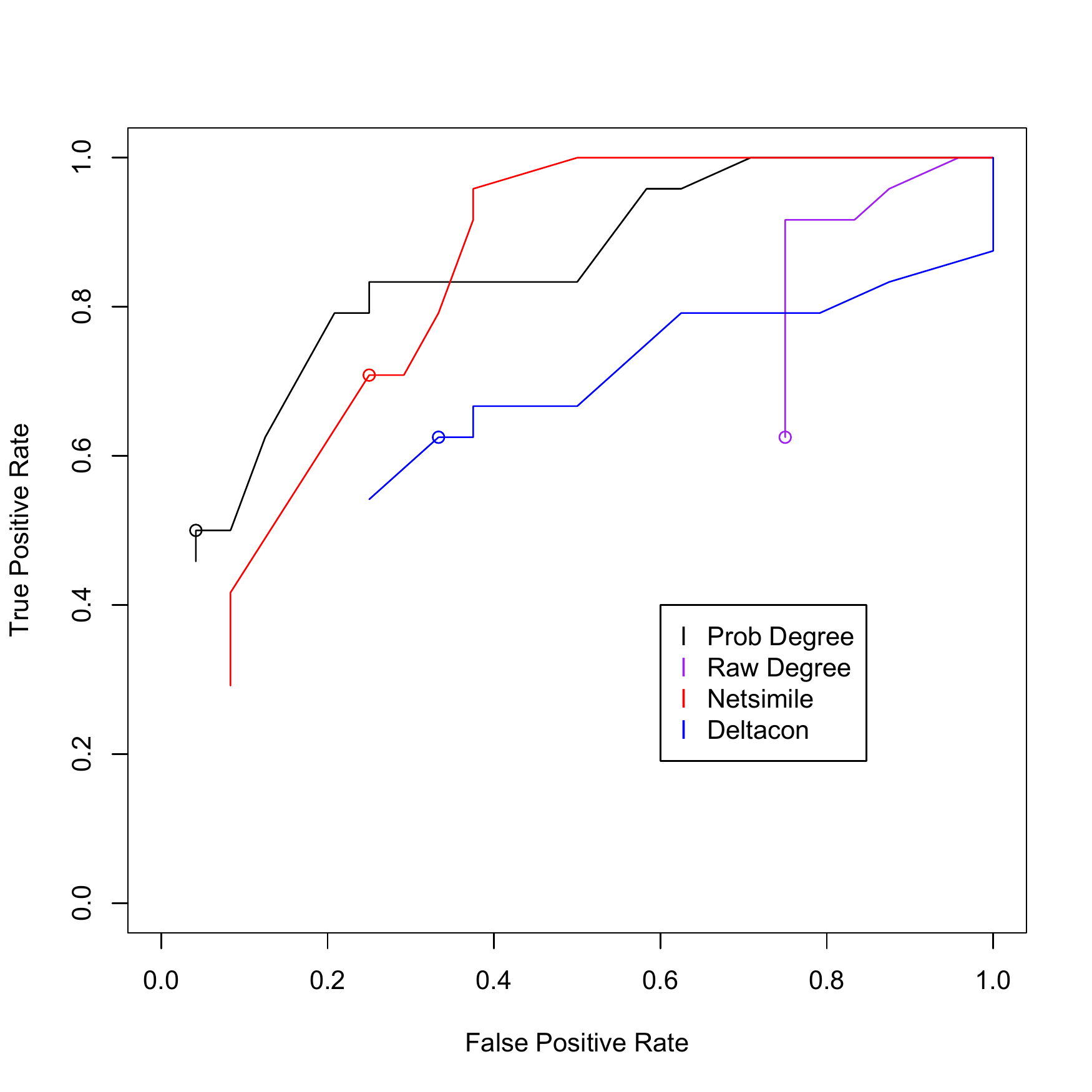}}
\subfigure[]{\includegraphics[width=.32\columnwidth,natwidth=610,natheight=642]{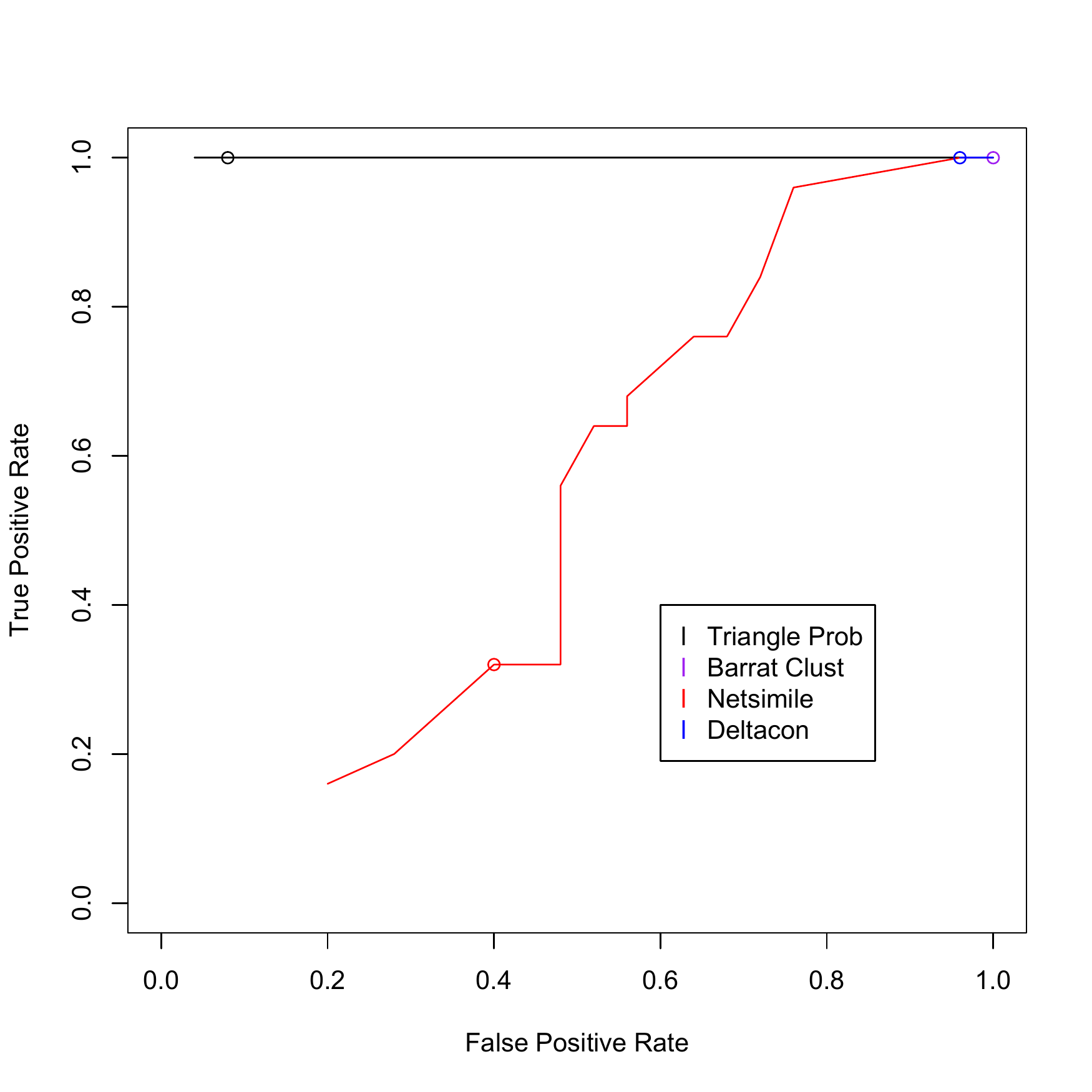}}
\subfigure[]{\includegraphics[width=.32\columnwidth,natwidth=610,natheight=642]{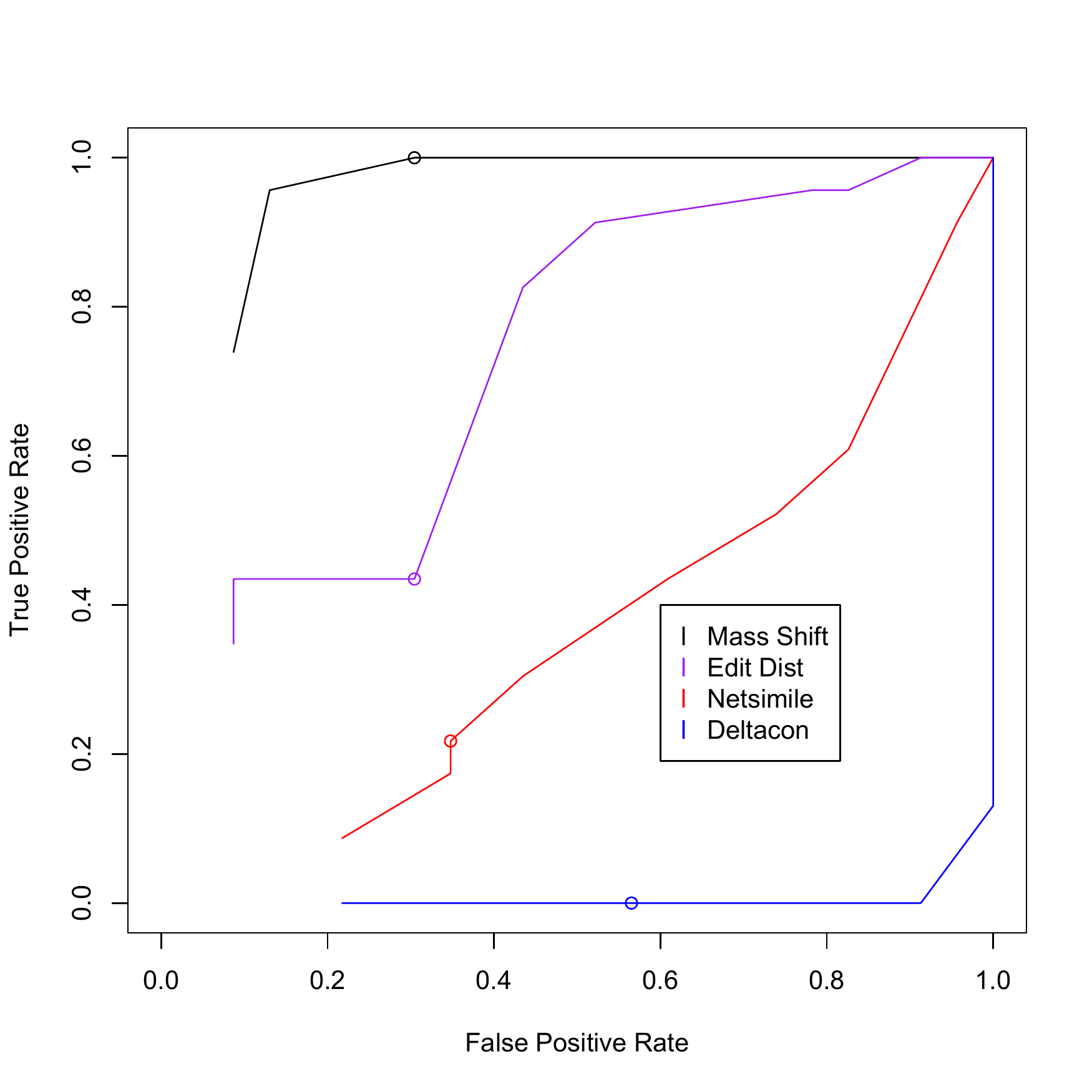}}
\subfigure[]{\includegraphics[width=.32\columnwidth,natwidth=610,natheight=642]{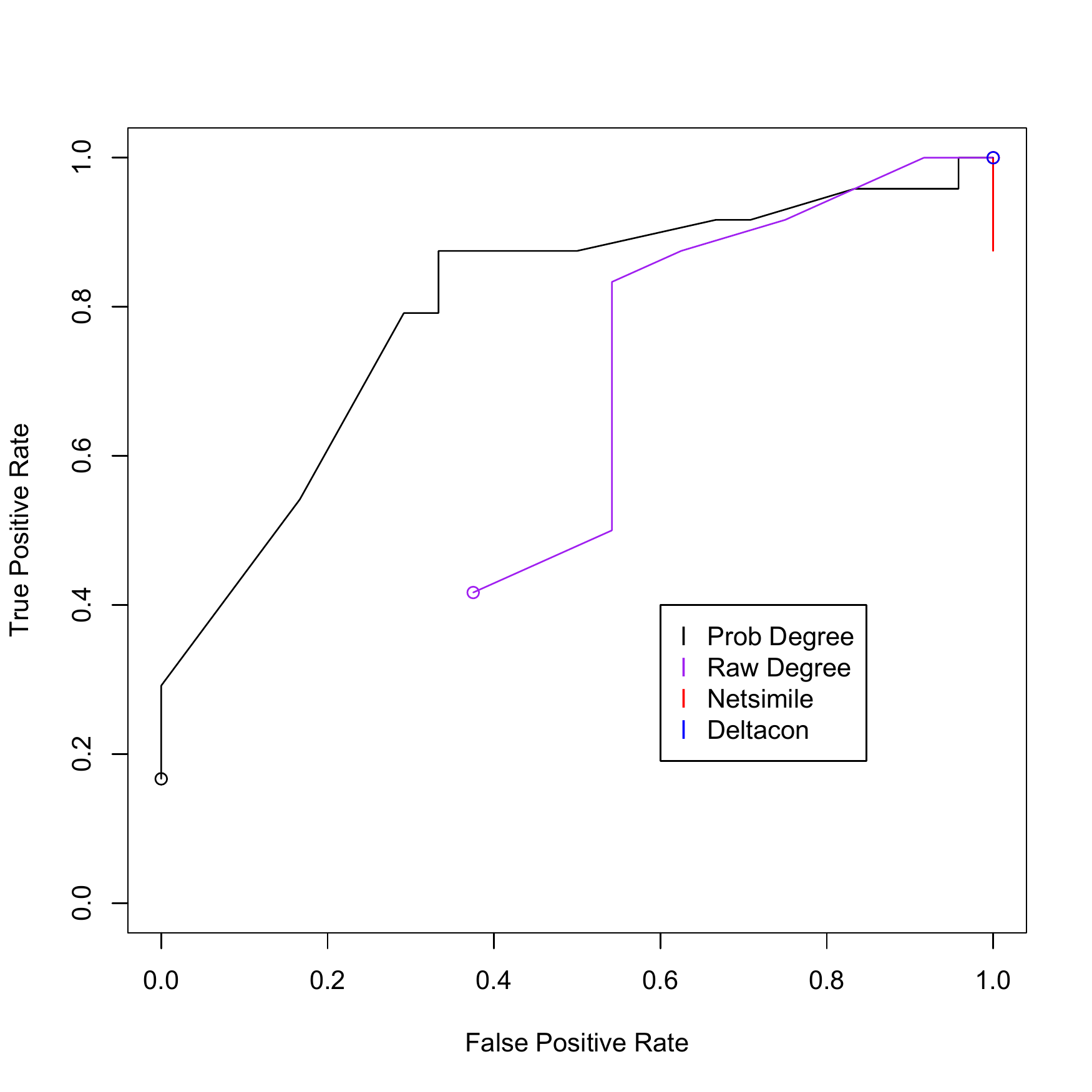}}
\subfigure[]{\includegraphics[width=.32\columnwidth,natwidth=610,natheight=642]{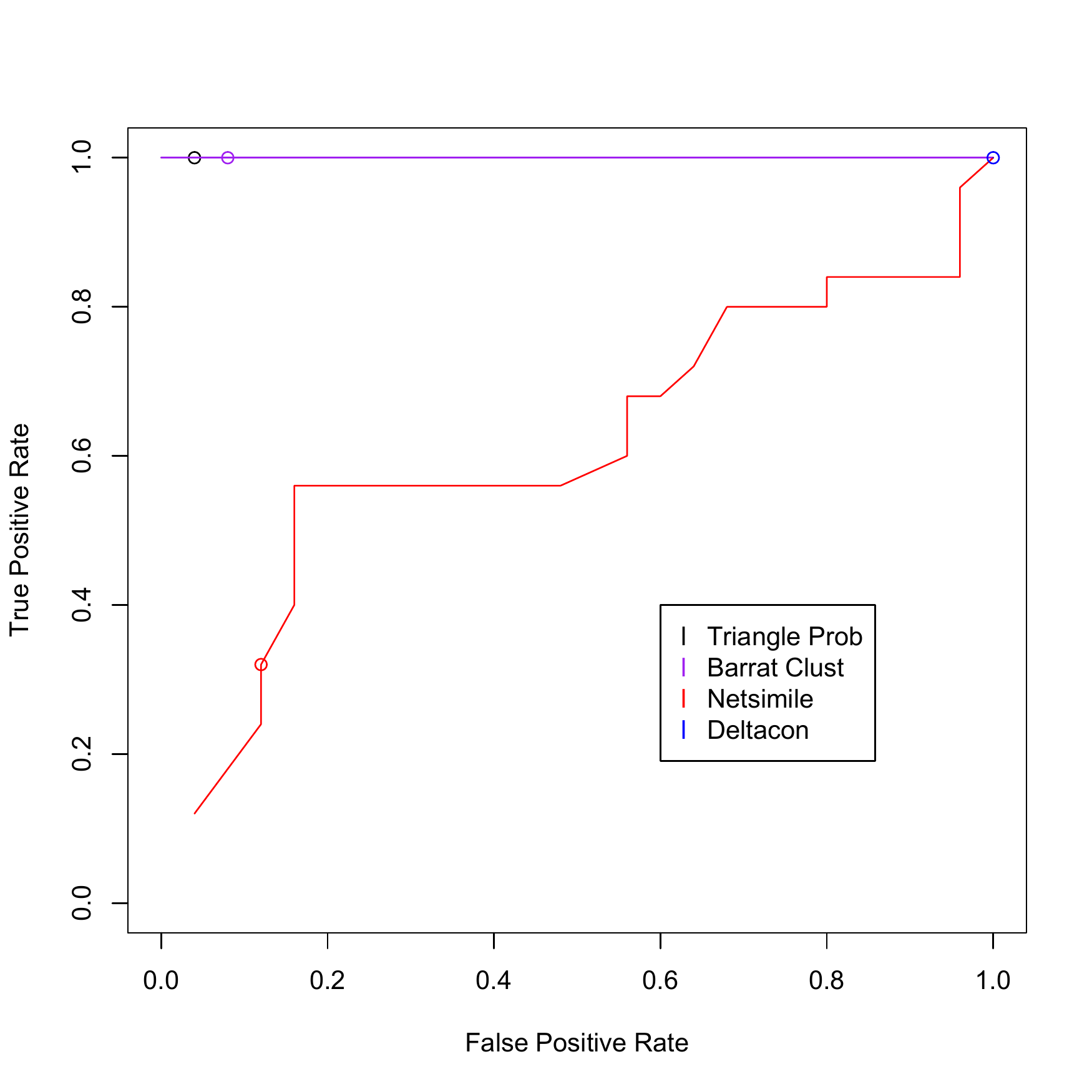}}
\caption{ROC curves on the semi-synthetic dataset with varying alphas.}
\label{fig:ROCsemiSynth}
\end{figure}
\vspace{-2mm}





\subsection{Real Data Experiments} 
\label{real-data-experiments}



Now let us investigate the types of anomalies found when these statistics are applied to three real-world networks and contrast these events to those found by other detectors.  The first dataset is the Enron communication data, a subset of e-mail communications from prominent figures of the Enron corporation (150 individuals, 47088 total messages) with a time step width of one week used in papers such as Priebe et al \cite{priebe}.  The second is the University E-mail data, e-mail communications of students from one university in the 2011-2012 school year (54102 individuals, 5236591 total messages), sampled daily and described in detail in the paper by LaFond et al \cite{tlafond}.  The third is a Facebook network subset made up of postings to the walls of students in the 2007-2008 school year (444829 individuals, 4171383 total messages), also from the same university and sampled daily.  The Facebook dataset was also used in a paper by LaFond \cite{tlafond2} and is described there in more detail. 

\begin{figure*}[h!]
\centering
\includegraphics[width=.65\columnwidth,natwidth=610,natheight=642]{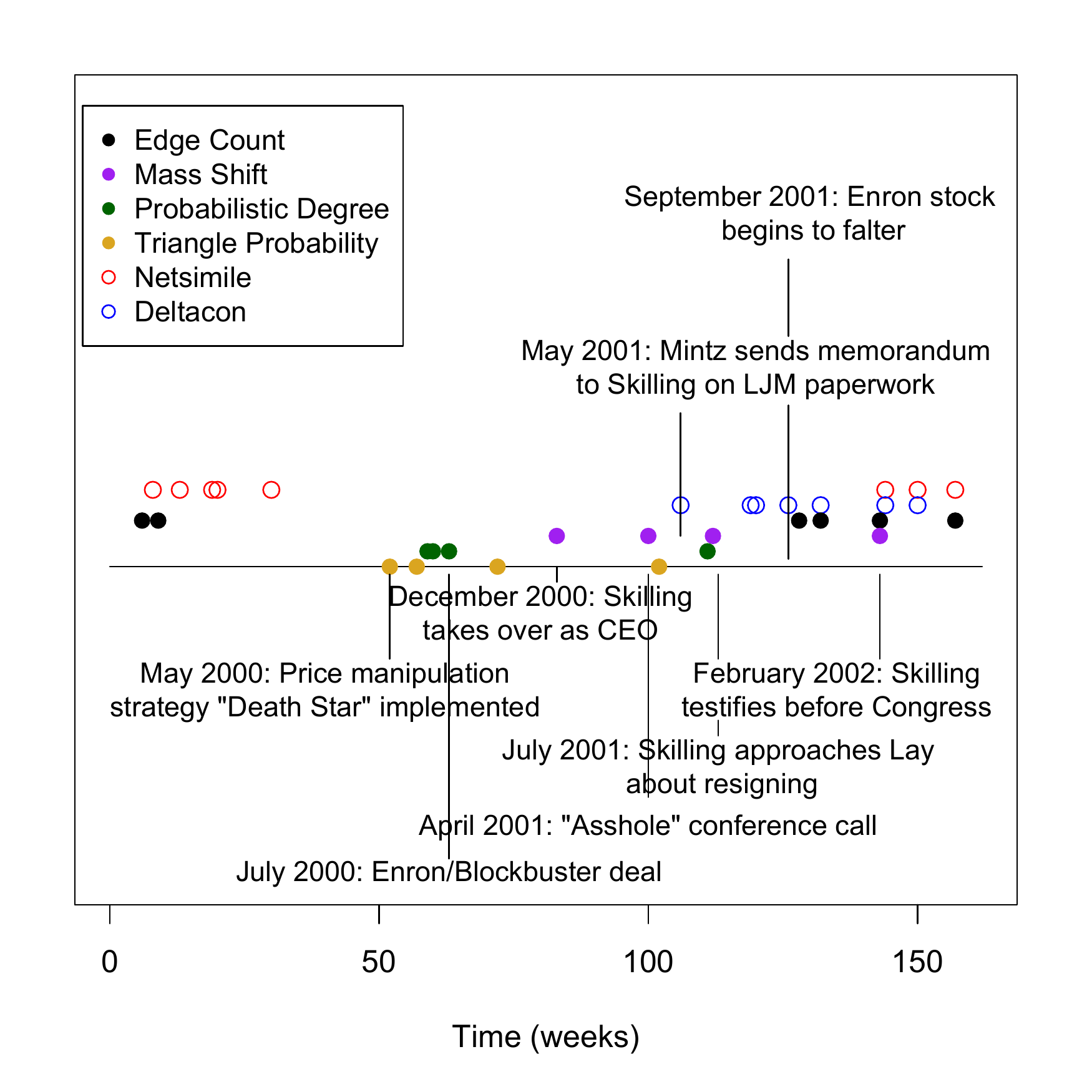}
\caption{Detected anomalies in Enron corporation e-mail dataset.  Filled circles are detections from our proposed statistics, open circles are other methods.}
\label{fig:enrontimeline}
\end{figure*}

\begin{figure*}[h!]
\centering
\includegraphics[width=.65\columnwidth,natwidth=610,natheight=642]{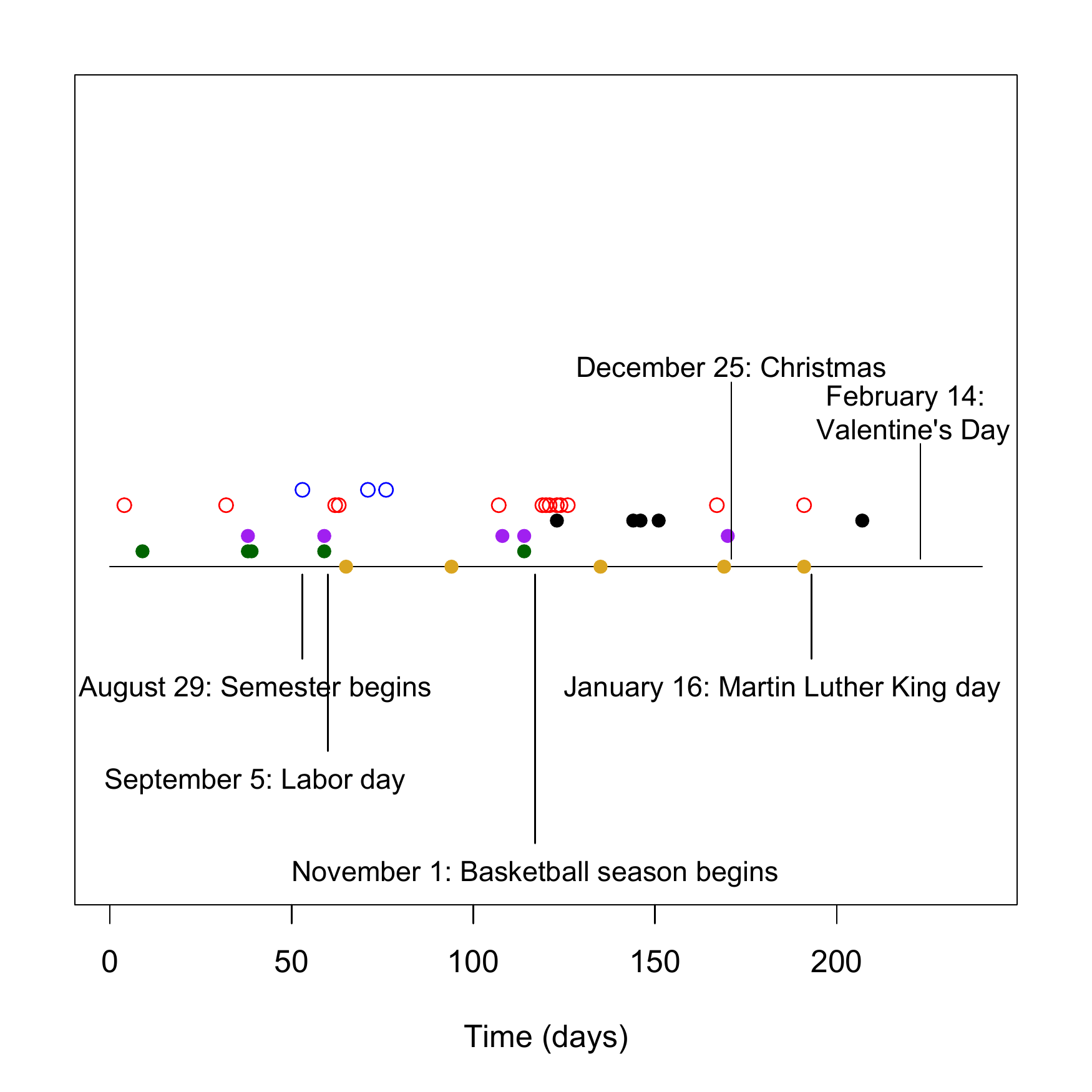}
\caption{Detected anomalies in university student e-mail dataset. }
\label{fig:purduetimeline}
\end{figure*}

\begin{figure*}[h!]
\centering
\includegraphics[width=.65\columnwidth,natwidth=610,natheight=642]{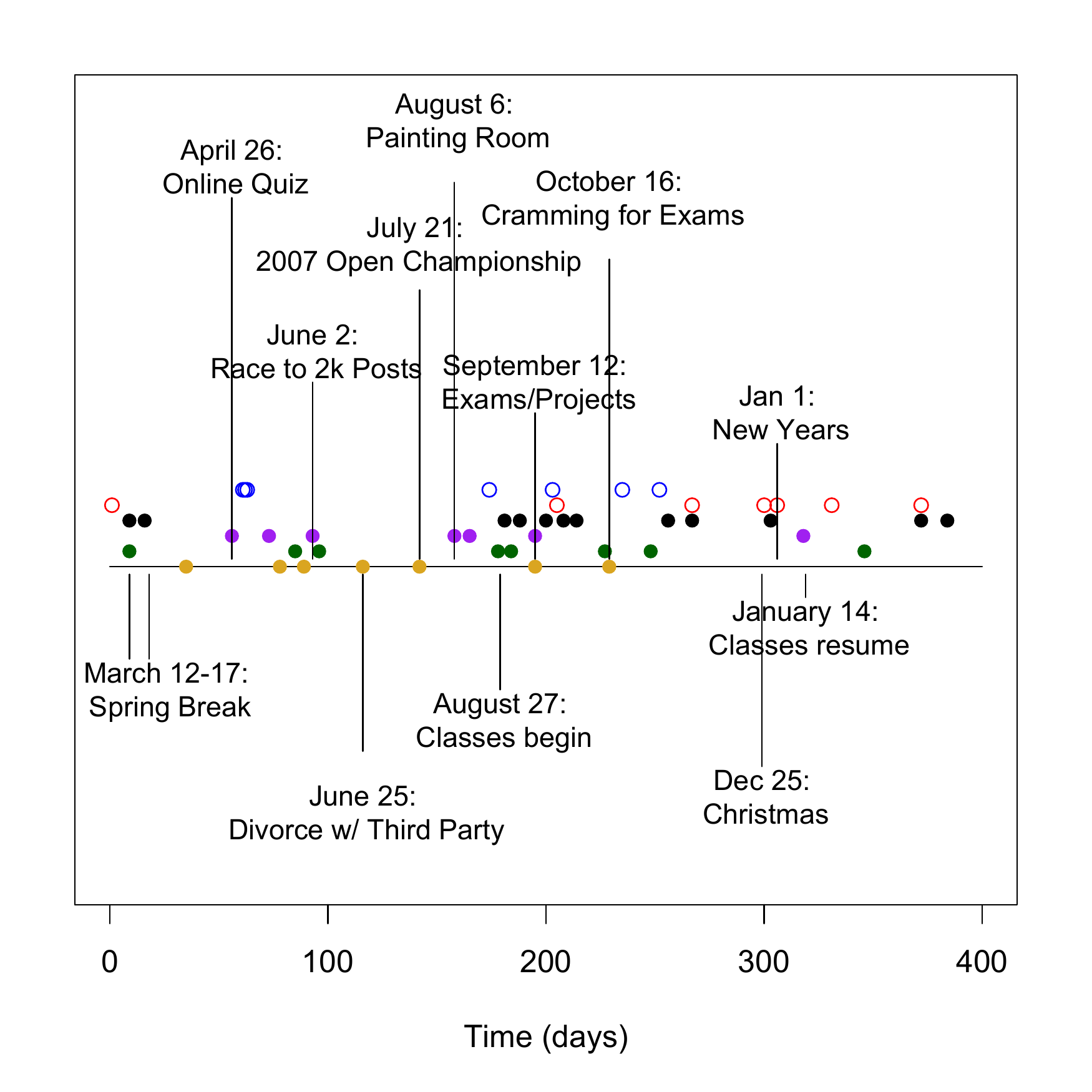}
\caption{Detected anomalies in Facebook wall postings dataset. }
\label{fig:facebooktimeline}
\end{figure*}

Figure \ref{fig:enrontimeline} shows the results of multiple statistics detectors when applied to the set of e-mail data from the Enron corporation, including our three proposed statistics, the raw message count, Netsimile, and Deltacon.  Time step 143 represents the most significant event in the stream, Jeffrey Skilling's testimony before congress on February 6, 2002.  The detected triangle anomalies at time steps 50-60 coincide with Enron's price manipulation strategy known as ``Death Star'' which was put into action in May 2000.  Other events include The CEO transition from Lay to Skilling in December 2000, the ``asshole'' conference call featured prominently in the book ``The Smartest Guys in the Room,'' and Lay approaching Skilling about resigning.  

Netsimile has difficulty detecting most of important events in the Enron timeline.  Although it accurately flags the time of the Congressional hearings, the other points flagged, particularly early on, do not correspond to any notable events and are probably false positives due to the artificial sensitivity of the algorithm in very sparse network slices.  

Deltacon detects a greater range of events than Netsimile but still fails to detect several important events such as the price manipulation and Skilling's attempted resignation.  In general it generates detections more frequently in the region between May and December 2001 which is also the region of highest message activity, and fails to generate detections in times with fewer messages.  


Figure \ref{fig:purduetimeline} shows the detected time steps of the University E-mail dataset.  Several major events from the academic school year like the start of the school year and Christmas break are shown.  It seems that the consistent statistics flag times closer to holidays and other events compared to other statistics.  Unfortunately, as the text content of the messages was unavailable it is impossible to determine if the detected conversations correspond to specific events based on the dialogues of users.  


Figure \ref{fig:facebooktimeline} shows the detected events of the Facebook wall data and the explanations for the detected events.  Some of the listed events are holidays while others were obtained by investigating the time steps flagged as anomalies; see Section \ref{local-anomaly-decomposition} for an explanation of this process.  Some events of interest are: the ``Race to 2k Posts'' where a pair of individuals noticed they were nearing two thousand posts on one of their walls and decided to reach that mark in one night, generating much more traffic between them than usual (over 160 posts); the ``Divorce w/ Third Party'' where a pair of individuals were going through a messy breakup and a mutual friend was cracking jokes and egging them on; and a discussion about Tiger Woods' odds in the 2007 Open Championship.

\section{Local Anomaly Decomposition}
\label{local-anomaly-decomposition}

After flagging a time step as anomalous it is useful to have some indication as to what is happening in the network at that time that generated the flag.  One tool for investigating the flagged time step is local anomaly decomposition, where the network is broken down into subgraphs that contribute the most value to the total statistic score at that time step.  For many statistics like mass shift or triangle probability which are summations over the edges, nodes, or triplets of the graph this process is trivial: each component of the summation has an associated anomaly score and the components that provide the most anomaly score are the ones investigated.  For others such as PDD which cannot be easily decomposed into node and edge contributions this approach is nearly impossible.  Anomaly score decomposition is more useful when the score is skewed rather than uniformly distributed as it is easier to highlight a concise region that contributes the most towards the anomaly.

To demonstrate the decomposition, we applied the statistics to the real-world networks and sorted all of the nodes (for Barrat clustering) or edges (all other statistics) from highest to lowest contribution to the anomaly score sum.  From there we selected the components with the highest anomaly score contribution totaling at least 20\% of the log of the anomaly score to be part of the visualized anomaly.  We then plotted all of the selected components as well as any adjacent edges and nodes.  We investigated the Enron and Facebook datasets as these have names/message content associated with the graphs; the e-mail dataset has neither so these graphs are omitted.  


Figures \ref{fig:enronmasslocal} - \ref{fig:enronbarratlocal} show the local subgraphs reported by the mass shift, triangle probability, graph edit distance, and Barrat clustering respectively.  The left subgraph shows activity in the time step immediately prior to the anomaly while the right shows the subgraph during the anomaly.  Red nodes and edges are part of the top anomaly contributors while black edges and nodes are merely adjacent; the thickness of the edges corresponds to the edge weight in that time step.

Figure \ref{fig:enronmasslocal} shows an unusually large amount of communication between Senior Vice President Richard Shapiro and Government Relations Executive Jeff Dasovich immediately before Lay approaches Skilling about resigning as CEO.  Figure \ref{fig:enrontrilocal} shows the triangular communications occurring between members of the Enron legal department which was occurring during the price-fixing strategy in California.  Both of these methods find succinct subgraphs to represent the anomalies occurring at these times.

\ref{fig:enrongedlocal}, on the other hand, shows graph edit distance reporting nearly the entirety of the network at that time.  While this does represent an event (the Congressional hearings) there is no interpretation of the event other than that there were many messages being sent at that time.  Barrat clustering identifies the legal department in \ref{fig:enronbarratlocal} but does so at a time with relatively low communication.  Barrat clustering normalizes by node degree which makes it more likely to report triangles with less weight as long as the participating nodes don't communicate with anyone else.

\clearpage

\begin{figure*}[h!]
\begin{center}
\subfigure[]{\includegraphics[width=.39\columnwidth,natwidth=610,natheight=642,trim=0 300 0 0,clip=true]{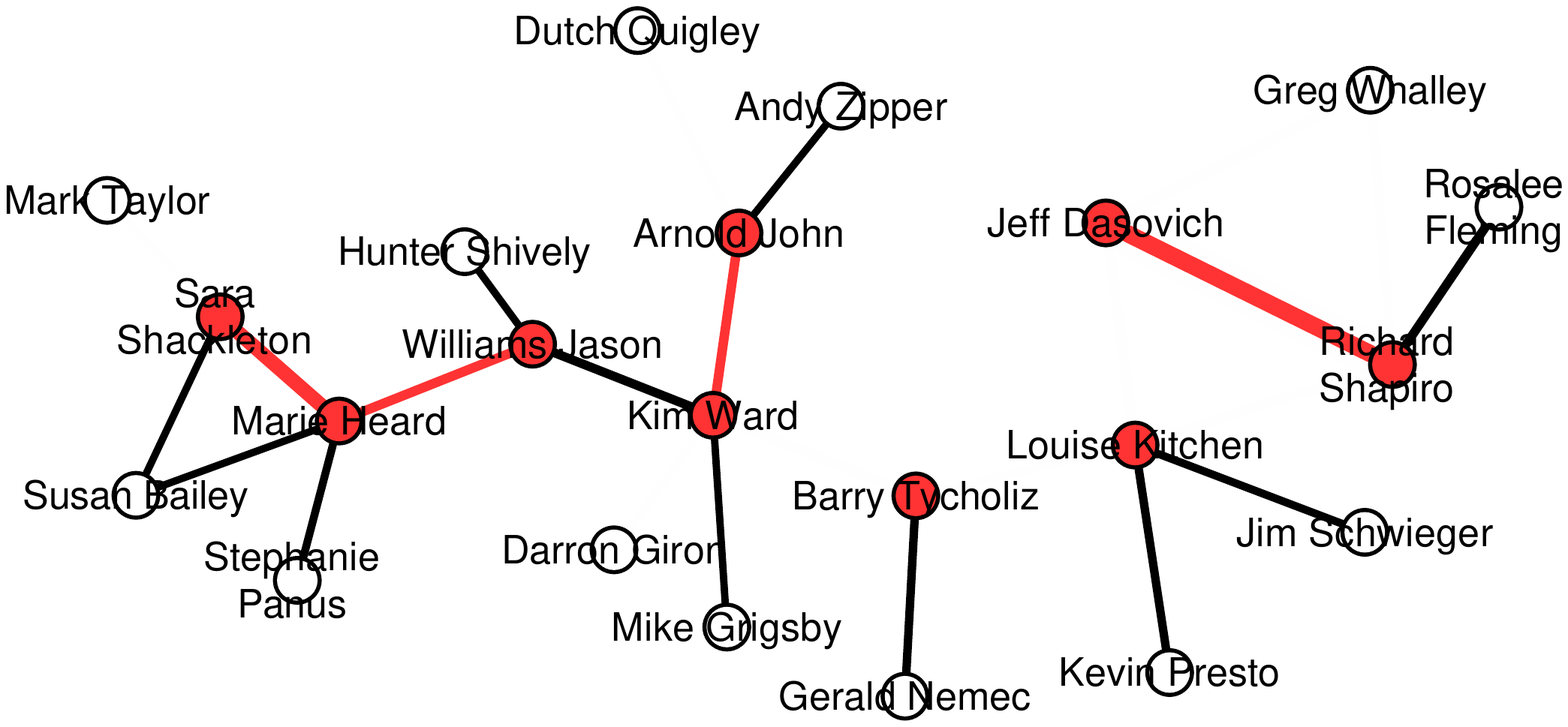}}
\subfigure[]{\includegraphics[width=.39\columnwidth,natwidth=610,natheight=642,trim=0 300 0 0,clip=true]{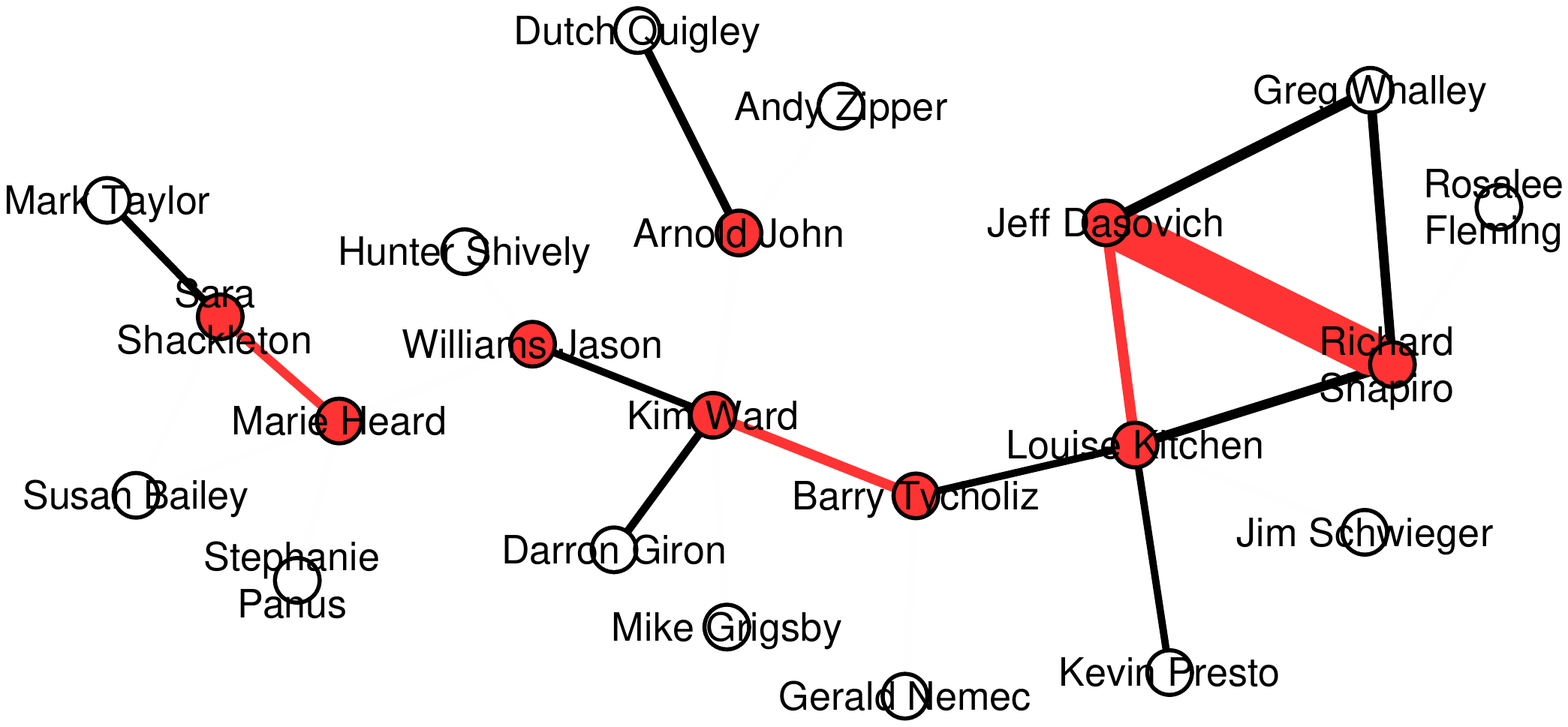}}
\end{center}

\caption{Subgraph responsible for most of the mass shift anomaly in the Enron network at the weeks of June 25 (before anomaly) and July 2 (during anomaly), 2001 respectively.  }
\label{fig:enronmasslocal}
\end{figure*}

\begin{figure*}[h!]
\begin{center}
\subfigure[]{\includegraphics[width=.39\columnwidth,natwidth=610,natheight=642,trim=0 300 0 0,clip=true]{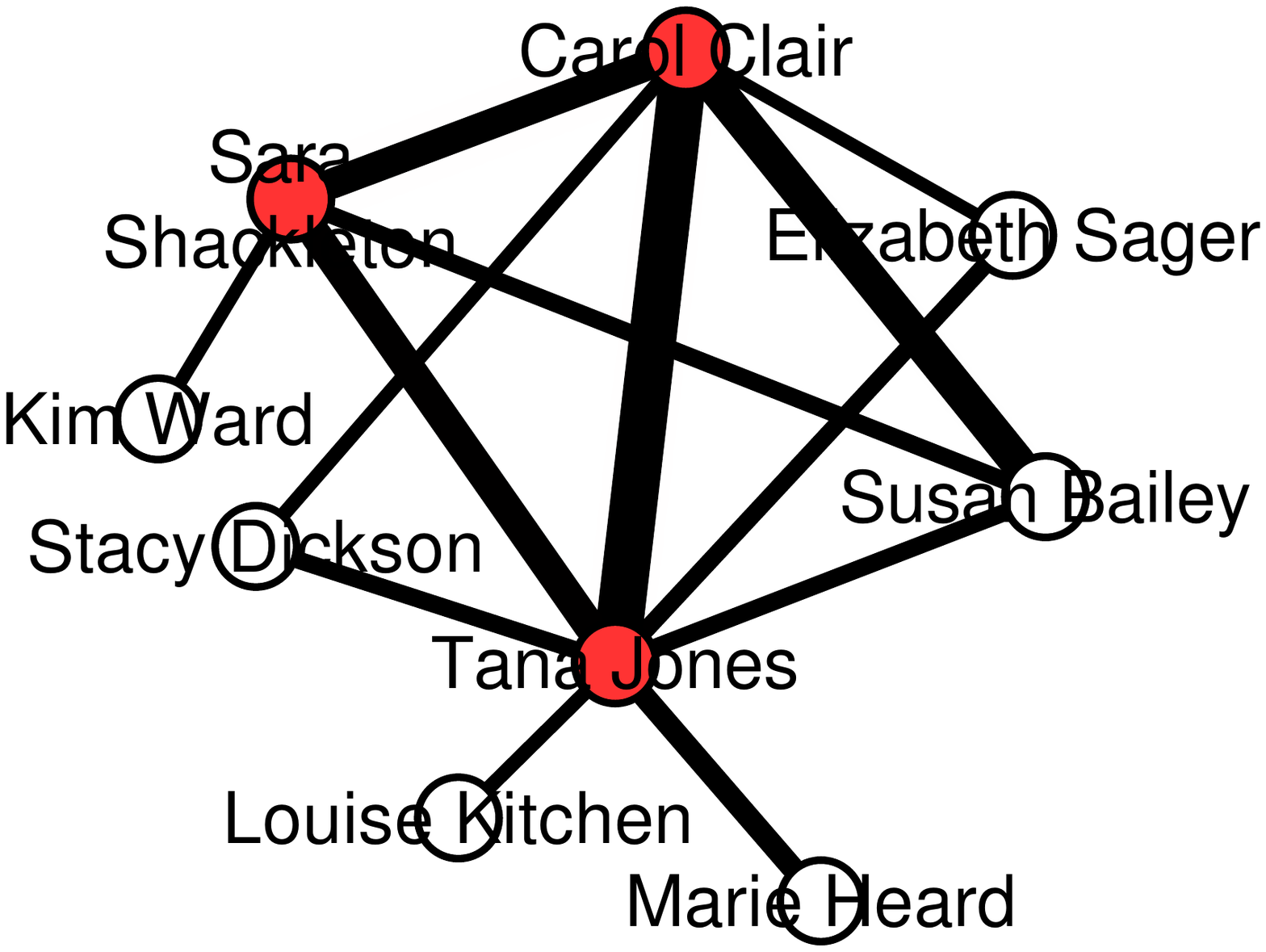}}
\subfigure[]{\includegraphics[width=.39\columnwidth,natwidth=610,natheight=642,trim=0 300 0 0,clip=true]{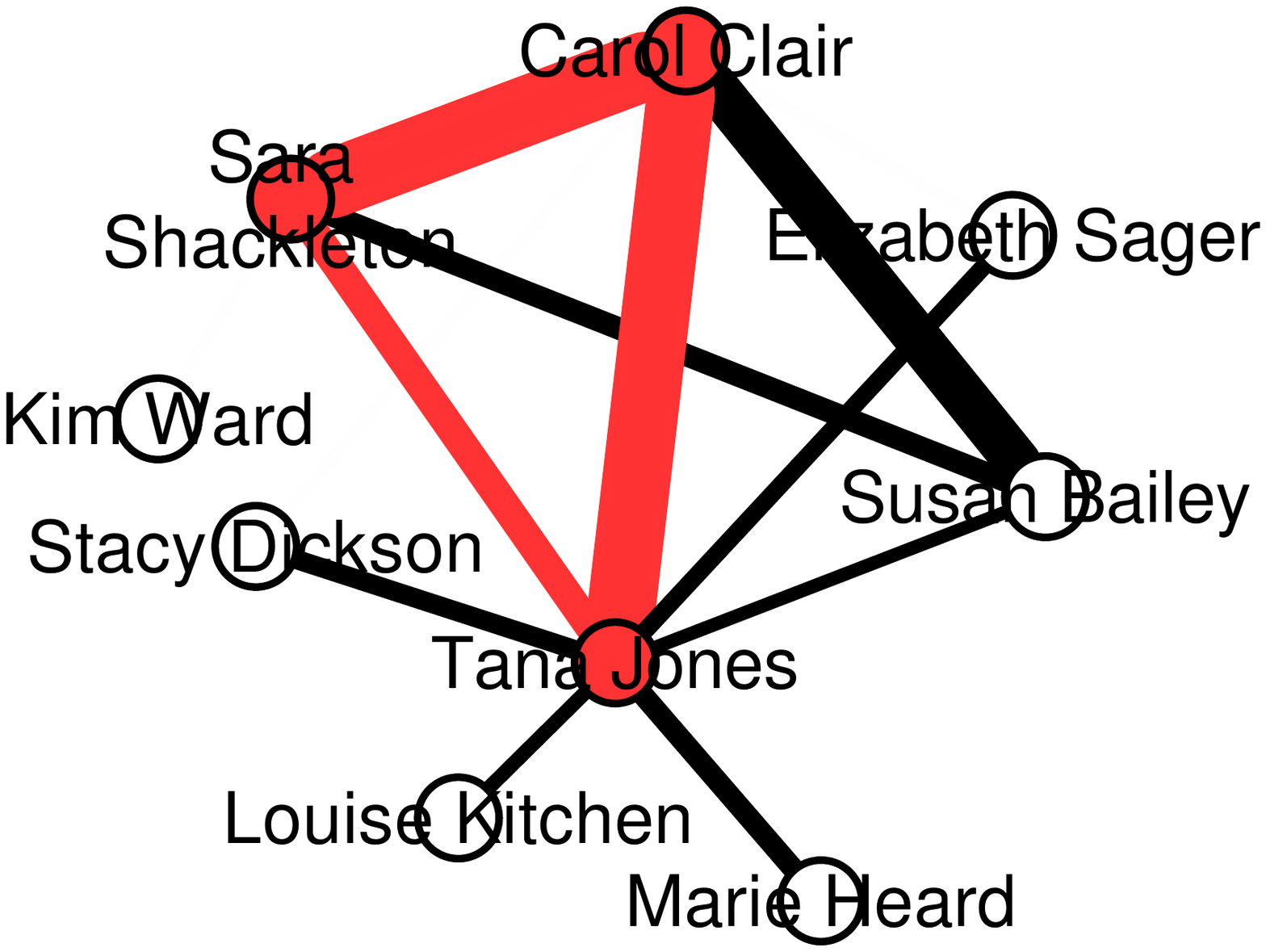}}
\end{center}

\caption{Subgraph responsible for most of the triangle probability anomaly in the Enron network at the weeks of May 1 (before anomaly) and May 8 (during anomaly), 2000 respectively.  }
\label{fig:enrontrilocal}
\end{figure*}

\begin{figure*}[h!]
\begin{center}
\subfigure[]{\includegraphics[width=.39\columnwidth,natwidth=610,natheight=642,trim=0 300 0 0,clip=true]{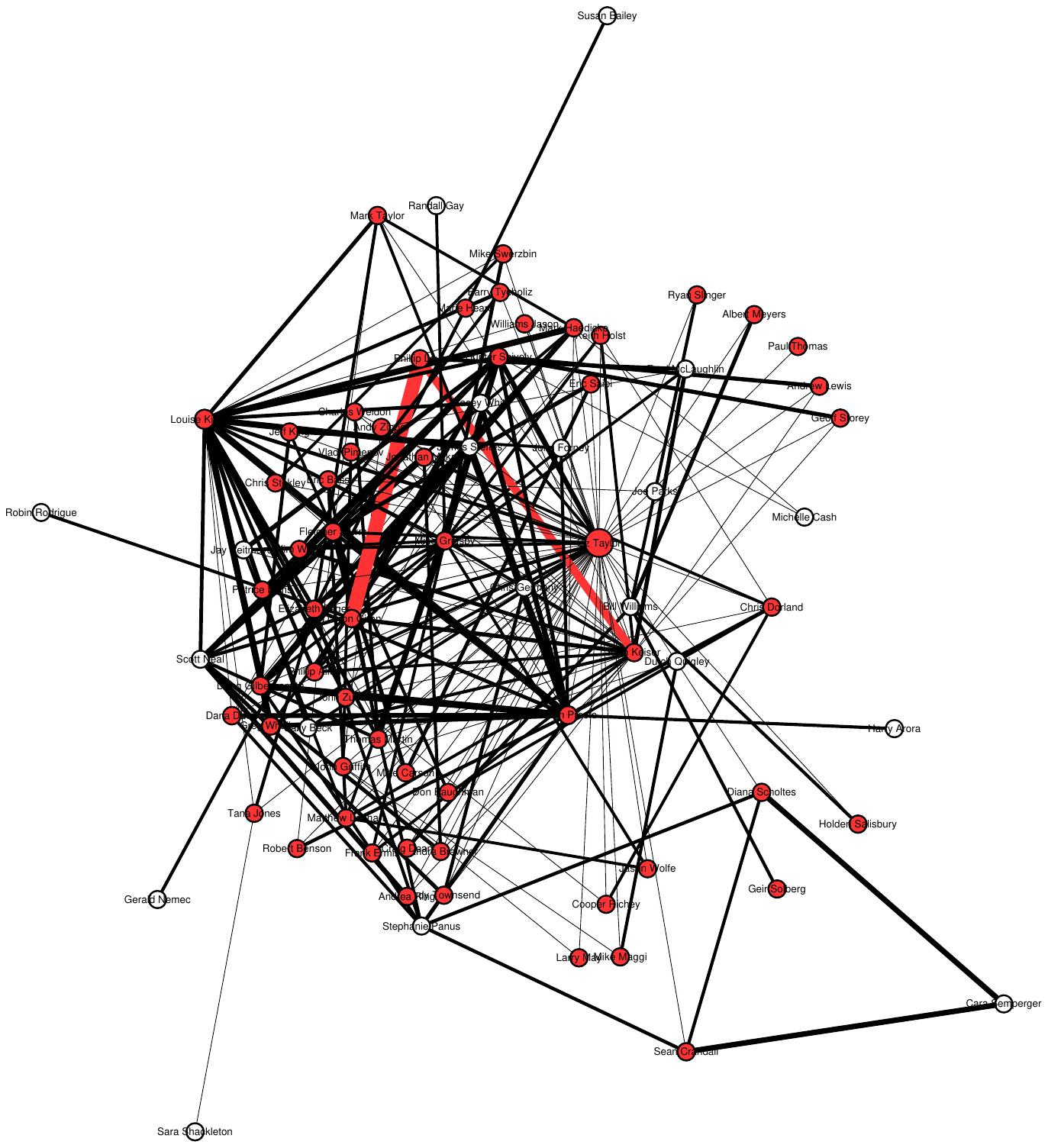}}
\subfigure[]{\includegraphics[width=.39\columnwidth,natwidth=610,natheight=642,trim=0 300 0 0,clip=true]{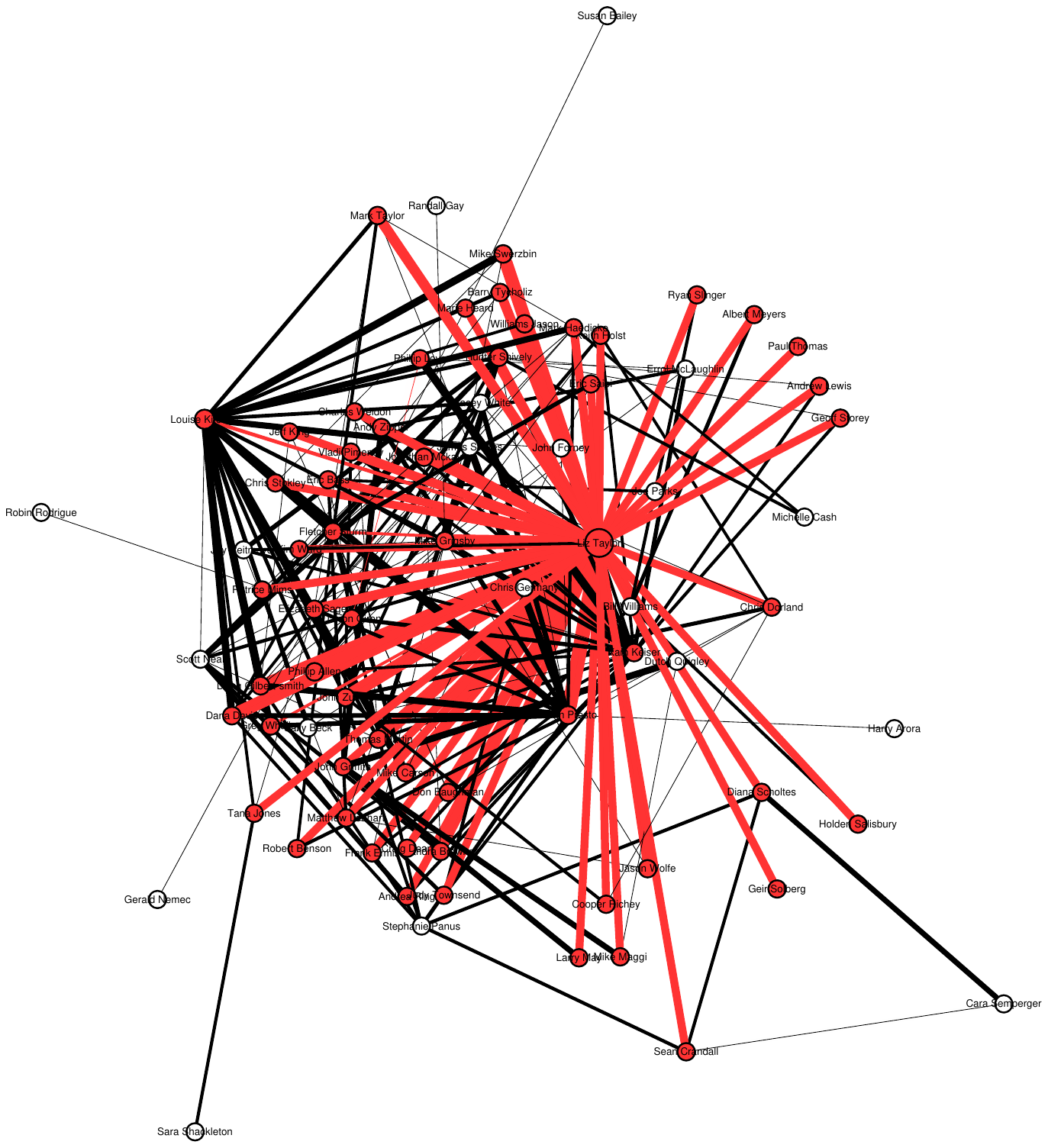}}
\end{center}

\caption{Subgraph responsible for most of the graph edit distance anomaly in the Enron network at the weeks of January 28 (before anomaly) and Feburary 4 (during anomaly), 2002 respectively.  }
\label{fig:enrongedlocal}
\end{figure*}

\begin{figure*}[h!]
\begin{center}
\subfigure[]{\includegraphics[width=.39\columnwidth,natwidth=610,natheight=642,trim=0 300 0 0,clip=true]{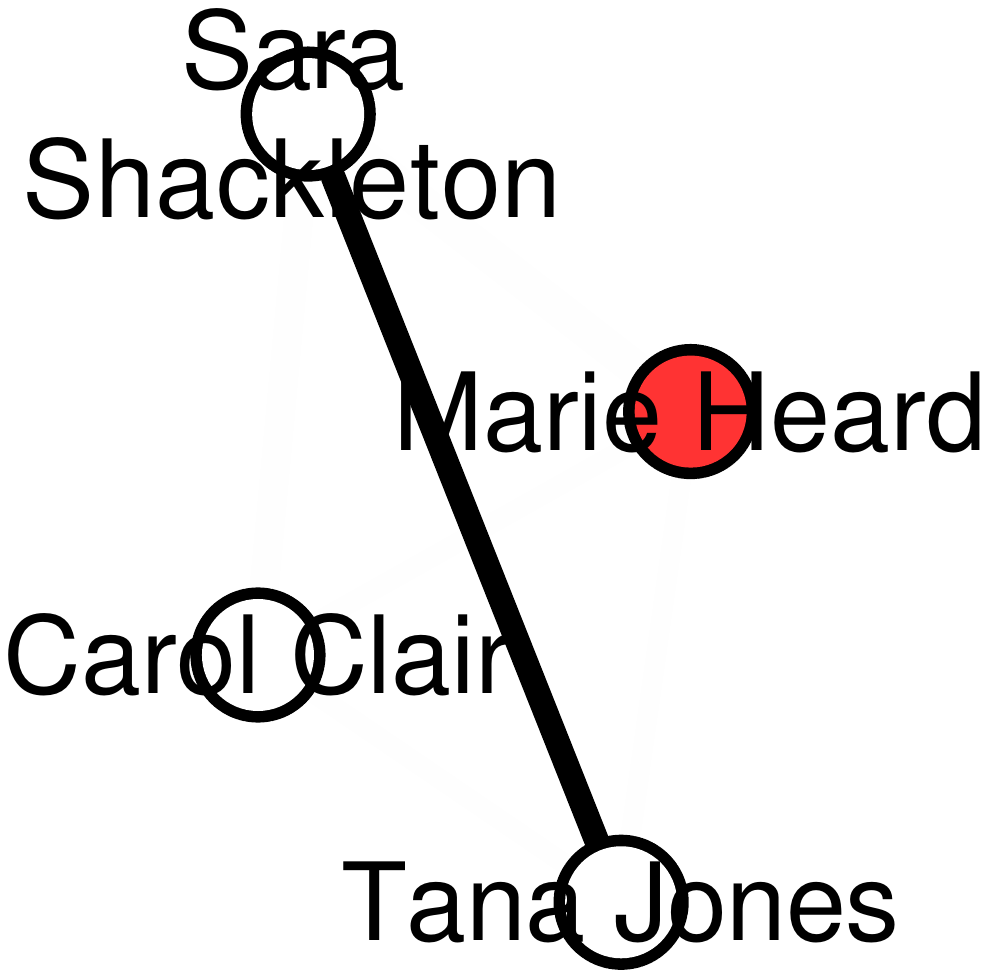}}
\subfigure[]{\includegraphics[width=.39\columnwidth,natwidth=610,natheight=642,trim=0 300 0 0,clip=true]{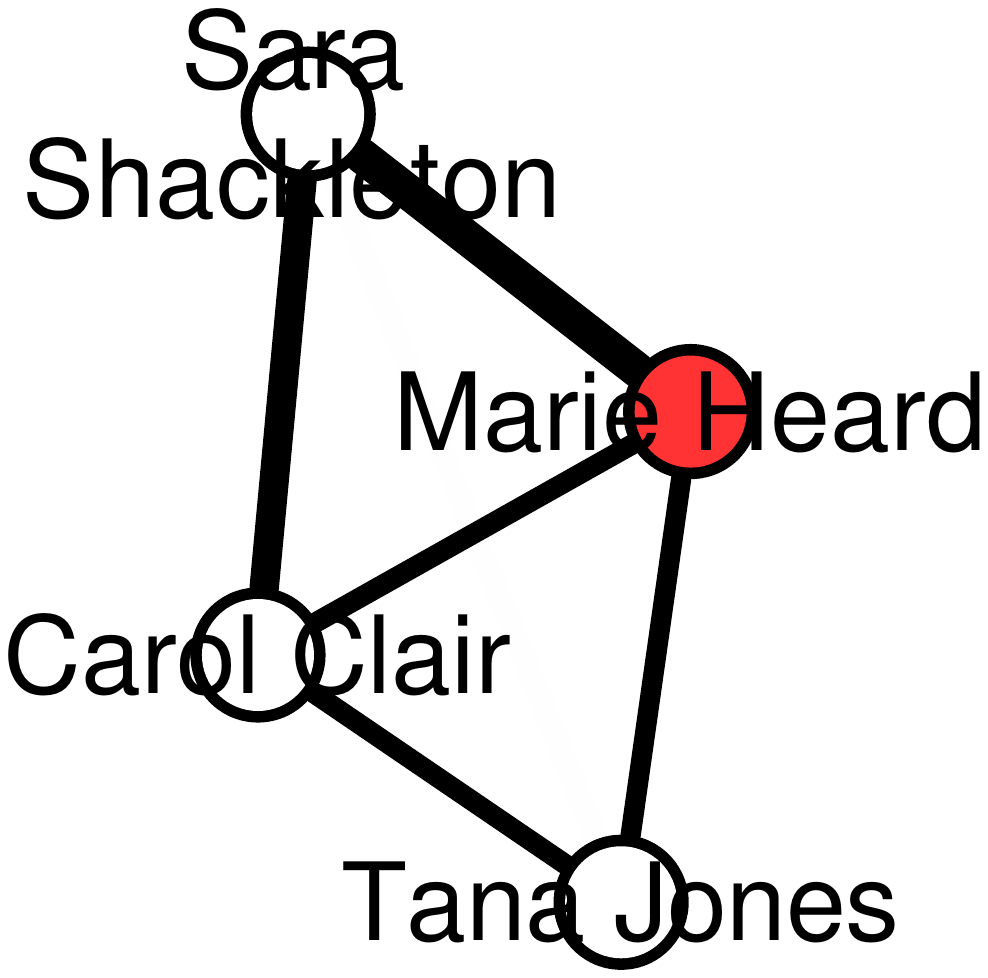}}
\end{center}

\caption{Subgraph responsible for most of the Barrat clustering anomaly in the Enron network at the weeks of November 1 (before anomaly) and November 8 (during anomaly), 1999 respectively.  }
\label{fig:enronbarratlocal}
\end{figure*}

\begin{figure*}[h!]
\begin{center}
\subfigure[]{\includegraphics[width=.39\columnwidth,natwidth=610,natheight=642,trim=0 300 0 0,clip=true]{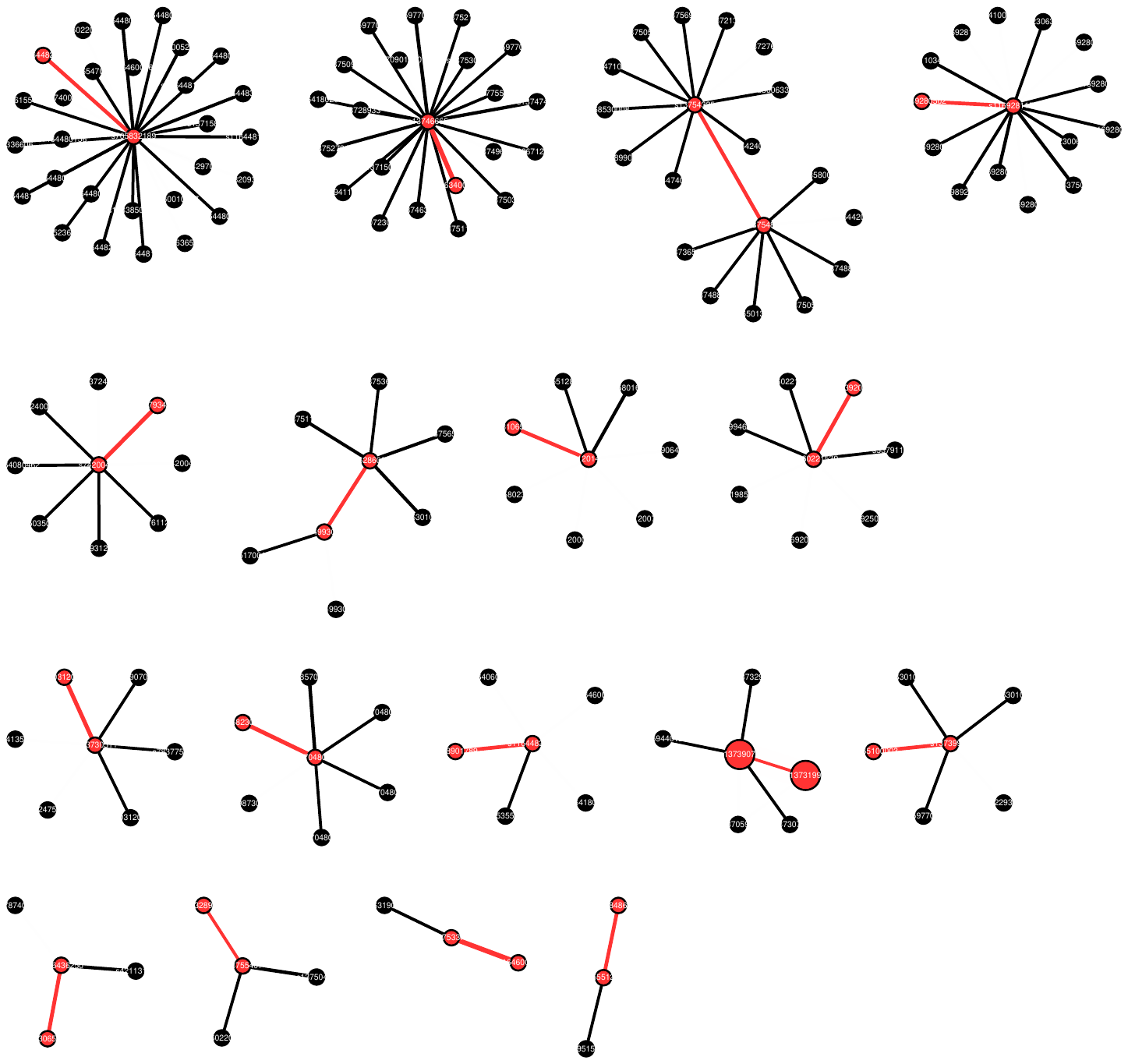}}
\subfigure[]{\includegraphics[width=.39\columnwidth,natwidth=610,natheight=642,trim=0 300 0 0,clip=true]{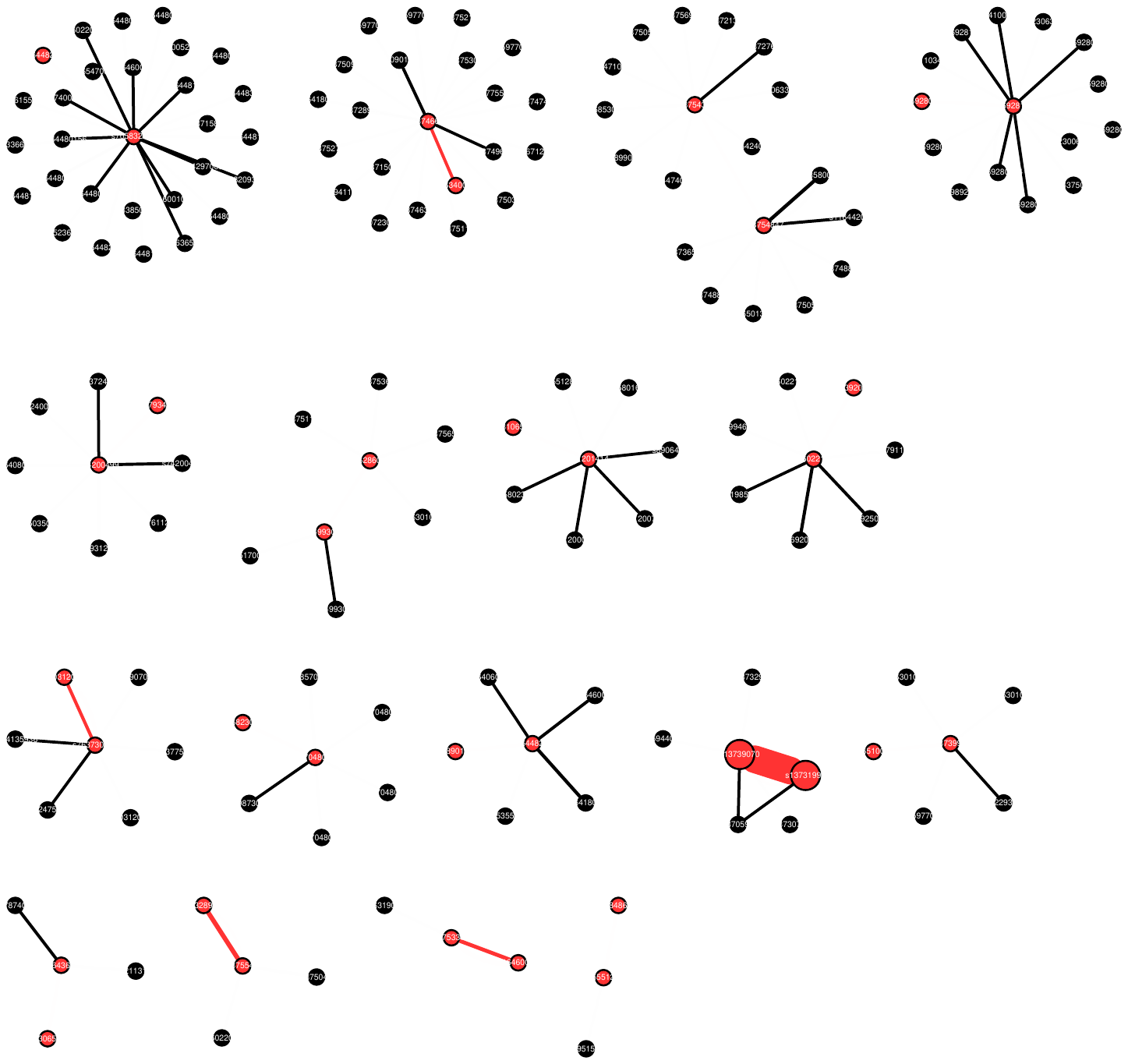}}
\end{center}

\caption{Subgraph responsible for most of the mass shift anomaly in the Facebook network at June 1 (before anomaly) and June 2 (during anomaly), 2007 respectively.  }
\label{fig:facebookmasslocal}
\end{figure*}

\begin{figure*}[h!]
\begin{center}
\subfigure[]{\includegraphics[width=.39\columnwidth,natwidth=610,natheight=642,trim=0 300 0 0,clip=true]{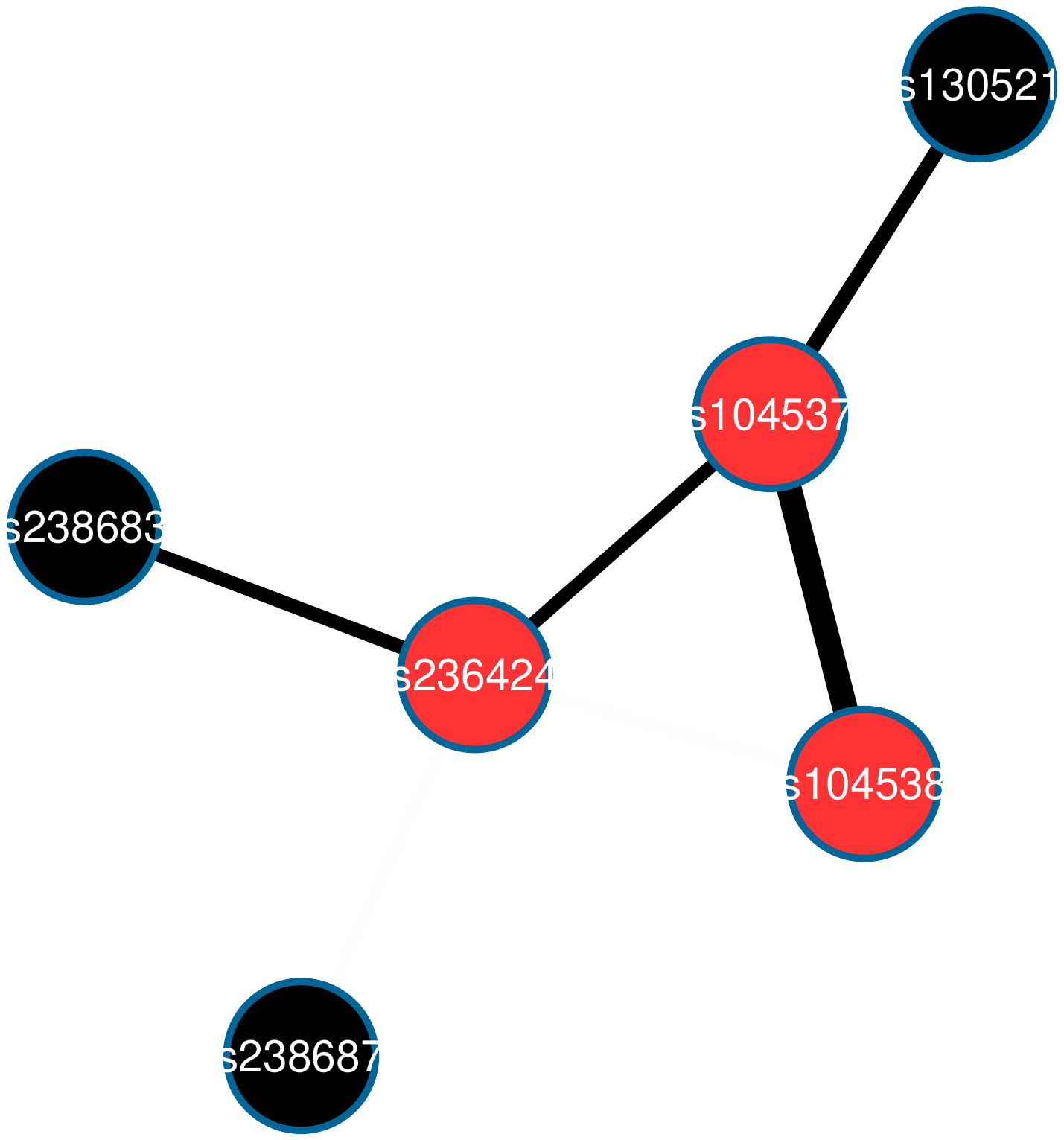}}
\subfigure[]{\includegraphics[width=.39\columnwidth,natwidth=610,natheight=642,trim=0 300 0 0,clip=true]{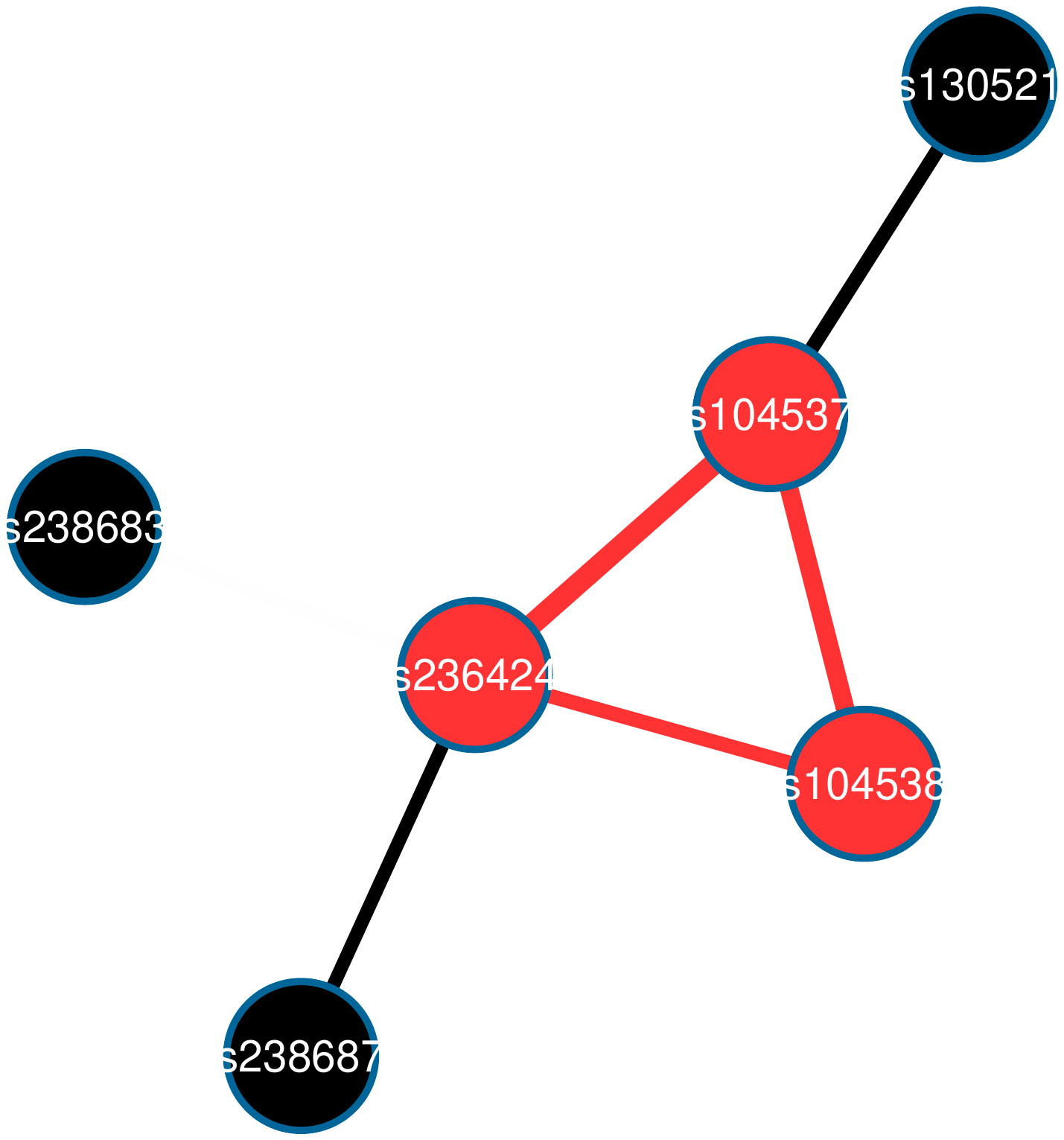}}
\end{center}

\caption{Subgraph responsible for most of the triangle probability anomaly in the Facebook network at July 20 (before anomaly) and July 21 (during anomaly), 2007 respectively.  }
\label{fig:facebooktrilocal}
\end{figure*}

\begin{figure*}[h!]
\begin{center}
\subfigure[]{\includegraphics[width=.39\columnwidth,natwidth=610,natheight=642,trim=0 300 0 0,clip=true]{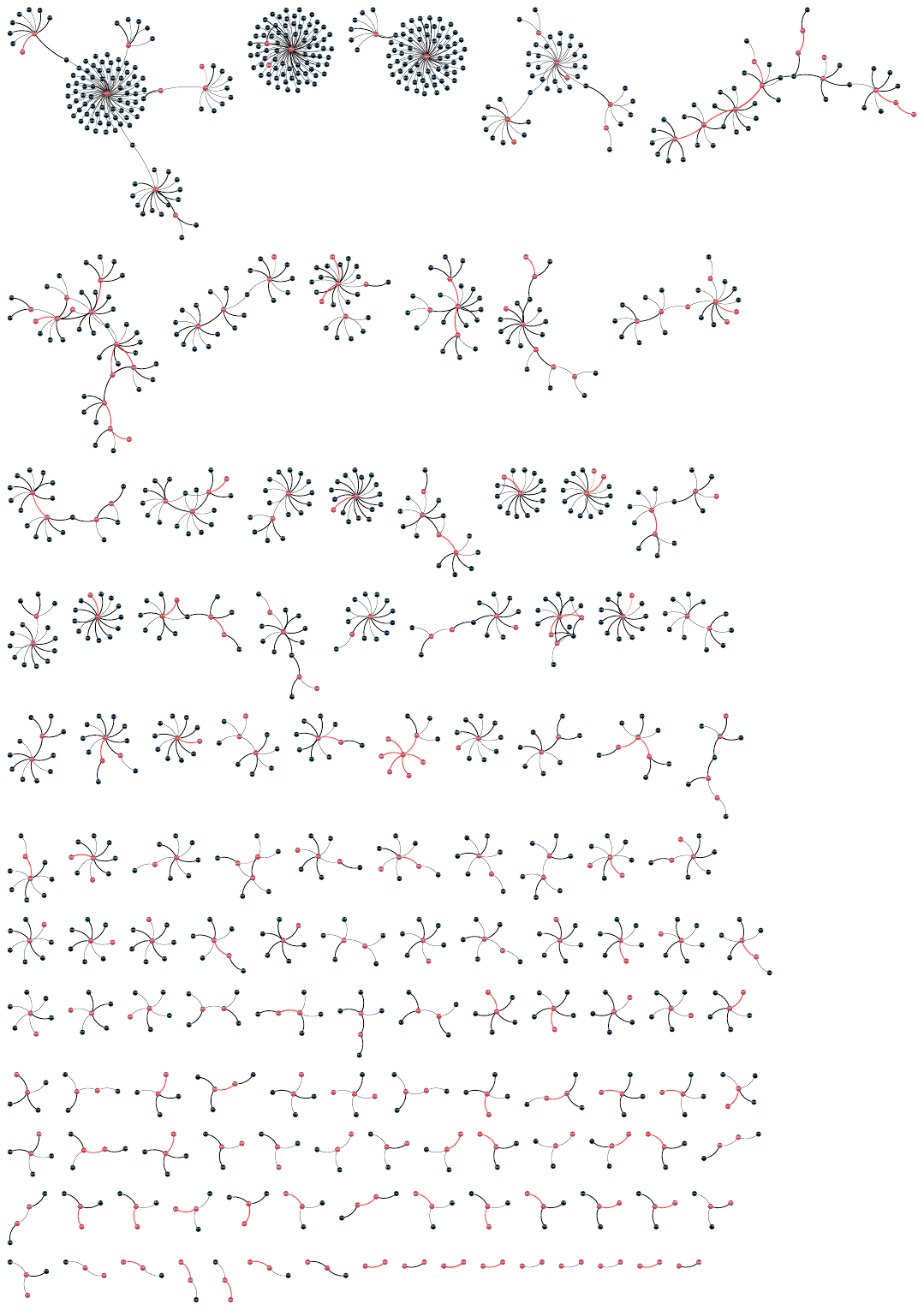}}
\subfigure[]{\includegraphics[width=.39\columnwidth,natwidth=610,natheight=642,trim=0 300 0 0,clip=true]{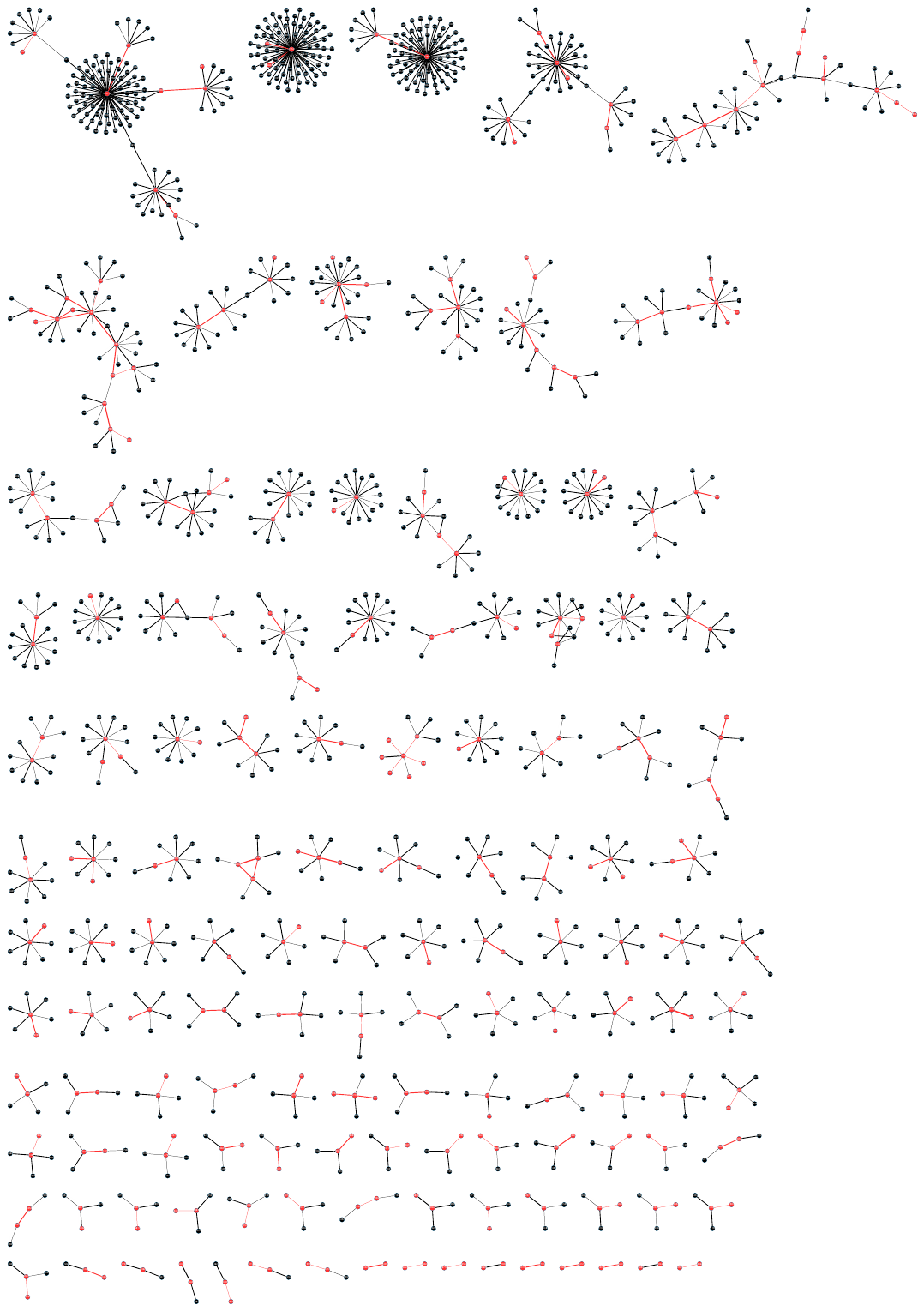}}
\end{center}

\caption{Subgraph responsible for most of the graph edit distance anomaly in the Facebook network at October 15 (before anomaly) and October 16 (during anomaly), 2007 respectively.  }
\label{fig:facebookgedlocal}
\end{figure*}

\begin{figure*}[h!]
\begin{center}
\subfigure[]{\includegraphics[width=.39\columnwidth,natwidth=610,natheight=642,trim=0 300 0 0,clip=true]{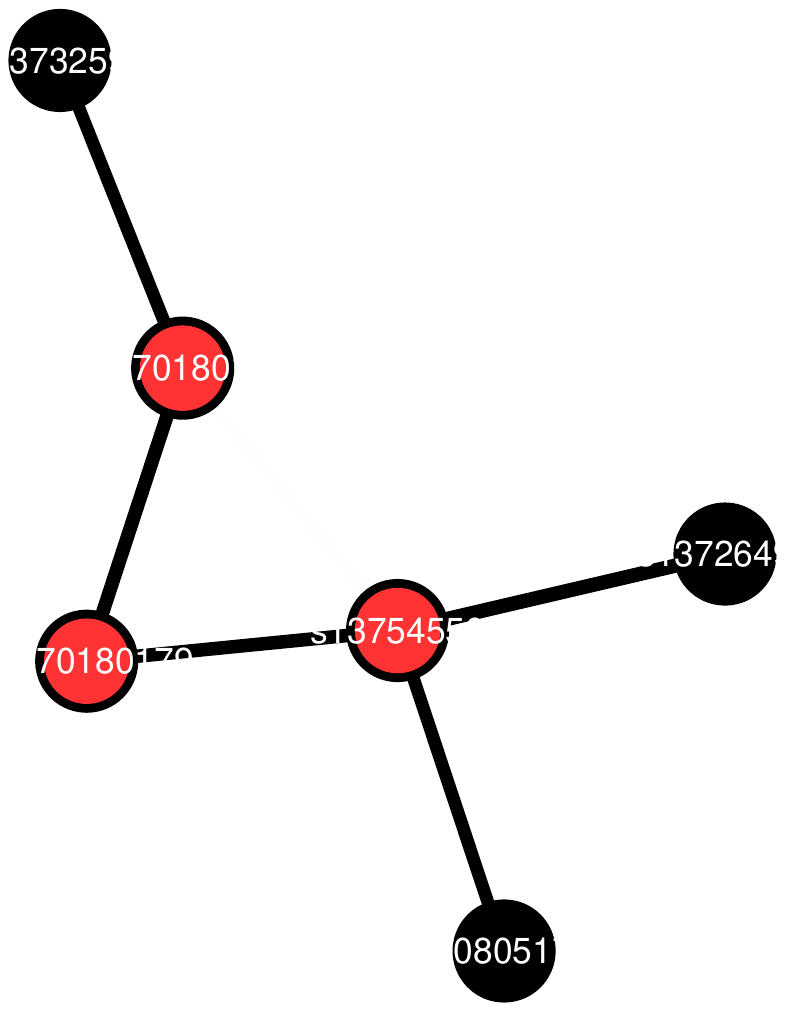}}
\subfigure[]{\includegraphics[width=.39\columnwidth,natwidth=610,natheight=642,trim=0 300 0 0,clip=true]{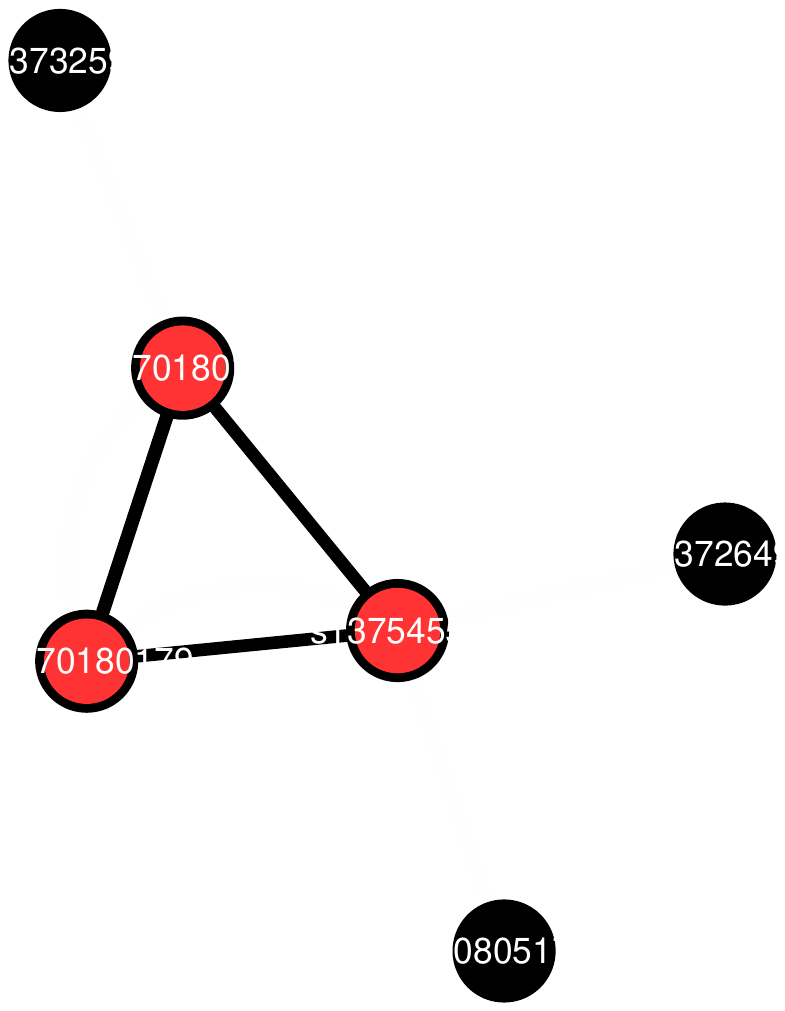}}
\end{center}

\caption{Subgraph responsible for most of the Barrat clustering anomaly in the Facebook network at May 17 (before anomaly) and May 18 (during anomaly), 2007 respectively.  }
\label{fig:facebookbarratlocal}
\end{figure*}

\clearpage

Figures \ref{fig:facebookmasslocal} - \ref{fig:facebookbarratlocal} show the local subgraphs found in the Facebook dataset.  \ref{fig:facebookmasslocal} shows the event we named ``race to 2k posts;'' at this time a pair of individuals noticed they were closing in on two thousand posts on their walls and decided to reach that goal in one night.  The result is a massively higher amount of communication than was typical between the two in prior time steps.  \ref{fig:facebooktrilocal} shows the communications occurring during the 2007 Open Championship golf tournament.  The three individuals with the most communication were arguing about the odds that Tiger Woods would win the tournament.

Graph edit distance, by contrast, identifies no coherent local structure in \ref{fig:facebookgedlocal}.  It is likely that this event signifies a global increase in communication rather than a change in the distribution of messages.  As the additional edges were distributed throughout the network, when looking for subgraphs that generated the most anomaly score the majority of the network has similar scores so a random chunk of the network is found.  \ref{fig:facebookbarratlocal} is the structure found by Barrat clustering; as before it finds a set of triangular communication with relatively low weights, around 2 -- 4, while the anomaly found by Triangle Probability has about 12 messages per edge. 

\section{Conclusions}

In this paper we have demonstrated that dependence on network edge count hinders the ability of statistics to detect certain changes in dynamic networks.  To remedy this we have introduced the concept of Size Consistency and shown that statistics with this property are less affected by edge count variation.


We proposed three Size Consistent network statistics, Mass Shift, Degree Shift, and Triangle Probability to replace the Graph Edit Distance, Degree Distribution and Clustering Coefficient statistics.  These statistics are provably Size Consistent and we demonstrated using synthetic trials that anomaly detectors using our statistics have superior performance on variable sized networks.  



The framework for developing Size Consistent network statistics can be applied to new statistics in the future.  We hope that researchers who propose network statistics in the future will make sure to analyze the effects that changing network size have on their proposed statistics and ensure that those statistics meet the Size Consistency requirements.

\bibliographystyle{abbrv}
\bibliography{thesis-j}



\end{document}